\pdfoutput=1

\documentclass{article}

\usepackage{iclr2027_conference,times}

\usepackage{xcolor}
\usepackage[normalem]{ulem} %
\usepackage[
  backgroundcolor=orange!15,
  bordercolor=orange!80!black,
  linecolor=orange!80!black
]{todonotes}

\usepackage{amsmath}
\usepackage{amssymb}
\usepackage{amsthm}

\usepackage{microtype}

\usepackage{algorithm}
\usepackage{algpseudocode}

\usepackage{graphicx}
\usepackage{float}
\usepackage{booktabs}
\usepackage{array}
\usepackage{threeparttable}
\usepackage{subcaption}

\usepackage{wrapfig}

\graphicspath{{figures/}{graph_scripts/scripts/}}

\usepackage{xspace}

\usepackage{xurl}
\usepackage[hypertexnames=false]{hyperref} %
\usepackage[nameinlink,noabbrev]{cleveref}
\usepackage{autonum} %
\usepackage{wrapfig}
\makeatletter
\AtBeginDocument{\autonum@generatePatchedReferenceCSL{Cref}}
\makeatother
\usepackage{enumitem}
\usepackage[skins,breakable]{tcolorbox} %
\definecolor{linknavy}{RGB}{16,72,140}
\hypersetup{
  colorlinks = true,
  citecolor  = linknavy,
  linkcolor  = linknavy,
  urlcolor   = linknavy,
}

\newtheorem{theorem}{Theorem}
\newtheorem{lemma}{Lemma}
\newtheorem{proposition}{Proposition}

\newcommand{\R}{\mathbb{R}}
\newcommand{\E}{\mathbb{E}}

\newcommand{\msv}{\mathsf{V}}                 %
\newcommand{\msx}{\mathsf{X}}                 %

\newcommand{\gauss}{\mathrm{N}}               %
\newcommand{\Id}{\operatorname{Id}}           %
\newcommand{\Unif}{\mathrm{Unif}}             %
\newcommand{\eqsp}{\;}                        %
\newcommand{\KL}[2]{\mathrm{KL}\!\left( #1 \,\middle\|\, #2 \right)}
\newcommand{\NFE}{\mathrm{NFE}}
\newcommand{\SNR}{\mathrm{SNR}}
\newcommand{\sg}[1]{\mathrm{sg}(#1)}          %

\newcommand{\Emat}{\mathbf{E}}                       %
\newcommand{\post}[1][]{p_{\mathbf{x} \mid t}^{#1}}       %
\newcommand{\genkernel}{\mathsf{k}}

\DeclareRobustCommand{\SDMD}{Simplex-DMD\xspace}
\DeclareRobustCommand{\RDMD}{Reinforce-DMD\xspace}

\newcommand{\eqd}{\overset{d}{=}}

\newcommand{\entval}[1]{\,{\scriptsize\textcolor{gray}{(#1)}}}

\definecolor{legA}{RGB}{0,114,178}
\definecolor{legB}{RGB}{213,94,0}
\definecolor{legC}{RGB}{0,158,115}
\definecolor{legD}{RGB}{230,159,0}
\definecolor{legE}{RGB}{204,121,167}
\definecolor{legF}{RGB}{86,180,233}
\definecolor{legG}{RGB}{117,112,179}
\definecolor{legH}{RGB}{90,90,90}

\newtcolorbox{samplebox}[3]{enhanced, breakable,
  colback=linknavy!3, colframe=linknavy, colbacktitle=linknavy, coltitle=white,
  boxrule=0.6pt, arc=2pt, left=6pt, right=6pt, top=4pt, bottom=4pt,
  toptitle=2pt, bottomtitle=2pt, before skip=14pt, after skip=14pt,
  segmentation style={draw=linknavy!35, dashed},
  fonttitle=\small\bfseries, fontupper=\small,
  before upper={\setlength{\parskip}{3pt}},
  title={#1\hfill\mdseries #2\quad\textperiodcentered\quad #3}}
\newcommand{\samplehead}[1]{{\footnotesize\bfseries\textcolor{linknavy}{#1}}\par}
\newcommand{\eostoken}{\textcolor{linknavy}{\texttt{\footnotesize\textless eos\textgreater}}}

\title{Distribution Matching Distillation for \\ Continuous Diffusion Language Models}

\author{%
  \begin{tabular}[t]{@{}p{0.5\textwidth}@{}l@{}}
    Paul Le Van Kiem\thanks{Equal contribution.}
      & Dario Shariatian\footnotemark[1] \\
    \mdseries Inria, PSL Research University
      & \mdseries Cohere \\
    \mdseries\texttt{paul.le-van-kiem@inria.fr}
      & \mdseries\texttt{dario.shariatian@cohere.com} \\[3ex]
    Umut Simsekli
      & Alain Durmus \\
    \mdseries Inria, PSL Research University
      & \mdseries CMAP, Ecole Polytechnique \\
    \mdseries\texttt{umut.simsekli@inria.fr}
      & \mdseries\texttt{alain.durmus@polytechnique.edu}
  \end{tabular}%
}

\iclrfinalcopy

\begin{document}

\maketitle
\lhead{}

\addtocontents{toc}{\protect\setcounter{tocdepth}{-1}}

\begin{abstract}

Continuous diffusion language models generate all tokens in parallel, yet high-quality generation can still require hundreds of network evaluations (NFEs). We study how distributional distillation can reduce this cost by exploiting the student's probabilistic token outputs. Our unified formulation connects the student's output parameterization to the resulting gradient estimators and yields two methods with the same student architecture and reverse-KL matching objective: Simplex-DMD uses continuous token relaxations and pathwise gradients, while Reinforce-DMD uses categorical sampling and REINFORCE with a learned density ratio. We develop both methods for multi-step generation and investigate the training and sampling choices associated with each parameterization. On OpenWebText, for sequences of 1,024 tokens, Simplex-DMD achieves a generative perplexity of 45.6 at a unigram entropy of 5.44 nats in just 4 NFEs, a 49\% reduction relative to the strongest evaluated diffusion baseline at matched entropy and sampling budget. Reinforce-DMD improves the frontier at larger budgets, reaching a generative perplexity of 14.9 at an entropy of 5.00 nats with 256 NFEs, a 20\% reduction under the same comparison protocol. 

\end{abstract}

\section{Introduction}
\label{sec:introduction}

Diffusion language models generate text either through diffusion directly in discrete token space \citep{austin2021structured,lou2024discrete,sahoo2024simple}
or through Gaussian diffusion in a continuous embedding space \citep{dieleman2022continuous}.
Recent advances in continuous diffusion language models have shown that the latter approach is competitive
with its discrete counterparts \citep{chen2026langflow,yang2026replaid,hu2026elf}.
Although both approaches update all sequence positions in parallel,
generating high-quality text can still require hundreds of sequential network evaluations.
This sampling cost motivates diffusion distillation, which aims to reduce the number of evaluations
while preserving generation quality, using a pretrained teacher to supervise a student to generate samples in fewer steps.

Two broad approaches differ in the teacher supervision they provide. Trajectory-based methods supervise student predictions or transitions along the teacher's dynamics. For continuous data, these include consistency and flow map methods \citep{song2023consistency,boffi2025flowmap}, and for discrete data they have been developed for both discrete and continuous diffusion models \citep{deschenaux2025sdtt,hayakawa2025di4c, roos2026categorical,potaptchik2026dfm,lee2026fmlm}.

Distributional methods instead use the teacher to guide the student toward the data distribution, without prescribing its generation trajectories. Developed for continuous data in both one-step and multi-step regimes \citep{luo2023diffinstruct,yin2024onestep,salimans2024multistep}, these approaches have recently been explored for discrete data, like language, with discrete \citep{hoogeboom2026dmmd,li2026idlm,zhu2025dimo} and continuous diffusion models \citep{chen2026dlmonediffusionlanguagemodels}.

We study distributional distillation for continuous diffusion models on discrete data (CDMd). Their denoising networks predict a probability distribution over clean tokens at each position, given noisy embeddings and a noise level. Retaining these outputs for the student leaves a natural choice: use the probability vectors directly as continuous token representations, or sample discrete tokens from them. 
In both cases, we embed and noise the student outputs and match their distribution to the corresponding noised data distribution under a reverse-KL objective. These two parameterizations yield different training procedures. Simplex Distribution Matching Distillation (\SDMD) uses pathwise gradients through continuous token relaxations, with an auxiliary denoiser estimating the score of the noised student distribution, following Diff-Instruct and DMD \citep{luo2023diffinstruct,yin2024onestep}. REINFORCE Distribution Matching Distillation (\RDMD) uses categorical samples and a REINFORCE estimator \citep{williams1992simple}, with a discriminator estimating the density ratio between noised student and data distributions. 
We extend both methods to multi-step generation and study their training behavior and quality-diversity tradeoffs across sampling budgets.

\begin{figure}[t]
    \captionsetup{font=footnotesize}
    \centering
    \begin{subfigure}[t]{0.495\linewidth}
        \centering
        \includegraphics[width=\linewidth]{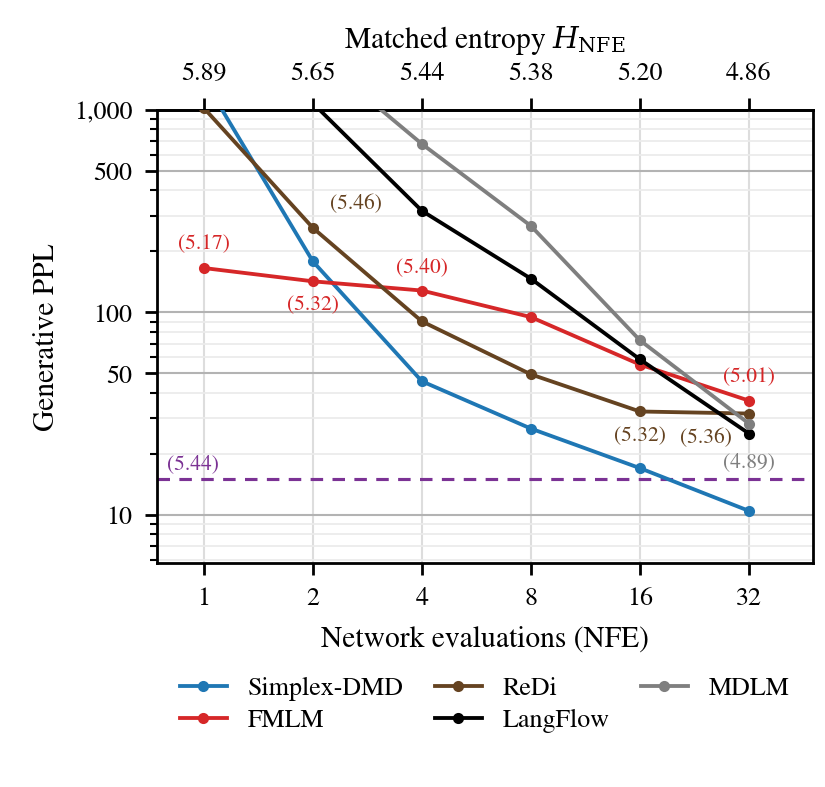}
        \subcaption{Few-step budgets, matched to \SDMD.}
        \label{fig:intro-nfe-fewstep}
    \end{subfigure}%
    \hfill
    \begin{subfigure}[t]{0.495\linewidth}
        \centering
        \includegraphics[width=\linewidth]{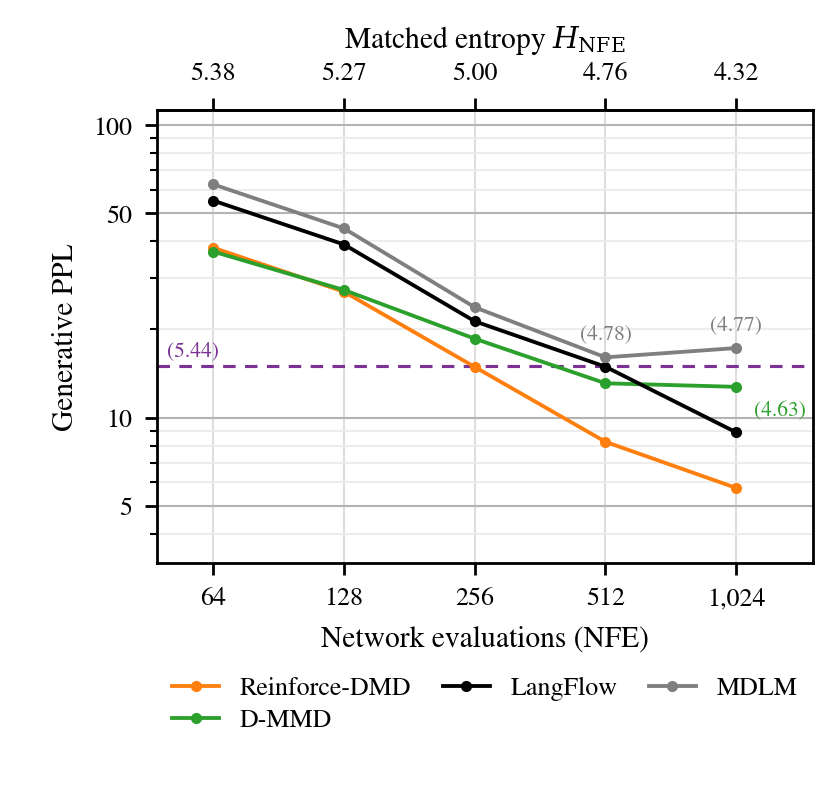}
        \subcaption{Multi-step budgets, matched to \RDMD.}
        \label{fig:intro-nfe-multistep}
    \end{subfigure}
    \caption{Gen PPL on OpenWebText against number of function evaluations (NFE), at the
    matched unigram entropy $H_{\NFE}$ (top axis), defined as in \Cref{tab:exp-proposal-budgets}.
    Each method's log Gen PPL is linearly interpolated at $H_{\NFE}$ between the two
    sampling temperatures whose entropies bracket it.
    LangFlow and MDLM are undistilled models; FMLM, ReDi and D-MMD are distillation methods.
    A method that does not reach $H_{\NFE}$ is shown at the closest measured entropy, with its value in parentheses. The dashed purple line marks the Gen PPL of the data, with its
    entropy in parentheses.}
    \label{fig:intro-nfe}
\end{figure}

Our contributions are as follows:
\begin{enumerate}[wide,label=$\bullet$]
\item We introduce a unified framework for distributional distillation that compares noised student and data distributions
through a generic discrepancy. This formulation recovers objectives used in prior work as particular choices of discrepancy
and extends to multi-step generation.

\item We specialize this framework to reverse-KL matching and derive two distillation methods: \SDMD and \RDMD. Together, they provide continuous-relaxation and categorical-sampling approaches to training the same student architecture for multi-step generation.

\item We distill the same LangFlow teacher \citep{chen2026langflow} (170M parameters) with both methods and evaluate generation of 1,024-token sequences on OpenWebText using generative perplexity--entropy frontiers \citep{pynadath2026generativefrontiersevaluationmatters}. The methods improve on the evaluated baselines at complementary sampling budgets: \SDMD at 2--16 network evaluations and \RDMD at 256 evaluations, matching the best baseline at 128 (\Cref{fig:intro-nfe}). At four evaluations and unigram entropy 5.44, \SDMD reduces Gen PPL from 90.2 for ReDi to 45.6; at 256 evaluations and entropy 5.00, \RDMD reduces it from 18.6 for D-MMD to 14.9.

\end{enumerate}

\paragraph{Notation.}
Let $K$ denote the vocabulary size, let $\msv=\{e^1,\dots,e^K\}$ be the canonical basis of $\R^K$ and
$\msx=\msv^L$ the space of sequences of length $L$.
Capital letters denote random variables and lower-case letters their
realizations. We represent a discrete sample by a matrix
$\mathbf{x}\in\R^{L\times K}$ with rows $\mathbf{x}^\ell\in\msv$
and entries $\mathbf{x}^{\ell k}$.
We write $\Delta_K=\{u\in\R^K:u^k\geq0,\ \sum_{k=1}^K u^k=1\}$
for the probability simplex. For $\mathsf{S}\subseteq\R^n$,
$\mathsf{S}^L$ denotes the matrices with $L$ rows in $\mathsf{S}$.
The embedding matrix $\Emat\in\R^{K\times d}$ maps $\mathbf{x}$ to
$\mathbf{x}\Emat$. We denote the stop-gradient operator by $\sg{\cdot}$.

\section{Preliminaries: continuous diffusion for discrete data}
\label{sec:bg-continuous}

Let $p_{\mathrm{data}}$ be a distribution on $\msx$.
Continuous diffusion models for discrete data (CDMd) map each sample
$\mathbf{x}$ to an embedding $\mathbf{x}\Emat \in \R^{L\times d}$,
where $\Emat$ is fixed or learned jointly with the model
\citep{strudel2022self,li2022diffusion}.
They learn to reverse a Gaussian corruption process in this embedding
space, generating samples by progressively denoising Gaussian noise
before applying a categorical decoder to produce a token sequence.
We describe the forward process and the resulting generative model
within the Variational Diffusion Model (VDM) framework
\citep{kingma2021vdm}.

Given $\mathbf{X}\sim p_{\mathrm{data}}$, the forward process $(Z_t)_{t\in[0,1]}$ progressively corrupts the
embedding $\mathbf{X}\Emat$ with Gaussian noise according to a decreasing schedule $(\alpha_t)_{t\in[0,1]}$,
with $\alpha_1=0$, and $\sigma_t^2=1-\alpha_t^2$.
It is a Markov process in $\R^{L\times d}$ with conditional laws
$q_{t\mid\mathbf{x}}(\cdot\mid\mathbf{x})=\gauss(\alpha_t\mathbf{x}\Emat,\sigma_t^2\Id)$.
For $0\le s<t\le1$, its transition kernels are
$q_{t\mid s}(\cdot\mid z_s)=\gauss(\alpha_{t\mid s}z_s,\sigma_{t\mid s}^2\Id)$,
where $\alpha_{t\mid s}=\alpha_t/\alpha_s$ and $\sigma_{t\mid s}^2=\sigma_t^2-\alpha_{t\mid s}^2\sigma_s^2$.
Here, $\Id$ is the identity on vectorized latent matrices, and we denote by $p_t$ the marginal density of $Z_t$.

Fix a grid $ 0 = t_0 < \dots < t_n = 1 $ and write $ s_i = t_{i-1} $.
The reverse generative model is given by
\begin{equation}
    \label{eq:bg-generative}
    p^\theta(\mathbf{x}) = \int p_1(z_1) \, p^\theta_{\mathbf{x} \mid 0}(\mathbf{x} \mid z_0) \prod_{i=1}^n p^\theta_{s_i \mid t_i}(z_{s_i} \mid z_{t_i})  \, \mathrm{d} z_{t_0 : t_n} \, .
\end{equation}
The transitions in this model are given using a denoising network $ \mathbf{x}^\theta(z_t, t) \in \Delta_K^L $ by
$$
    p^\theta_{s \mid t}(z_s \mid z_t) = q_{s \mid t, \mathbf{x}} \big( z_s \mid z_t, \, \mathbf{x}^\theta(z_t, t) \big) \, ,
$$
where $q_{s\mid t,\mathbf{x}}$ is the conditional density of
$Z_s$ given $Z_t=z_t$ and $\mathbf{X}=\mathbf{x}$, that we extend to simplex-valued inputs.
By Bayes' rule,
\begin{equation}
    \label{eq:gaussian_bridge}
    q_{s \mid t, \mathbf{x}}(z_s \mid z_t, \mathbf{x}) = \frac{q_{t \mid s}(z_t \mid z_s) \, q_{s \mid \mathbf{x}}(z_s \mid \mathbf{x})}{q_{t \mid \mathbf{x}}(z_t \mid \mathbf{x})} = \gauss\!\left(
z_s;\,
\frac{\alpha_{t\mid s}\sigma_s^2}{\sigma_t^2}z_t
+
\frac{\alpha_s\sigma_{t\mid s}^2}{\sigma_t^2}\mathbf{x}\Emat,
\,
\frac{\sigma_s^2\sigma_{t\mid s}^2}{\sigma_t^2}\Id
\right) \, .
\end{equation}
The same network sets the conditional likelihood as
$
    p^\theta_{\mathbf{x} \mid 0}(\mathbf{x} \mid z_0) = \prod_{\ell = 1}^L \langle \mathbf{x}^{\theta,\ell}(z_0, 0), \, \mathbf{x}^\ell \rangle \, .
$
The generative model is fitted by minimizing the negative evidence
lower bound (NELBO); as the time-grid mesh tends to zero, it takes the form
\citep{kingma2021vdm,gulrajani2023likelihood}:
\begin{equation}
    \begin{aligned}
    \mathsf{L}_{\mathrm{c}}(\mathbf{x};\theta)
    ={}&\E_{Z_0\sim q_{0\mid\mathbf{x}}(\cdot\mid\mathbf{x})}
       \left[-\sum_{\ell=1}^L
       \log\langle\mathbf{x}^{\theta,\ell}(Z_0, 0),\mathbf{x}^\ell\rangle
       \right]\\
       &+\frac12\int_0^1[-\SNR'(t)]\,
       \E_{Z_t\sim q_{t\mid\mathbf{x}}(\cdot\mid\mathbf{x})}
       \left[\left\|
           \big(\mathbf{x}^\theta(Z_t, t)-\mathbf{x}\big)\Emat
       \right\|_{\mathrm{F}}^2\right]\,\mathrm{d}t\eqsp,
    \end{aligned}
    \label{eq:nelbo-continuous}
\end{equation}
where $\SNR(t)=\alpha_t^2/\sigma_t^2$. 
The diffusion loss is such that, when $t > 0$, the embedded denoiser output $\mathbf{x}^{\theta}\mathbf{E}$ 
targets the posterior mean in embedding space $(t, z_t) \mapsto m_t(z_t)\Emat$, 
where $ m_t(z_t) = \E[\mathbf{X} | Z_t = z_t] \in \Delta_K^L $ 
is the conditional mean of $\mathbf{X}$ given $Z_t$.
For $t = 0$, the reconstruction term
ensures that the denoiser network's target is $m_0$.
Alternatively, the denoiser network can be trained directly by
cross-entropy \citep{dieleman2022continuous,chen2026langflow}:
\begin{equation}
    \mathsf{L}_{\mathrm{CE}}(\mathbf{x};\theta)
    =\E_{\substack{t\sim\Unif(0,1)\\
                  Z_t\sim q_{t\mid\mathbf{x}}(\cdot\mid\mathbf{x})}}
    \left[-\sum_{\ell=1}^L\log
    \langle\mathbf{x}^{\theta,\ell}(Z_t, t),
    \mathbf{x}^\ell\rangle\right]\eqsp,
    \label{eq:bg-cross-entropy}
\end{equation}
for which the target at optimum is $m_t$.
Although not equivalent to the NELBO above, the cross-entropy loss can yield a
likelihood bound with a suitable weighting \citep{davis2026scaling}.
Embedding normalization, self-conditioning, and learned noise schedules are further design
choices detailed in \Cref{app:continuous}.

\section{Distribution matching distillation for continuous diffusion on discrete data}
\label{sec:method}

Distributional distillation trains a student by matching the distribution
of its generated samples to the data distribution. In our setting,
the student takes as input a latent $Z_T$ at noise level $T \in (0, 1]$ and produces a
distribution over clean token representations. These outputs are then
renoised with the Gaussian forward process of \Cref{sec:bg-continuous}, and
their distributions are matched with the corresponding data marginals
$p_t$ at lower noise levels $t<T$. For the approaches considered, the matching objective involves 
the fixed pretrained CDMd $p^\theta$, used as the distillation teacher.

More precisely, given $Z_T=z_T$, the student network outputs
$\mathbf{x}^\eta(z_T,T)\in\Delta_K^L$, which parameterizes a conditional
kernel $\genkernel_T^\eta(\cdot\mid z_T)$. We consider two parameterizations
of this kernel. The first uses the probability vectors themselves as
simplex-valued token representations, while the second samples one
categorical token at each sequence position:
\begin{align}
&\genkernel_T^\eta(\mathrm{d}\mathbf{x}\mid z_T)
= \delta_{\mathbf{x}^\eta(z_T,T)}(\mathrm{d}\mathbf{x}),
\quad \mathbf{x}\in\Delta_K^L,
&&\quad \text{(continuous relaxation)}, \label{eq:student_relaxation} \\
&\genkernel_T^\eta(\mathbf{x}\mid z_T)
= \prod_{\ell=1}^L
\left\langle
\mathbf{x}^\ell,\mathbf{x}^{\eta,\ell}(z_T,T)
\right\rangle,
\quad \mathbf{x}\in\msx,
&&\quad \text{(discrete sampling)}.
\label{eq:student_sampling}
\end{align}
These two parameterizations share the same distribution-matching objective
but lead to different gradient estimators.
We formulate this objective for one-step generation in a framework unifying existing methods by considering a general class of matching objectives, 
extend it to multi-step generation and derive
\SDMD for the continuous relaxation and \RDMD for discrete sampling.

\subsection{A unified view of one-step distribution matching}
\label{sec:method-unified}

We first consider one-step generation and isolate a common formulation of
distributional distillation. Existing distribution-matching methods can be
viewed as comparing the noised student and data distributions across noise
levels, while differing in the discrepancy used to compare these marginals.
We formalize this view through a generic discrepancy $\mathsf{D}$, before
specializing it to the reverse KL used by our methods.

In the one-step setting, $T=1$ and the student starts from
$Z_1\sim p_1=\gauss(0,\Id)$. Its clean output
$\mathbf{X}^\eta$ is sampled according to
$\genkernel_1^\eta(\cdot\mid Z_1)$, inducing the marginal distribution
$
p_{\mathbf{x}}^\eta
=
\int
\genkernel_1^\eta(\cdot\mid z_1)\,
p_1(z_1)\,\mathrm{d}z_1
.
$

Applying the Gaussian forward process of \Cref{sec:bg-continuous} to $\mathbf{X}^\eta \sim p_{\mathbf{x}}^\eta$ gives a latent process $(Z_t^\eta)_{t\in[0,1]}$. We denote its marginals by $p_t^\eta$, with the corresponding data
marginal being $p_t$. Comparing
$p_{\mathbf{x}}^\eta$ with $p_{\mathrm{data}}$ is difficult; Gaussian noising smooths both distributions
in the same embedding space, where the diffusion teacher $p^{\theta}$
provides quantities that can be leveraged by the loss objectives.
Matching these marginals across noise levels defines the generic one-step distribution-matching objective:
\begin{equation}
    \mathsf{L}_{\mathrm{distill}}(\eta)
    = \int_0^1 
    \E_{Z_t^\eta\sim p_t^\eta}\!\left[
        \mathsf{D}(p_t^\eta,p_t,Z_t^\eta)
    \right]\,\mathrm{d}t \eqsp,
    \label{eq:wip-distill-marginals}
\end{equation}
where $\mathsf{D}$ denotes a local discrepancy between
student and data marginals evaluated at $z_t$.
Different choices of $\mathsf{D}$ recover distribution-matching objectives
used in prior work, as summarized in
\Cref{tab:wip-local-discrepancies}; full objectives and derivations are given in \Cref{app:distillation}.
The gradient of \eqref{eq:wip-distill-marginals} has two contributions: $ (A) $ from the dependence of the sampling law on $ \eta $, and $ (B) $ from the dependence of $ \mathsf{D} $ on $ p_t^\eta $:
\begin{equation}
\begin{aligned}
    \nabla_\eta\mathsf{L}_{\mathrm{distill}}(\eta)
    &= \int_0^1 \bigg[\underbrace{\!\int
        \mathsf{D}(p_t^\eta,p_t,z_t)\,\nabla_\eta p_t^\eta(z_t)
        \,\mathrm{d}z_t}_{(A)} +\underbrace{\!\int
        p_t^\eta(z_t)\,\nabla_\eta\mathsf{D}(p_t^\eta,p_t,z_t)
        \,\mathrm{d}z_t}_{(B)}\bigg]\, \mathrm{d}t.
\end{aligned}
    \label{eq:wip-distill-grad}
\end{equation}

\vspace{-15pt}
\begin{table}[htbp]
    \centering
    \small
    \setlength{\tabcolsep}{4pt}
    \renewcommand{\arraystretch}{0.8} %
    \captionsetup{font=footnotesize}
    \begin{tabular}{@{}>{\raggedright\arraybackslash}p{0.25\linewidth}>{\raggedright\arraybackslash}p{0.37\linewidth}>{\raggedright\arraybackslash}p{\dimexpr0.38\linewidth-4\tabcolsep\relax}@{}}
        \toprule
        Criterion & Local discrepancy $\mathsf{D}$ & Representative methods \\
        \midrule
        \multicolumn{3}{@{}l}{\textit{Continuous diffusion}} \\
        Reverse KL
            & $\log p_t^\eta(z_t)-\log p_t(z_t)$
            & Diff-Instruct \citep{luo2023diffinstruct}, DMD \citep{yin2024onestep} \\
        Score / Velocity / Moment matching
            & $\big\lVert\nabla_{z_t}\log p_t^\eta(z_t)-\nabla_{z_t}\log p_t(z_t)\big\rVert^2$
            & SiD \citep{zhou2024score};
              FGM \citep{huang2024flow};
              Moment matching \citep{salimans2024multistep} \\
        \midrule
        \multicolumn{3}{@{}l}{\textit{Discrete diffusion}} \\
        \par Bregman divergence
            & $\displaystyle\sum_{d_{\mathrm{H}}(z',z_t)=1} d_F\!\left(\frac{p_t^\eta(z')}{p_t^\eta(z_t)},\,\frac{p_t(z')}{p_t(z_t)}\right)$
            & IDLM \citep{li2026idlm}; D-MMD \citep{hoogeboom2026dmmd} \\
        \par Posterior $f$-divergence
            & $\displaystyle\sum_\ell D_f\!\left(\post[\eta,\ell](\cdot\mid z_t),\,\post[\ell](\cdot\mid z_t)\right)$
            & DiMO \citep{zhu2025dimo} \\
        \bottomrule
    \end{tabular}
        \caption{Local discrepancies for distributional distillation.
        Score, velocity and moment discrepancies agree up to
        time-dependent factors. Definitions and derivations are given in \Cref{app:distillation}.}
    \label{tab:wip-local-discrepancies}
    \end{table}

Following Diff-Instruct and DMD \citep{luo2023diffinstruct, yin2024onestep}, we focus in this work on the reverse KL,
choosing $ \mathsf{D}(p_t^\eta,p_t,z_t) = \log\big(p_t^\eta(z_t)/p_t(z_t)\big) $. The loss thus becomes:
\begin{equation}
    \mathsf{L}_{\mathrm{DMD}}(\eta)
    = \int_0^1 \KL{p_t^\eta}{p_t}\,\mathrm{d}t.
    \label{eq:ikl}
\end{equation}
In this case, (B) integrates to zero (\Cref{app:dmd-grad}), leaving only the dependence of the sampled student marginal on $\eta$ in (A).
Estimating this term depends on the student parameterization:
the continuous relaxation admits a pathwise gradient,
whereas discrete sampling requires a score function
estimator. We derive the two estimators in \Cref{sec:method-simplex,sec:method-categorical}.

The previous objective \eqref{eq:wip-distill-marginals} matches the student and data distributions
only after embedding and Gaussian noising. 
It is therefore not immediate that exact matching of the noised
marginals recovers the original discrete data distribution.
The following result shows that this is nevertheless the case, under a simple
geometric condition on the embedding matrix that we satisfy.

\begin{theorem}
\label{thm:emergent}
Assume that the rows of $\Emat$ are distinct and have a common Euclidean norm.
Then
\begin{equation}
    \mathsf{L}_{\mathrm{DMD}}(\eta)=0 \qquad\Longleftrightarrow\qquad p_{\mathbf{x}}^\eta=p_{\mathrm{data}}.
\end{equation}
\end{theorem}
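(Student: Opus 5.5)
The reverse implication is immediate. If $p_{\mathbf{x}}^\eta=p_{\mathrm{data}}$, then for every $t$ the noised student marginal $p_t^\eta$ is obtained from $p_{\mathbf{x}}^\eta$ by the same Gaussian kernel $q_{t\mid\mathbf{x}}$ that produces $p_t$ from $p_{\mathrm{data}}$. Hence $p_t^\eta=p_t$, every KL term in \eqref{eq:ikl} vanishes, and $\mathsf{L}_{\mathrm{DMD}}(\eta)=0$.

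For the direct implication, the plan has three steps.

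\textbf{Step 1: reduce to one noise level.} If the integral of the nonnegative function $t\mapsto\KL{p_t^\eta}{p_t}$ vanishes, then $p_t^\eta=p_t$ for Lebesgue-a.e.\ $t\in[0,1]$. We can therefore fix a single $t\in(0,1)$ with $\alpha_t>0$, $\sigma_t>0$ and $p_t^\eta=p_t$.

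\textbf{Step 2: deconvolve.} Write $\mu^\eta$ and $\mu$ for the laws of $\alpha_t\mathbf{X}^\eta\Emat$ and $\alpha_t\mathbf{X}\Emat$ on $\R^{L\times d}$. Then $p_t^\eta=\mu^\eta*\gauss(0,\sigma_t^2\Id)$ and $p_t=\mu*\gauss(0,\sigma_t^2\Id)$. Taking characteristic functions and dividing by the nowhere-vanishing Gaussian factor $e^{-\sigma_t^2\|\omega\|^2/2}$ gives $\hat\mu^\eta=\hat\mu$. Hence $\mu^\eta=\mu$, and since $\alpha_t\neq0$, the law of $\mathbf{X}^\eta\Emat$ equals the law of $\mathbf{X}\Emat$.

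\textbf{Step 3: lift back through the embedding.} This is the main obstacle, because in the relaxed parameterization \eqref{eq:student_relaxation} the student output lies in $\Delta_K^L$ rather than $\msx$. The map $\mathbf{x}\mapsto\mathbf{x}\Emat$ need not be injective on the whole simplex, so a priori a simplex-valued student could mimic the embedded data. The geometric hypothesis on $\Emat$ is exactly what rules this out.

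\begin{itemize}
\item[(a)] \emph{Rows land on embedding rows.} Since $\mathbf{X}\Emat$ has rows in $\{E_1,\dots,E_K\}$ (the rows of $\Emat$) almost surely, so does $\mathbf{X}^\eta\Emat$. Thus for each $\ell$ there is a.s.\ some $k$ with $u=\mathbf{X}^{\eta,\ell}\in\Delta_K$ satisfying $\sum_j u^jE_j=E_k$.

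\item[(b)] \emph{Norm argument.} Let $r$ be the common norm of the rows. The triangle inequality gives
\[
r=\Big\|\sum_j u^jE_j\Big\|\le\sum_j u^j\|E_j\|=r .
\]
Equality in the triangle inequality forces all $E_j$ with $u^j>0$ to be nonnegative multiples of one vector. Because they share the norm $r$, they are all equal to that vector, and hence to $E_k$.

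\item[(c)] \emph{Conclusion from distinctness.} Since the rows are distinct, $u^j>0$ only for $j=k$, so $u=e^k$. This step needs $r>0$, which holds because distinct vectors of equal norm cannot all be zero when $K\ge2$.
\end{itemize}

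Hence $\mathbf{X}^\eta\in\msx$ almost surely. On $\msx$ the map $\mathbf{x}\mapsto\mathbf{x}\Emat$ is injective, again because the rows of $\Emat$ are distinct. It therefore follows from equality of the laws of $\mathbf{X}^\eta\Emat$ and $\mathbf{X}\Emat$ that $p_{\mathbf{x}}^\eta=p_{\mathrm{data}}$.

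In the categorical parameterization \eqref{eq:student_sampling}, $\mathbf{X}^\eta\in\msx$ already holds, so Step 3(a)–(c) is unnecessary and only the injectivity of $\mathbf{x}\mapsto\mathbf{x}\Emat$ on $\msx$ is used. The heart of the argument is the extreme-point property in Step 3(b): distinct points on a sphere are extreme points of their convex hull.
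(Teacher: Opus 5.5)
Your proof is correct and follows essentially the paper's route: you reduce to a single noise level with $\alpha_t,\sigma_t>0$, deconvolve the Gaussian via characteristic functions to obtain equality of the embedded laws, lift back to the vertex set using the extreme-point property of distinct equal-norm rows, and conclude by injectivity of $\mathbf{x}\mapsto\mathbf{x}\Emat$ on one-hot sequences. The only difference is how you prove the extreme-point property: you use the equality case of the triangle inequality, whereas the paper takes the inner product of a putative convex combination with $E_i$ (\Cref{prop:sphere} and \Cref{lem:lifting}), which has the side benefit of showing that convex position of the rows is the only geometric property actually needed.
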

We give the proof in \Cref{app:emergent}.

\subsection{Multi-step generation}
\label{sec:method-fewstep}

The one-step construction of \Cref{sec:method-unified} learns to transport
the terminal marginal $p_1$ to the data distribution. In order to extend the method to multi-step generation, 
we adapt this construction to arbitrary starting noise levels $T\in(0,1]$: given
$Z_T\sim p_T$, the student, conditioned on $T$, produces a clean sample
$\mathbf{X}^{\eta,T}\sim\genkernel_T^\eta(\cdot\mid Z_T)$, and we denote its marginal by $p_{\mathbf{x}}^{\eta, T}$. 
Renoising this sample to a noise level $t<T$ with the forward process of \Cref{sec:bg-continuous} yields the latent states
$(Z_t^{\eta,T})$, inducing marginals $p_t^{\eta,T}$, which
we match to the data marginal $p_t$ using the same reverse-KL
objective as in the one-step case \eqref{eq:ikl}. 
Varying $T$ yields the multi-step objective:
\begin{equation}
    \mathsf{L}_{\mathrm{DMD}}^{\mathrm{ms}}(\eta)
    = \E_{(t,T)\sim\nu} \left[
        \KL{p_t^{\eta,T}}{p_t}
        \right] \, \eqsp,
    \label{eq:fewstep-proposal-loss}
\end{equation}
where the joint distribution $\nu$ over $(t, T)$ is defined as $\nu(\mathrm{d}t,\mathrm{d}T)=\nu_T(\mathrm{d}t)\,\nu_1(\mathrm{d}T)$, with $\nu_s=\Unif(0,s)$ for $s\in(0,1]$. 
Learning these transports for all starting
levels enables multi-step generation by composing them along decreasing
noise levels $1=T_n>T_{n-1}>\cdots>T_0=0$.
Starting from $\widehat Z_{T_n}\sim p_1$, at each level $T_i$ the student draws
$\widehat{\mathbf{X}}^{\eta,T_i}\sim
\genkernel_{T_i}^\eta(\cdot\mid \widehat Z_{T_i})$ and maps it to the next level $T_{i-1}$. 
We explore the following family of DDIM-inspired transitions \citep{song2021ddim}, a standard approach for exploring multi-step performance, which controls the stochasticity of the sampling procedure:
\begin{equation}
    p_{t\mid T}^\eta(z_t\mid z_T)
    =
    \int
        q^\zeta_{t\mid\mathbf{x},T}
        (z_t\mid\mathbf{x},z_T)\,
        \genkernel_T^\eta(\mathrm{d}\mathbf{x}\mid z_T)
    \eqsp,
    \qquad t<T,
    \label{eq:fewstep-transition}
\end{equation}
where, for $0\leq\zeta_{t,T}\leq\sigma_t$,
\begin{equation}
\begin{aligned}
    q^\zeta_{t\mid\mathbf{x},T}
    (z_t\mid\mathbf{x},z_T)
    =
    \gauss\!\left(
        z_t;\,
        \alpha_t\mathbf{x}\Emat
        +\sqrt{\sigma_t^2-\zeta_{t,T}^2}\,
        \frac{z_T-\alpha_T\mathbf{x}\Emat}{\sigma_T},
        \zeta_{t,T}^2\Id
    \right)
    \eqsp.
\end{aligned}
\label{eq:fewstep-transition-kernel}
\end{equation}
The parameter $\zeta_{t,T}$ interpolates between retaining the noise inferred
from the current latent and injecting fresh Gaussian noise.
At $\zeta_{t,T}=0$, the update is
deterministic conditional on $\mathbf{x}$. 
At $\zeta_{t,T}=\sigma_t$, it discards the current latent noise and recovers the
forward-renoising transition used to define $Z_t^{\eta, T}$, whereas
$\zeta_{t,T}=\sigma_t\sigma_{T\mid t}/\sigma_T$ recovers the Gaussian
bridge \eqref{eq:gaussian_bridge}. 
Finally, the generative model takes a final step from $\widehat Z_0$ by applying the categorical decoder parameterized by
$\mathbf{x}^\eta(\widehat Z_0,0)$.
In the forward renoising $\zeta_{t,T}=\sigma_t$ case, these transitions recover the correct diffusion marginals $p_t$ 
if $\widehat Z_T\sim p_T$ and the induced clean student marginal equals $p_{\mathrm{data}}$; 
in general, the diffusion marginals are also recovered if the student kernel equals the exact posterior $p_{\mathbf{x}\mid T}(\cdot\mid z_T)$, see \Cref{app:fewstep-sampling}.

\subsection{\SDMD: continuous relaxation}
\label{sec:method-simplex}

Under the continuous relaxation \eqref{eq:student_relaxation}, the student's
probability vectors are used as the clean sample:
$\mathbf{X}^{\eta,T}=\mathbf{x}^\eta(Z_T,T)$. The noising process is differentiable with respect
to the student parameters, allowing us to optimize the multi-step
distribution-matching objective with a pathwise gradient.

\paragraph{Student gradient.}
We first derive the gradient of the matching objective
\eqref{eq:fewstep-proposal-loss}. Define the reparameterized forward-noising map 
$\pi_t : (\mathbf{x},\varepsilon) \mapsto \alpha_t\mathbf{x}\Emat+\sigma_t\varepsilon$, such that
$Z_t^{\eta,T} \eqd \pi_t(\mathbf{x}^\eta(Z_T,T),\varepsilon)$ for $Z_T \sim p_T$ and $\varepsilon \sim \gauss(0, \Id)$.
This makes explicit the dependence of the noised latent on $\eta$ and how the reparameterization trick \citep{kingma2014autoencoding} applies to \eqref{eq:fewstep-proposal-loss}, similarly to Diff-Instruct and DMD
\citep{luo2023diffinstruct,yin2024onestep}:
\begin{equation}
\begin{aligned}
\nabla_\eta
\mathsf{L}_{\mathrm{DMD}}^{\mathrm{ms}}(\eta)
=
\E_{\substack{
    (t,T)\sim\nu,\,
    Z_T\sim p_T,\,
    \varepsilon\sim\gauss(0,\Id)
}}
\left[
    \Big(
        s^{\eta,T}(Z_t^{\eta,T},t)
        -s(Z_t^{\eta,T},t)
    \Big)^\top
    \frac{\partial Z_t^{\eta,T}}{\partial\eta}
\right]
\eqsp,
\end{aligned}
\label{eq:dmd-grad}
\end{equation}
where $s^{\eta,T}(z,t)=\nabla_z\log p_t^{\eta,T}(z)$ and
$s(z,t)=\nabla_z\log p_t(z)$ are respectively the student and data scores
(see \Cref{app:dmd-grad}).
Thus, the pathwise update requires evaluating the difference between the
score of the noised student distribution and that of the data distribution.

\paragraph{Auxiliary score estimation.}
Neither score in \eqref{eq:dmd-grad} is available exactly, but under Gaussian
noising, each is determined by the posterior mean of its corresponding clean
representation. 
This relation is given by Tweedie's formula \citep{efron2011tweedie}:
\begin{equation}
    s(z,t)
    =
    \frac{
        \alpha_t m_t(z)\Emat-z
    }{\sigma_t^2}
    \eqsp, \qquad
    s^{\eta,T}(z,t)
    =
    \frac{
        \alpha_t m_t^{\eta,T}(z)\Emat-z
    }{\sigma_t^2}
    \eqsp,
    \label{eq:sdmd-student-score}
\end{equation}
where $m_t^{\eta,T}(z)=\E\!\left[\mathbf{X}^{\eta,T}\mid Z_t^{\eta,T}=z\right]$
is the posterior mean of the student distribution. 
The frozen teacher denoiser
$\mathbf{x}^\theta(z,t)$ provides an estimate of the data
posterior mean, and hence an estimate $s^{\theta}(z,t)$ of $s(z,t)$.
We train an auxiliary denoiser
$\mathbf{x}^\phi(z,t,T)\in\Delta_K^L$ to target the posterior mean 
$m_t^{\eta,T}$ of the student distribution, by minimizing the following soft-target cross-entropy:
\begin{equation}
\begin{aligned}
\mathsf{L}_{\mathrm{aux}}(\phi)
=
\E_{\substack{
    (t,T)\sim\nu,\,
    Z_T\sim p_T,\,
    \varepsilon\sim\gauss(0,\Id)
}}
\left[
    \sum_{\ell=1}^L
    \mathrm{CE}\!\left(
        \mathbf{X}^{\eta,T,\ell},
        \mathbf{x}^{\phi,\ell}
        (Z_t^{\eta,T},t,T)
    \right)
\right]
\eqsp.
\end{aligned}
\label{eq:aux-loss}
\end{equation}

We alternate $n_{\mathrm{aux}}$ auxiliary updates on detached
student outputs with one student update.

\subsection{\RDMD: discrete sampling}
\label{sec:method-categorical}

The student clean sample $\mathbf{X}^{\eta,T}$ is now drawn from a discrete distribution given by the transition
$\genkernel_T^\eta(\cdot\mid Z_T)$ of \eqref{eq:student_sampling},
preventing the direct pathwise differentiation used by \SDMD.

\paragraph{Student gradient.} The gradient of the multi-step objective \eqref{eq:fewstep-proposal-loss}
admits the same decomposition as that of the one-step objective derived in \eqref{eq:wip-distill-grad}. 
In our reverse KL matching setting, contribution $(B)$ vanishes, 
so it remains to differentiate the conditional law of
$\mathbf{X}^{\eta,T}$ which gives the exact
score-function identity \citep{williams1992simple}:
\begin{equation}
\begin{aligned}
    \nabla_\eta\mathsf{L}_{\mathrm{DMD}}^{\mathrm{ms}}(\eta)
    &=\E_{\substack{T \sim \nu_1, Z_T\sim p_T\\\mathbf{X}^{\eta,T}\sim\genkernel_T^\eta(\cdot\mid Z_T)}}\!\left[
        C_T(\mathbf{X}^{\eta,T})\,
        \nabla_\eta\log\genkernel_T^\eta(\mathbf{X}^{\eta,T}\mid Z_T)
    \right]\eqsp,\\
    C_T(\mathbf{X}^{\eta, T})
    &=\E_{\substack{t\sim\nu_T\\Z_t^{\eta, T}\sim q_{t\mid\mathbf{x}}(\cdot\mid\mathbf{X}^{\eta, T})}}\!\left[
        \log\frac{p_t^{\eta,T}(Z_t^{\eta, T})}
                          {p_t(Z_t^{\eta, T})}
    \right].
\end{aligned}
    \label{eq:reinforce-grad}
\end{equation}
The cost $C_T$ is the expected log-ratio of the student and data marginals
at time $t$. Unlike the scores in the \SDMD gradient, which
diffusion denoisers can estimate, this log-ratio is not directly available and requires
another estimator.

\paragraph{Auxiliary estimation.} In order to estimate this log-ratio of interest, 
we train a discriminator $D_\psi(z_t, t, T)\in(0,1)$ to
distinguish noised data from noised student samples:
\begin{equation}
    \mathsf{L}_{\mathrm{disc}}(\psi)
    =-\int\Big[
        \E_{Z_t\sim p_t}\log D_\psi(Z_t, t, T)
        +\E_{Z_t^{\eta,T}\sim p_t^{\eta,T}}
          \log\big(1-D_\psi(Z_t^{\eta,T}, t, T)\big)
    \Big]\, \nu(\mathrm{d}t,\mathrm{d}T) \, .
    \label{eq:disc-loss}
\end{equation}
For a fixed student, the population optimum is
$D^\star(z, t, T)=p_t(z)/(p_t(z)+p_t^{\eta,T}(z))$,
whose negative logit equals
$\log(p_t^{\eta,T}/p_t)$ (\Cref{app:discriminator}).
For a pair $0<t<T\leq1$ and independent noise
$\varepsilon\sim\gauss(0,\Id)$, the pointwise cost estimate is:
\begin{equation}
    \widehat C_\psi(\mathbf{X}^{\eta,T};t,T,\varepsilon)
    =
      \log\frac{1-D_\psi(\pi_t(\mathbf{X}^{\eta,T},\varepsilon), t, T)}
                   {D_\psi(\pi_t(\mathbf{X}^{\eta,T},\varepsilon), t, T)}.
    \label{eq:reinforce-cost}
\end{equation}

\paragraph{Variance reduction and teacher supervision.}
For each $(Z_T,T)$, we draw $G\geq2$ independent triplets
$(\mathbf{X}_g^{\eta,T},t_g,\varepsilon_g)$ with
$\mathbf{X}_g^{\eta,T}\sim\genkernel_T^\eta(\cdot\mid Z_T)$,
$t_g\sim \nu_T$, and $\varepsilon_g\sim\gauss(0,\Id)$.
We compute the costs
$\widehat C_g=\widehat C_\psi(\mathbf{X}_g^{\eta,T};t_g,T,\varepsilon_g)$
and the leave-one-out baselines
$\widehat b_g=(G-1)^{-1}\sum_{g'\neq g}\widehat C_{g'}$ \citep{kool2019buy}.
Inspired by the group-based baseline in GRPO,
we center each cost by subtracting $\widehat b_g$ \citep{shao2024deepseekmath},
and include a KL anchor required for stable training in our experiments
(\Cref{sec:exp-new-design}), yielding the per-group surrogate:
\begin{equation}
\begin{aligned}
    \widehat{\mathsf{L}}_{\mathrm{R\text{-}DMD}}(\eta)
    &=\frac1{G}\sum_{g=1}^G
       \sg{\widehat C_g-\widehat b_g}
       \log\genkernel_T^\eta(\mathbf{X}_g^{\eta,T}\mid Z_T)
    +\beta\sum_{\ell=1}^L
       \KL{\mathbf{x}^{\eta,\ell}(Z_T,T)}
          {\mathbf{x}^{\theta,\ell}(Z_T,T)},
\end{aligned}
    \label{eq:teacher-anchor}
\end{equation}
where $\beta\geq0$.
We average this surrogate over sampled $(Z_T, T)$ and their groups.
The leave-one-out baseline preserves the expected cost-based update. We use $G=4$, requiring one student
evaluation and $G$ cost evaluations per group.

We alternate $n_{\mathrm{aux}}$ discriminator updates on
detached student samples with one student update using
\eqref{eq:teacher-anchor}. During student updates, the discriminator,
costs and baselines are detached.

\subsection{Related work}
\label{sec:related-work}

\paragraph{Continuous diffusion for discrete data.}
Diffusion-LM jointly learns embeddings and a diffusion model \citep{li2022diffusion}, SED uses self-conditioning in embedding diffusion \citep{strudel2022self}, and Plaid develops a likelihood-based training framework \citep{gulrajani2023likelihood}.
Alongside these developments, CDCD introduced score interpolation, parameterizing the score through an estimate of the posterior mean \citep{dieleman2022continuous}.
More recently, LangFlow \citep{chen2026langflow} and RePlaid \citep{yang2026replaid} refined noise scheduling and training, achieving language modeling performance competitive with that of discrete diffusion models.

\paragraph{Trajectory distillation.}
\emph{Consistency models} learn to map states along a probability-flow trajectory to a common endpoint, enabling generation with one or a few model evaluations \citep{song2023consistency, song2024improved}.
\emph{Flow map matching} generalizes this construction to maps between arbitrary times, with objectives for distilling pretrained models as well as training directly from data through self-distillation \citep{boffi2025consistency, boffi2025flowmap}.
\emph{Categorical flow maps}, including CFM \citep{roos2026categorical}, DFM \citep{potaptchik2026dfm}, and FMLM \citep{lee2026fmlm}, build on CDMd to extend these ideas to discrete data, enabling one- and few-step generation.
For discrete diffusion, SDTT matches predictions along teacher rollouts \citep{deschenaux2025sdtt}, while Di4C distills transition compositions into mixture models that capture correlations across dimensions \citep{hayakawa2025di4c}.
A complementary approach is taken by Rectified Flow \citep{liu2023rectified} and a discrete adaptation ReDi \citep{yoo2025redi}, which iteratively rectifies the source-target coupling in flow matching to accelerate sampling.

\paragraph{Distributional distillation.}
Distributional distillation methods differ primarily in their choice of local discrepancy and how they estimate the loss gradient.
\Cref{tab:wip-local-discrepancies} summarizes these methods, and \Cref{app:distillation} provides their derivations.

\section{Experiments}
\label{sec:experiments-new}

We evaluate whether \SDMD and \RDMD preserve generation quality and diversity
when distilling a continuous diffusion language model to reduced sampling
budgets. We compare them with diffusion and distillation baselines across
network evaluation budgets, and ablate the main training and sampling choices.

We distill the released LangFlow model \citep{chen2026langflow} on
OpenWebText, using sequences of length $1024$ over the GPT-2 \citep{radford2019language} vocabulary
($K=50{,}257$). We keep the teacher embedding matrix $\Emat$ fixed and
initialize both students and their auxiliary networks from the teacher.
Training runs for $10{,}000$ total iterations with $n_{\mathrm{aux}}=4$ auxiliary
updates per student update, i.e., $2{,}000$ student updates, requiring approximately $40$
H100 GPU hours.

We evaluate generation quality with generative perplexity (Gen PPL, lower is better), computed
by GPT-2 Large, and diversity with unigram entropy (higher is better). Since lower perplexity can
result from reduced diversity, we compare Gen PPL--entropy frontiers obtained
by varying the sampling temperature
\citep{pynadath2026generativefrontiersevaluationmatters}. Baselines include
the LangFlow teacher, the continuous-diffusion distillation method FMLM,
the discrete diffusion models and distillation methods MDLM, D-MMD, IDLM,
SDTT, and ReDi (see \Cref{sec:related-work}), and the autoregressive references
GPT-2 (124M) and OPT (125M) \citep{zhang2022optopenpretrainedtransformer}. For the autoregressive baselines, we use checkpoints from Hugging Face with parameter counts closest to teacher's. All methods use a common evaluation protocol.
\Cref{tab:training-budgets} reports model sizes and training budgets, while
\Cref{app:supp} and \Cref{tab:recipe} provide further evaluation and configuration
details.

\subsection{Generation quality across sampling budgets}
\label{sec:exp-new-frontiers}

\Cref{fig:exp-proposal-frontiers} compares generative frontiers at
representative low and high sampling budgets. At $4$ network evaluations,
\SDMD achieves lower Gen PPL than the evaluated diffusion baselines throughout the
overlapping entropy range. At $256$ evaluations, \RDMD achieves the lowest
Gen PPL among the diffusion methods at matched entropy around
$5.0$. The two methods therefore improve generation quality in
complementary sampling regimes: \SDMD at low budgets and \RDMD at larger
budgets.

\begin{figure}[!htbp]
    \captionsetup{font=footnotesize}
    \centering
    \begin{subfigure}[t]{0.45\linewidth}
        \centering
        \includegraphics[width=\linewidth]{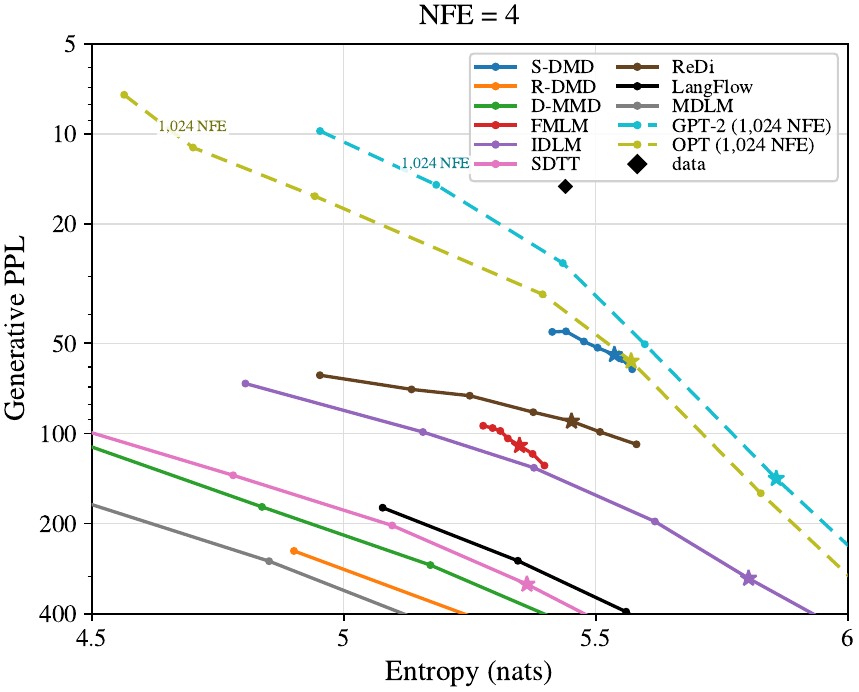}
        \subcaption{4 network evaluations.}
        \label{fig:exp-proposal-frontier-low}
    \end{subfigure}%
    \hspace{0.03\linewidth}%
    \begin{subfigure}[t]{0.45\linewidth}
        \centering
        \includegraphics[width=\linewidth]{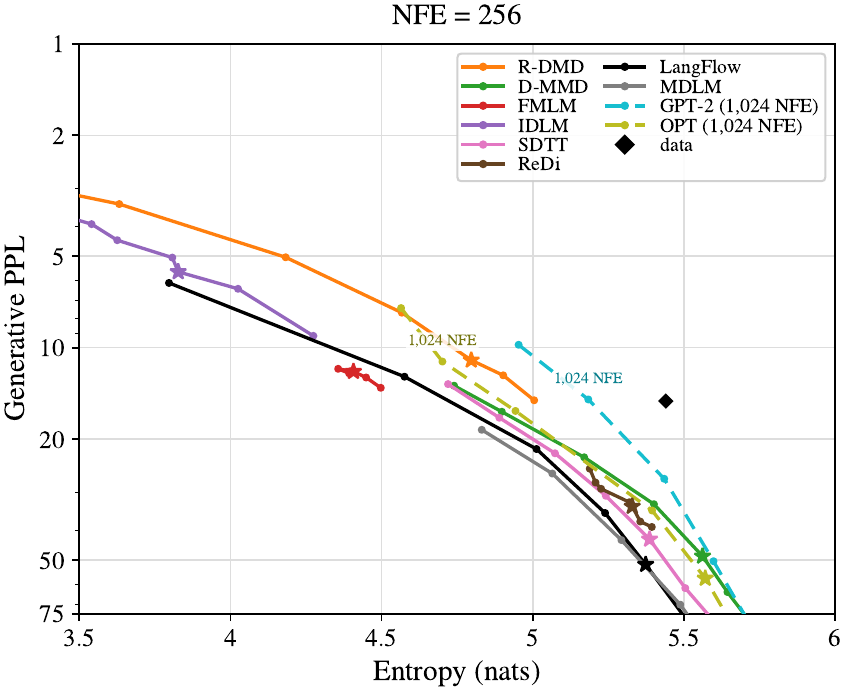}
        \subcaption{256 network evaluations.}
        \label{fig:exp-proposal-frontier-high}
    \end{subfigure}
    \caption{
    Generative frontiers on OpenWebText at $\NFE=4$ and $256$.
Temperature varies from $0.8$ to $1.1$; stars mark temperature $1.0$ and
diamonds the data. Gen PPL uses an inverted log scale. Dashed GPT-2 and OPT use $1{,}024$ evaluations.
\SDMD\ is omitted at $\NFE=256$ due to low diversity.
Full sweeps are in \Cref{app:frontiers}.}
    \label{fig:exp-proposal-frontiers}
\end{figure}

At $\NFE=4$, matching the entropy at $5.44$, \SDMD reaches a
Gen PPL of $45.6$, compared with $90.2$ for ReDi, corresponding to a
$49\%$ reduction. At $\NFE=256$ and matched entropy $5.00$,
\RDMD reaches a Gen PPL of $14.9$, compared with $18.6$ for D-MMD,
a $20\%$ reduction.

With just $4$ network evaluations, \SDMD\ approaches the quality-diversity frontier of OPT-125M which uses $1{,}024$ NFEs, bringing autoregressive-level performance within reach of few-step diffusion generation.
At matched unigram entropy \(H\approx5.54\), their Gen PPL values are \(54.6\) and \(52.1\), respectively.
\RDMD\ also compares favorably with OPT: at \(H\approx5.00\), it achieves Gen PPL \(14.9\) versus \(17.9\), using 256 evaluations.

To compare methods across sampling budgets more systematically,
\Cref{tab:exp-proposal-budgets} reports Gen PPL at a budget-specific
matched-entropy target $H_{\NFE}$. We linearly interpolate log Gen PPL
between measured frontier points that bracket each target.
\SDMD obtains the lowest matched-entropy Gen PPL at $2$, $4$, $8$, and $16$
network evaluations, whereas \RDMD matches D-MMD at $128$ evaluations
and obtains the lowest value at $256$. At $\NFE=2$, FMLM reaches a lower raw Gen PPL
than \SDMD ($142$ versus $178$), but only at a substantially lower entropy
($5.32$ versus $5.65$), and is therefore not a matched-diversity comparison.

\begin{table}[t]
    \centering
    \small
    \renewcommand{\arraystretch}{0.8} %
    \captionsetup{font=footnotesize}
    \setlength{\tabcolsep}{3pt}
    \begin{tabular}{lllllll}
        \toprule
        & \multicolumn{6}{c}{Network evaluations} \\
        \cmidrule(lr){2-7}
        Method & 2 & 4 & 8 & 16 & 128 & 256 \\
        \midrule
        $H_{\mathrm{NFE}}$ & 5.65 & 5.44 & 5.38 & 5.20 & 5.27 & 5.00 \\
        \midrule
        LangFlow \citep{chen2026langflow} & 1077 & 317 & 147 & 58.6 & 39.1 & 21.3 \\
        MDLM \citep{sahoo2024simple} & 1909 & 678 & 267 & 72.8 & 44.3 & 23.8 \\
        \midrule
        FMLM \citep{lee2026fmlm} & 142\entval{5.32} & 128\entval{5.40} & 94.7 & 55.2 & 22.6\entval{4.87} & 13.5\entval{4.50} \\
        D-MMD \citep{hoogeboom2026dmmd} & 1752 & 431 & 135 & 41.7 & 27.3 & 18.6 \\
        IDLM \citep{li2026idlm} & 955 & 145 & 48.4 & 24.3 & 11.6\entval{4.75} & 9.14\entval{4.27} \\
        SDTT \citep{deschenaux2025sdtt} & 1496 & 372 & 103 & 39.4 & 32.6 & 20.1 \\
        ReDi \citep{yoo2025redi} & 261\entval{5.46} & 90.2 & 49.4 & 32.4\entval{5.32} & 30.7 & 25.0\entval{5.19} \\
        \midrule
        \SDMD & \textbf{178} & \textbf{45.6} & \textbf{26.7} & \textbf{17.0} & 3.60\entval{3.82} & 2.51\entval{3.31} \\
        \RDMD & 1112 & 556 & 347 & 68.8 & \textbf{26.9} & \textbf{14.9} \\
        \bottomrule
    \end{tabular}
    \caption{Gen PPL on OpenWebText at matched unigram entropy $H_{\NFE}$;
lower is better. $H_{\NFE}$ is the \SDMD\ entropy closest to the data for
$\NFE=2$--$16$, and the corresponding \RDMD\ entropy for $\NFE=128,256$.
Log Gen PPL is linearly interpolated at $H_{\NFE}$. If a curve does not reach
the target entropy, its closest point is reported with entropy in parentheses.
Bold indicates the lowest matched value.}
    \label{tab:exp-proposal-budgets}
\end{table}

\subsection{Training and sampling choices}
\label{sec:exp-new-design}

We detail the design choices made to obtain the results presented in the previous subsection,
and the related ablations validating them.

\par\medskip\noindent
\begin{minipage}[t]{0.49\linewidth}
\vspace{0pt}
\textbf{Teacher supervision stabilizes \RDMD.}
\Cref{fig:exp-proposal-anchor} examines the effect of the teacher KL anchor introduced in
\eqref{eq:teacher-anchor} at $\NFE=128$. Removing the anchor ($\beta=0$) leads to collapse,
with essentially zero unigram entropy. Positive anchor weights and an annealed schedule prevent
collapse and yield similar generative perplexity--entropy frontiers, with $\beta=0.5$ providing a
modest improvement over the other tested settings. These results indicate that teacher supervision is
needed for stable \RDMD training in the tested configurations, while performance is relatively insensitive
to the anchor weight within the positive range considered. We therefore use $\beta=0.5$ in our default configuration.
\end{minipage}\hfill
\begin{minipage}[t]{0.45\linewidth}
    \vspace{0pt}
    \centering
    \captionsetup{type=figure,font=footnotesize}
    \includegraphics[width=\linewidth]{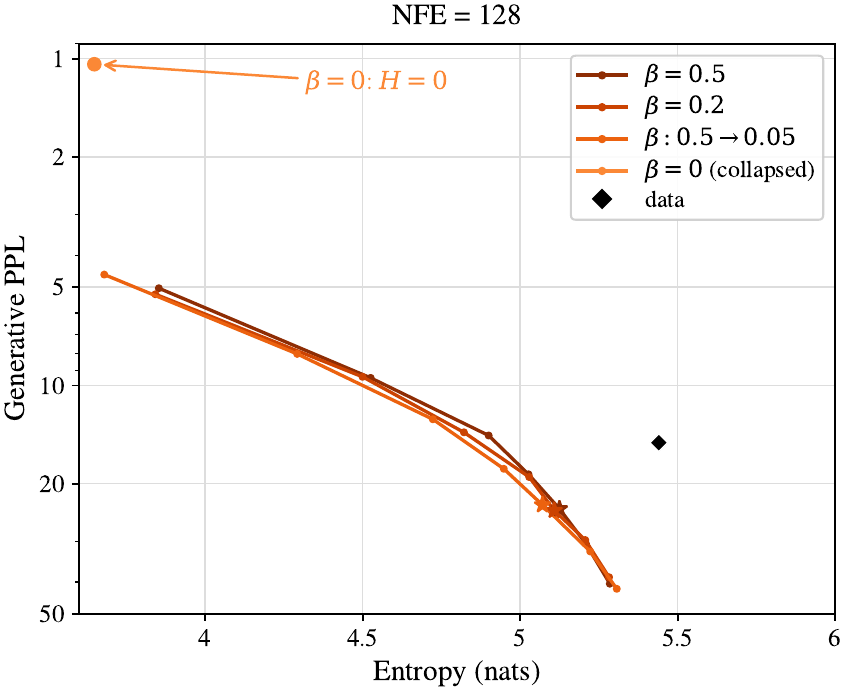}
    \captionof{figure}{Effect of the KL anchor weight $\beta$ on \RDMD\ at
    $\NFE=128$. Without the anchor, generation collapses ($H=0$, off-scale).}
    \label{fig:exp-proposal-anchor}
\end{minipage}
\par\medskip

\begin{figure}[!t]
    \centering
    \captionsetup{font=footnotesize}
    \begin{subfigure}[t]{0.46\linewidth}
        \centering
        \includegraphics[width=0.49\linewidth]{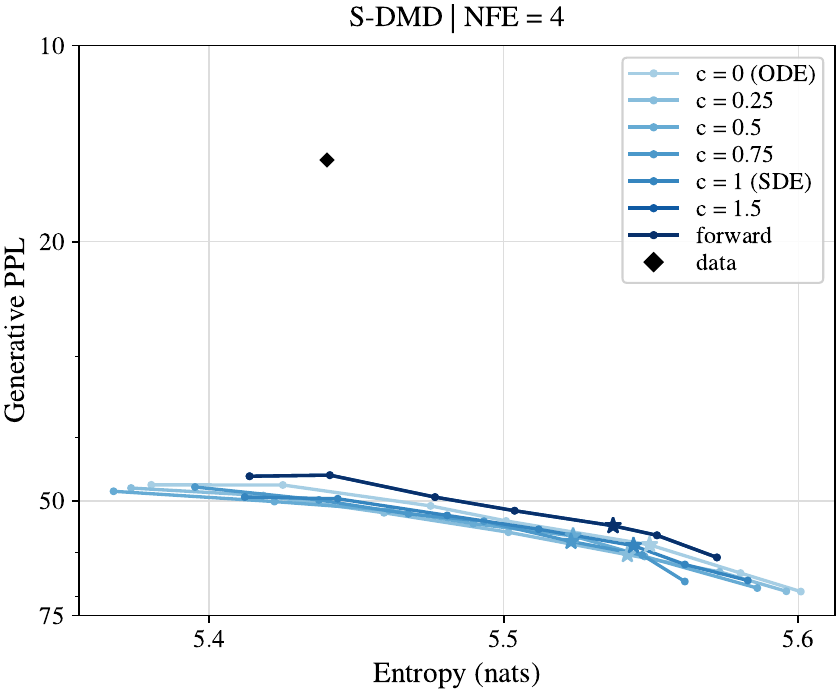}%
        \hfill
        \includegraphics[width=0.49\linewidth]{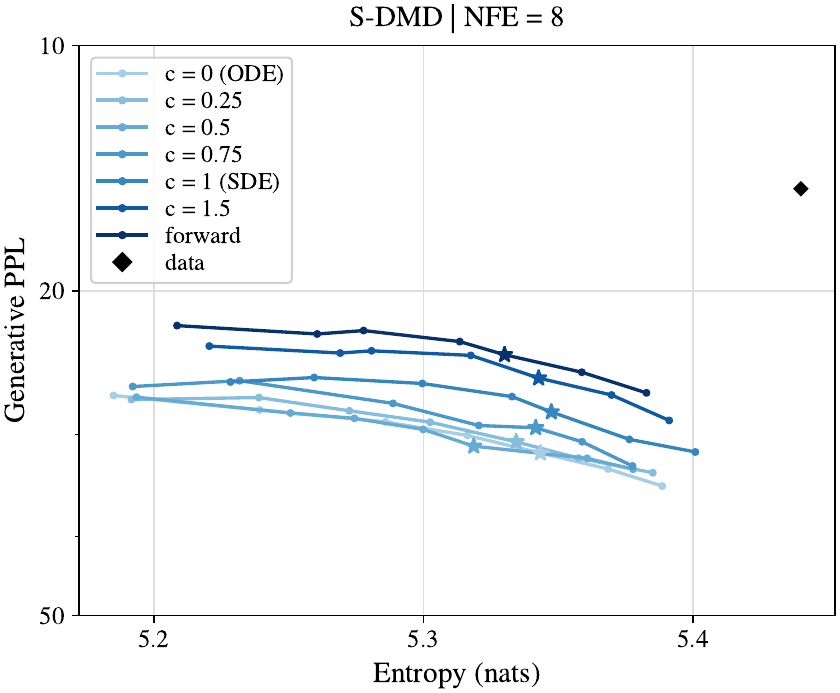}
        \subcaption{\SDMD, $\NFE = 4$ and $8$}
    \end{subfigure}%
    \hspace{0.06\linewidth}%
    \begin{subfigure}[t]{0.46\linewidth}
        \centering
        \includegraphics[width=0.49\linewidth]{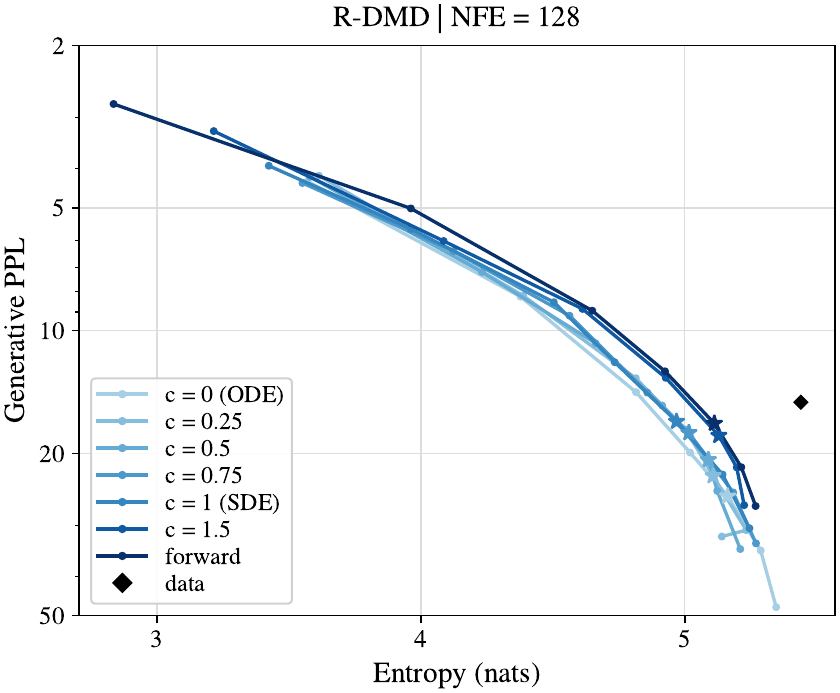}%
        \hfill
        \includegraphics[width=0.49\linewidth]{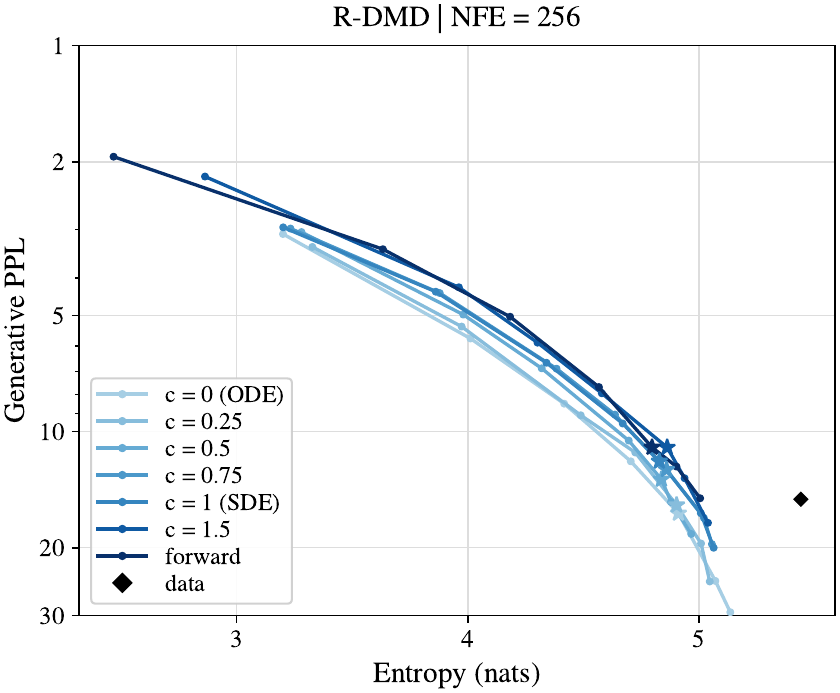}
        \subcaption{\RDMD, $\NFE = 128$ and $256$}
    \end{subfigure}
    \caption{Sampler comparison for $c \in \{0, 0.25, 0.5, 0.75, 1, 1.5\}$ and forward renoising.}
    \label{fig:exp-proposal-samplers}
\end{figure}

\paragraph{Forward renoising performs best among the tested samplers.}
The multi-step construction of \Cref{sec:method-fewstep} defines a family of
sampling transitions through the noise coefficient $\zeta_{t,T}$.
We compare
\begin{equation}
    \zeta_{t,T}
    = c\,\frac{\sigma_t\sigma_{T\mid t}}{\sigma_T},
    \qquad
    c\in\{0,0.25,0.5,0.75,1,1.5\},
\end{equation}
clipping $c$ to respect the constraints on $\zeta_{t,T}$ if necessary,
with forward renoising, $\zeta_{t,T}=\sigma_t$, where a maximum amount of fresh Gaussian noise is added at each sampling
step. Across the swept budgets, forward renoising gives the best observed
Gen PPL--entropy frontier for both \SDMD and \RDMD
(\Cref{fig:exp-proposal-samplers}).

\paragraph{Additional training choices.}
Further ablations in \Cref{app:ablations,app:fewstep} support the remaining
choices of our default configuration. Initializing the student from scratch
leads to collapse in the tested configurations, motivating initialization
from the teacher. Self-conditioning improves \SDMD but provides no benefit for \RDMD in our
experiments, so we use it only for \SDMD.
For \SDMD, a bridge-based training construction
inspired by moment matching \citep{salimans2024multistep} underperforms the
forward-renoising training objective; we discuss its auxiliary score
approximation in \Cref{app:fewstep-bridge}.

\section{Conclusion}
\label{sec:conclusion}

We introduced a distribution-matching framework for distilling continuous
diffusion models on discrete data and derived two methods from a common
reverse-KL objective: \SDMD, based on simplex-valued outputs and pathwise
gradients, and \RDMD, based on categorical sampling and score-function
estimation. On OpenWebText, they improve generation at complementary sampling
budgets, with \SDMD strongest at low budgets and \RDMD at larger budgets among
the evaluated distillation methods. Like other distribution-matching
approaches, our methods require a pretrained teacher and an auxiliary model
during training. Our experiments are also limited to academic model scale;
scaling to substantially larger language models will require studying this
training overhead and extending evaluation beyond the Gen PPL--entropy
protocol used here.

\subsection*{AI use statement}

Generative AI tools assisted with writing and editing the manuscript,
including improving clarity and structure and providing feedback on the
presentation and interpretation of experimental results.
They also assisted with coordinating experiment scripts,
cluster jobs, and data collection.
The authors reviewed and revised all AI-assisted suggestions and take
full responsibility for the scientific claims and final content of the paper.

\bibliography{references}

\begin{thebibliography}{46}
\providecommand{\natexlab}[1]{#1}
\providecommand{\url}[1]{\texttt{#1}}
\expandafter\ifx\csname urlstyle\endcsname\relax
  \providecommand{\doi}[1]{doi: #1}\else
  \providecommand{\doi}{doi: \begingroup \urlstyle{rm}\Url}\fi

\bibitem[Austin et~al.(2021)Austin, Johnson, Ho, Tarlow, and van~den
  Berg]{austin2021structured}
Jacob Austin, Daniel~D. Johnson, Jonathan Ho, Daniel Tarlow, and Rianne van~den
  Berg.
\newblock Structured denoising diffusion models in discrete state-spaces.
\newblock In \emph{Advances in Neural Information Processing Systems
  (NeurIPS)}, 2021.

\bibitem[Boffi et~al.(2025{\natexlab{a}})Boffi, Albergo, and
  Vanden-Eijnden]{boffi2025consistency}
Nicholas~M. Boffi, Michael~S. Albergo, and Eric Vanden-Eijnden.
\newblock How to build a consistency model: Learning flow maps via
  self-distillation.
\newblock In \emph{Advances in Neural Information Processing Systems
  (NeurIPS)}, volume~38, 2025{\natexlab{a}}.

\bibitem[Boffi et~al.(2025{\natexlab{b}})Boffi, Albergo, and
  Vanden-Eijnden]{boffi2025flowmap}
Nicholas~M. Boffi, Michael~S. Albergo, and Eric Vanden-Eijnden.
\newblock Flow map matching with stochastic interpolants: A mathematical
  framework for consistency models.
\newblock \emph{Transactions on Machine Learning Research}, 2025{\natexlab{b}}.

\bibitem[Chen et~al.(2026{\natexlab{a}})Chen, Zhang, and
  Zhou]{chen2026dlmonediffusionlanguagemodels}
Tianqi Chen, Shujian Zhang, and Mingyuan Zhou.
\newblock Dlm-one: Diffusion language models for one-step sequence generation,
  2026{\natexlab{a}}.
\newblock URL \url{https://arxiv.org/abs/2506.00290}.

\bibitem[Chen et~al.(2023)Chen, Zhang, and Hinton]{chen2023analog}
Ting Chen, Ruixiang Zhang, and Geoffrey Hinton.
\newblock Analog bits: Generating discrete data using diffusion models with
  self-conditioning.
\newblock In \emph{International Conference on Learning Representations
  (ICLR)}, 2023.

\bibitem[Chen et~al.(2026{\natexlab{b}})Chen, Liang, Sui, Guo, Cheng, You, and
  Liu]{chen2026langflow}
Yuxin Chen, Chumeng Liang, Hangke Sui, Ruihan Guo, Chaoran Cheng, Jiaxuan You,
  and Ge~Liu.
\newblock {LangFlow}: Continuous diffusion rivals discrete in language
  modeling.
\newblock \emph{arXiv preprint arXiv:2604.11748}, 2026{\natexlab{b}}.

\bibitem[Davis et~al.(2026)Davis, Filippova, Ablin, Turrisi, Shidani, Cuturi,
  and B{\'e}thune]{davis2026scaling}
Oscar Davis, Anastasiia Filippova, Pierre Ablin, Victor Turrisi, Amitis
  Shidani, Marco Cuturi, and Louis B{\'e}thune.
\newblock Scaling categorical flow maps.
\newblock \emph{arXiv preprint arXiv:2605.07820}, 2026.

\bibitem[Deschenaux \& Gulcehre(2025)Deschenaux and
  Gulcehre]{deschenaux2025sdtt}
Justin Deschenaux and Caglar Gulcehre.
\newblock Beyond autoregression: Fast {LLM}s via self-distillation through
  time.
\newblock In \emph{International Conference on Learning Representations
  (ICLR)}, 2025.

\bibitem[Dieleman et~al.(2022)Dieleman, Sartran, Roshannai, Savinov, Ganin,
  Richemond, Doucet, Strudel, Dyer, Durkan, Hawthorne, Leblond, Grathwohl, and
  Adler]{dieleman2022continuous}
Sander Dieleman, Laurent Sartran, Arman Roshannai, Nikolay Savinov, Yaroslav
  Ganin, Pierre~H. Richemond, Arnaud Doucet, Robin Strudel, Chris Dyer, Conor
  Durkan, Curtis Hawthorne, R{\'e}mi Leblond, Will Grathwohl, and Jonas Adler.
\newblock Continuous diffusion for categorical data.
\newblock \emph{arXiv preprint arXiv:2211.15089}, 2022.

\bibitem[Efron(2011)]{efron2011tweedie}
Bradley Efron.
\newblock {Tweedie's} formula and selection bias.
\newblock \emph{Journal of the American Statistical Association}, 106\penalty0
  (496):\penalty0 1602--1614, 2011.

\bibitem[Goodfellow et~al.(2014)Goodfellow, Pouget-Abadie, Mirza, Xu,
  Warde-Farley, Ozair, Courville, and Bengio]{goodfellow2014gan}
Ian~J. Goodfellow, Jean Pouget-Abadie, Mehdi Mirza, Bing Xu, David
  Warde-Farley, Sherjil Ozair, Aaron Courville, and Yoshua Bengio.
\newblock Generative adversarial nets.
\newblock In \emph{Advances in Neural Information Processing Systems
  (NeurIPS)}, 2014.

\bibitem[Gourevitch et~al.(2026)Gourevitch, Janati, Shariatian, Simsekli,
  Moulines, Xing, and Durmus]{gourevitch2026uniform}
Samson Gourevitch, Yazid Janati, Dario Shariatian, Umut Simsekli, Eric
  Moulines, Eric~P. Xing, and Alain Durmus.
\newblock Uniform diffusion models revisited: Leave-one-out denoiser and
  absorbing state reformulation.
\newblock \emph{arXiv preprint arXiv:2605.22765}, 2026.

\bibitem[Gulrajani \& Hashimoto(2023)Gulrajani and
  Hashimoto]{gulrajani2023likelihood}
Ishaan Gulrajani and Tatsunori~B. Hashimoto.
\newblock Likelihood-based diffusion language models.
\newblock In \emph{Advances in Neural Information Processing Systems
  (NeurIPS)}, 2023.

\bibitem[Hayakawa et~al.(2025)Hayakawa, Takida, Imaizumi, Wakaki, and
  Mitsufuji]{hayakawa2025di4c}
Satoshi Hayakawa, Yuhta Takida, Masaaki Imaizumi, Hiromi Wakaki, and Yuki
  Mitsufuji.
\newblock Distillation of discrete diffusion through dimensional correlations.
\newblock In \emph{International Conference on Machine Learning (ICML)}, 2025.

\bibitem[Hoogeboom et~al.(2026)Hoogeboom, Ruhe, Heek, Mensink, and
  Salimans]{hoogeboom2026dmmd}
Emiel Hoogeboom, David Ruhe, Jonathan Heek, Thomas Mensink, and Tim Salimans.
\newblock Beyond single tokens: Distilling discrete diffusion models via
  discrete {MMD}.
\newblock \emph{arXiv preprint arXiv:2603.20155}, 2026.

\bibitem[Hu et~al.(2026)Hu, Qiu, Lu, Zhao, Li, Kim, Andreas, and He]{hu2026elf}
Keya Hu, Linlu Qiu, Yiyang Lu, Hanhong Zhao, Tianhong Li, Yoon Kim, Jacob
  Andreas, and Kaiming He.
\newblock {ELF}: Embedded language flows.
\newblock \emph{arXiv preprint arXiv:2605.10938}, 2026.

\bibitem[Huang et~al.(2024)Huang, Geng, Luo, and Qi]{huang2024flow}
Zemin Huang, Zhengyang Geng, Weijian Luo, and Guo-jun Qi.
\newblock Flow generator matching.
\newblock \emph{arXiv preprint arXiv:2410.19310}, 2024.

\bibitem[Kingma \& Welling(2014)Kingma and Welling]{kingma2014autoencoding}
Diederik~P. Kingma and Max Welling.
\newblock Auto-encoding variational {Bayes}.
\newblock In \emph{International Conference on Learning Representations
  (ICLR)}, 2014.

\bibitem[Kingma et~al.(2021)Kingma, Salimans, Poole, and Ho]{kingma2021vdm}
Diederik~P. Kingma, Tim Salimans, Ben Poole, and Jonathan Ho.
\newblock Variational diffusion models.
\newblock In \emph{Advances in Neural Information Processing Systems
  (NeurIPS)}, 2021.

\bibitem[Kool et~al.(2019)Kool, van Hoof, and Welling]{kool2019buy}
Wouter Kool, Herke van Hoof, and Max Welling.
\newblock Buy 4 {REINFORCE} samples, get a baseline for free!
\newblock In \emph{ICLR 2019 Deep Reinforcement Learning meets Structured
  Prediction Workshop}, 2019.

\bibitem[Lee et~al.(2026)Lee, Yoo, Agarwal, Shah, Huang, Raghunathan, Hong,
  Boffi, and Kim]{lee2026fmlm}
Chanhyuk Lee, Jaehoon Yoo, Manan Agarwal, Sheel Shah, Jerry Huang, Aditi
  Raghunathan, Seunghoon Hong, Nicholas~M. Boffi, and Jinwoo Kim.
\newblock Flow map language models: One-step language modeling via continuous
  denoising.
\newblock \emph{arXiv preprint arXiv:2602.16813}, 2026.

\bibitem[Li et~al.(2026)Li, Gushchin, Abulkhanov, Moulines, Oseledets, Panov,
  and Korotin]{li2026idlm}
David Li, Nikita Gushchin, Dmitry Abulkhanov, Eric Moulines, Ivan Oseledets,
  Maxim Panov, and Alexander Korotin.
\newblock {IDLM}: Inverse-distilled diffusion language models.
\newblock \emph{arXiv preprint arXiv:2602.19066}, 2026.

\bibitem[Li et~al.(2022)Li, Thickstun, Gulrajani, Liang, and
  Hashimoto]{li2022diffusion}
Xiang~Lisa Li, John Thickstun, Ishaan Gulrajani, Percy Liang, and Tatsunori~B.
  Hashimoto.
\newblock Diffusion-{LM} improves controllable text generation.
\newblock In \emph{Advances in Neural Information Processing Systems
  (NeurIPS)}, 2022.

\bibitem[Liu et~al.(2023)Liu, Gong, and Liu]{liu2023rectified}
Xingchao Liu, Chengyue Gong, and Qiang Liu.
\newblock Flow straight and fast: Learning to generate and transfer data with
  {Rectified Flow}.
\newblock In \emph{International Conference on Learning Representations
  (ICLR)}, 2023.

\bibitem[Lou et~al.(2024)Lou, Meng, and Ermon]{lou2024discrete}
Aaron Lou, Chenlin Meng, and Stefano Ermon.
\newblock Discrete diffusion modeling by estimating the ratios of the data
  distribution.
\newblock In \emph{International Conference on Machine Learning (ICML)}, 2024.

\bibitem[Luo et~al.(2023)Luo, Hu, Zhang, Sun, Li, and
  Zhang]{luo2023diffinstruct}
Weijian Luo, Tianyang Hu, Shifeng Zhang, Jiacheng Sun, Zhenguo Li, and Zhihua
  Zhang.
\newblock {Diff-Instruct}: A universal approach for transferring knowledge from
  pre-trained diffusion models.
\newblock In \emph{Advances in Neural Information Processing Systems
  (NeurIPS)}, 2023.

\bibitem[Potaptchik et~al.(2026)Potaptchik, Yim, Saravanan, Holderrieth,
  Vanden-Eijnden, and Albergo]{potaptchik2026dfm}
Peter Potaptchik, Jason Yim, Adhi Saravanan, Peter Holderrieth, Eric
  Vanden-Eijnden, and Michael~S. Albergo.
\newblock Discrete flow maps.
\newblock \emph{arXiv preprint arXiv:2604.09784}, 2026.

\bibitem[Pynadath et~al.(2026)Pynadath, Shi, and
  Zhang]{pynadath2026generativefrontiersevaluationmatters}
Patrick Pynadath, Jiaxin Shi, and Ruqi Zhang.
\newblock Generative frontiers: Why evaluation matters for diffusion language
  models.
\newblock \emph{arXiv preprint arXiv:2604.02718}, 2026.

\bibitem[Radford et~al.(2019)Radford, Wu, Child, Luan, Amodei, and
  Sutskever]{radford2019language}
Alec Radford, Jeff Wu, Rewon Child, David Luan, Dario Amodei, and Ilya
  Sutskever.
\newblock Language models are unsupervised multitask learners.
\newblock Technical report, OpenAI, 2019.

\bibitem[Roos et~al.(2026)Roos, Davis, Eijkelboom, Bronstein, Welling, Ceylan,
  Ambrogioni, and van~de Meent]{roos2026categorical}
Daan Roos, Oscar Davis, Floor Eijkelboom, Michael Bronstein, Max Welling,
  {\.I}smail~{\.I}lkan Ceylan, Luca Ambrogioni, and Jan-Willem van~de Meent.
\newblock Categorical flow maps.
\newblock \emph{arXiv preprint arXiv:2602.12233}, 2026.

\bibitem[Sahoo et~al.(2024)Sahoo, Arriola, Schiff, Gokaslan, Marroquin, Chiu,
  Rush, and Kuleshov]{sahoo2024simple}
Subham~Sekhar Sahoo, Marianne Arriola, Yair Schiff, Aaron Gokaslan, Edgar
  Marroquin, Justin~T. Chiu, Alexander Rush, and Volodymyr Kuleshov.
\newblock Simple and effective masked diffusion language models.
\newblock In \emph{Advances in Neural Information Processing Systems
  (NeurIPS)}, 2024.

\bibitem[Sahoo et~al.(2025)Sahoo, Deschenaux, Gokaslan, Wang, Chiu, and
  Kuleshov]{sahoo2025duality}
Subham~Sekhar Sahoo, Justin Deschenaux, Aaron Gokaslan, Guanghan Wang, Justin
  Chiu, and Volodymyr Kuleshov.
\newblock The diffusion duality.
\newblock In \emph{International Conference on Machine Learning (ICML)}, 2025.

\bibitem[Salimans et~al.(2024)Salimans, Mensink, Heek, and
  Hoogeboom]{salimans2024multistep}
Tim Salimans, Thomas Mensink, Jonathan Heek, and Emiel Hoogeboom.
\newblock Multistep distillation of diffusion models via moment matching.
\newblock In \emph{Advances in Neural Information Processing Systems
  (NeurIPS)}, 2024.

\bibitem[Schiff et~al.(2025)Schiff, Sahoo, Phung, Wang, Rush, Kuleshov,
  Dalla-Torre, Boshar, de~Almeida, and Pierrot]{schiff2025simple}
Yair Schiff, Subham~Sekhar Sahoo, Hao Phung, Guanghan Wang, Alexander Rush,
  Volodymyr Kuleshov, Hugo Dalla-Torre, Sam Boshar, Bernardo~P. de~Almeida, and
  Thomas Pierrot.
\newblock Simple guidance mechanisms for discrete diffusion models.
\newblock In \emph{International Conference on Learning Representations
  (ICLR)}, 2025.

\bibitem[Shao et~al.(2024)Shao, Wang, Zhu, Xu, Song, Bi, Zhang, Zhang, Li, Wu,
  and Guo]{shao2024deepseekmath}
Zhihong Shao, Peiyi Wang, Qihao Zhu, Runxin Xu, Junxiao Song, Xiao Bi, Haowei
  Zhang, Mingchuan Zhang, Y.~K. Li, Y.~Wu, and Daya Guo.
\newblock {DeepSeekMath}: Pushing the limits of mathematical reasoning in open
  language models.
\newblock \emph{arXiv preprint arXiv:2402.03300}, 2024.

\bibitem[Song et~al.(2021)Song, Meng, and Ermon]{song2021ddim}
Jiaming Song, Chenlin Meng, and Stefano Ermon.
\newblock Denoising diffusion implicit models.
\newblock In \emph{International Conference on Learning Representations
  (ICLR)}, 2021.

\bibitem[Song \& Dhariwal(2024)Song and Dhariwal]{song2024improved}
Yang Song and Prafulla Dhariwal.
\newblock Improved techniques for training consistency models.
\newblock In \emph{International Conference on Learning Representations
  (ICLR)}, 2024.

\bibitem[Song et~al.(2023)Song, Dhariwal, Chen, and
  Sutskever]{song2023consistency}
Yang Song, Prafulla Dhariwal, Mark Chen, and Ilya Sutskever.
\newblock Consistency models.
\newblock In \emph{International Conference on Machine Learning (ICML)}, 2023.

\bibitem[Strudel et~al.(2022)Strudel, Tallec, Altch{\'e}, Du, Ganin, Mensch,
  Grathwohl, Savinov, Dieleman, Sifre, and Leblond]{strudel2022self}
Robin Strudel, Corentin Tallec, Florent Altch{\'e}, Yilun Du, Yaroslav Ganin,
  Arthur Mensch, Will Grathwohl, Nikolay Savinov, Sander Dieleman, Laurent
  Sifre, and R{\'e}mi Leblond.
\newblock Self-conditioned embedding diffusion for text generation.
\newblock \emph{arXiv preprint arXiv:2211.04236}, 2022.

\bibitem[Williams(1992)]{williams1992simple}
Ronald~J. Williams.
\newblock Simple statistical gradient-following algorithms for connectionist
  reinforcement learning.
\newblock \emph{Machine Learning}, 8:\penalty0 229--256, 1992.

\bibitem[Yang et~al.(2026)Yang, Guo, Zhang, Sahoo, Chen, Vahdat, Mardani, and
  Thickstun]{yang2026replaid}
Zhihan Yang, Wei Guo, Shuibai Zhang, Subham~Sekhar Sahoo, Yongxin Chen, Arash
  Vahdat, Morteza Mardani, and John Thickstun.
\newblock Continuous diffusion scales competitively with discrete diffusion for
  language.
\newblock \emph{arXiv preprint arXiv:2605.18530}, 2026.

\bibitem[Yin et~al.(2024)Yin, Gharbi, Zhang, Shechtman, Durand, Freeman, and
  Park]{yin2024onestep}
Tianwei Yin, Micha{\"e}l Gharbi, Richard Zhang, Eli Shechtman, Fr{\'e}do
  Durand, William~T. Freeman, and Taesung Park.
\newblock One-step diffusion with distribution matching distillation.
\newblock In \emph{IEEE/CVF Conference on Computer Vision and Pattern
  Recognition (CVPR)}, 2024.

\bibitem[Yoo et~al.(2025)Yoo, Kim, and Hong]{yoo2025redi}
Jaehoon Yoo, Wonjung Kim, and Seunghoon Hong.
\newblock {ReDi}: Rectified discrete flow.
\newblock In \emph{Advances in Neural Information Processing Systems
  (NeurIPS)}, 2025.

\bibitem[Zhang et~al.(2022)Zhang, Roller, Goyal, Artetxe, Chen, Chen, Dewan,
  Diab, Li, Lin, Mihaylov, Ott, Shleifer, Shuster, Simig, Koura, Sridhar, Wang,
  and Zettlemoyer]{zhang2022optopenpretrainedtransformer}
Susan Zhang, Stephen Roller, Naman Goyal, Mikel Artetxe, Moya Chen, Shuohui
  Chen, Christopher Dewan, Mona Diab, Xian Li, Xi~Victoria Lin, Todor Mihaylov,
  Myle Ott, Sam Shleifer, Kurt Shuster, Daniel Simig, Punit~Singh Koura, Anjali
  Sridhar, Tianlu Wang, and Luke Zettlemoyer.
\newblock Opt: Open pre-trained transformer language models, 2022.
\newblock URL \url{https://arxiv.org/abs/2205.01068}.

\bibitem[Zhou et~al.(2024)Zhou, Zheng, Wang, Yin, and Huang]{zhou2024score}
Mingyuan Zhou, Huangjie Zheng, Zhendong Wang, Mingzhang Yin, and Hai Huang.
\newblock Score identity distillation: Exponentially fast distillation of
  pretrained diffusion models for one-step generation.
\newblock In \emph{International Conference on Machine Learning (ICML)}, 2024.

\bibitem[Zhu et~al.(2025)Zhu, Wang, Lathuili{\`e}re, and
  Kalogeiton]{zhu2025dimo}
Yuanzhi Zhu, Xi~Wang, St{\'e}phane Lathuili{\`e}re, and Vicky Kalogeiton.
\newblock {Di[M]O}: Distilling masked diffusion models into a one-step
  generator.
\newblock \emph{arXiv preprint arXiv:2503.15457}, 2025.

\end{thebibliography}

\newpage
\appendix

\addtocontents{toc}{\protect\setcounter{tocdepth}{2}}
\tableofcontents
\clearpage

\section{Notation}
\label{app:notation}

\paragraph{Data, representations, and dimensions.}
The vocabulary is $\msv$ with $K$ tokens, and $\msx=\msv^L$ is the space of sequences of length $L$.
We identify token $k$ with the canonical basis vector $\mathbf{e}_k\in\R^K$ and discrete sequences with matrices of one-hot rows.
The probability simplex is $\Delta_K=\{u\in\R^K:u_k\geq0,\ \sum_k u_k=1\}$; relaxed sequences and collections of token probabilities belong to $\Delta_K^L$.
We write $\mathbf{x}^\ell$ for row $\ell$ and $x_k^\ell$ for its $k$-th coordinate.
A collection of token marginals does not by itself specify a joint sequence distribution.
The clean data law is $p_{\mathrm{data}}$.
The embedding matrix $\Emat\in\R^{K\times d}$ has rows $E_k$, where $d$ is the embedding dimension, and maps $\mathbf{x}$ to $\mathbf{x}\Emat\in\R^{L\times d}$.
Uppercase symbols such as $\mathbf{X}$ and $Z_t$ denote random variables; lowercase symbols denote their realizations.

\paragraph{Gaussian diffusion and noise levels.}
The data forward process has marginals $p_t$ and conditional laws
\[
    Z_t=\alpha_t\mathbf{X}\Emat+\sigma_t\varepsilon,
    \qquad
    q_{t\mid\mathbf{x}}(\cdot\mid\mathbf{x})
    =\gauss(\alpha_t\mathbf{x}\Emat,\sigma_t^2\Id),
    \qquad \varepsilon\sim\gauss(0,\Id).
\]
Here $\Id$ is the identity on vectorized latent matrices, $\sigma_t^2=1-\alpha_t^2$, and the idealized endpoints are $(\alpha_0,\sigma_0)=(1,0)$ and $(\alpha_1,\sigma_1)=(0,1)$, so $p_1=\gauss(0,\Id)$.
For $s<t$, $q_{t\mid s}$ denotes the forward transition, with $\alpha_{t\mid s}=\alpha_t/\alpha_s$ and $\sigma_{t\mid s}^2=\sigma_t^2-\alpha_{t\mid s}^2\sigma_s^2$.
The Gaussian bridge $q_{s\mid t,\mathbf{x}}$ conditions on both the later latent and the clean representation; $p^\theta_{s\mid t}$ denotes a learned teacher reverse transition.

\paragraph{Posteriors, denoisers, and scores.}
The joint data posterior is $\post(\cdot\mid z_t)$, with token marginal $\post[\ell](\cdot\mid z_t)$ and mean $m_t(z_t)=\E[\mathbf{X}\mid Z_t=z_t]\in\Delta_K^L$.
The pretrained teacher has parameters $\theta$, denoiser $\mathbf{x}^\theta(z_t,t)$, and generative law $p^\theta$; $p_t$ always denotes the data forward marginal.
Student parameters are $\eta$, auxiliary-denoiser parameters are $\phi$, and discriminator parameters are $\psi$.
The auxiliary denoiser $\mathbf{x}^\phi(z,t,T)$ targets $m_t^{\eta,T}(z)=\E[\mathbf{X}^{\eta,T}\mid Z_t^{\eta,T}=z]$.
The data and student scores are $s(z,t)=\nabla_z\log p_t(z)$ and $s^{\eta,T}(z,t)=\nabla_z\log p_t^{\eta,T}(z)$; $s^\theta$ denotes the teacher estimate of the data score.
The scalar time $s$ in a transition is distinct from the score function $s(z,t)$.

\paragraph{Student outputs and training marginals.}
Given $Z_T$, the student predicts $\mathbf{x}^\eta(Z_T,T)\in\Delta_K^L$, which parameterizes the clean-output kernel $\genkernel_T^\eta(\cdot\mid Z_T)$.
For \SDMD this kernel is the point mass $\delta_{\mathbf{x}^\eta(Z_T,T)}$; for \RDMD it is the product of categorical laws with these token probabilities.
We write $\mathbf{X}^{\eta,T}\sim\genkernel_T^\eta(\cdot\mid Z_T)$ and $p_{\mathbf{x}}^{\eta,T}$ for its marginal when $Z_T\sim p_T$.
Forward noising gives $Z_t^{\eta,T}$ with law $p_t^{\eta,T}$, where
$\pi_t(\mathbf{x},\varepsilon)=\alpha_t\mathbf{x}\Emat+\sigma_t\varepsilon$ denotes the reparameterized noising map.
The one-step notation $\mathbf{X}^\eta$, $p_{\mathbf{x}}^\eta$, and $p_t^\eta$ omits the fixed starting level $T=1$.

\paragraph{Time sampling and matching objectives.}
In the multi-step objective, $T$ is the starting noise level and $t<T$ is the level at which noised distributions are compared.
We sample $(t,T)\sim\nu$, with
$\nu(\mathrm{d}t,\mathrm{d}T)=\nu_T(\mathrm{d}t)\nu_1(\mathrm{d}T)$ and $\nu_s=\Unif(0,s)$.
The local discrepancy is $\mathsf{D}$, and $\mathsf{L}_{\mathrm{distill}}$ denotes the generic distribution-matching objective.
For reverse KL, the one-step and multi-step losses are
\[
    \mathsf{L}_{\mathrm{DMD}}(\eta)
    =\int_0^1\KL{p_t^\eta}{p_t}\,\mathrm{d}t,
    \qquad
    \mathsf{L}_{\mathrm{DMD}}^{\mathrm{ms}}(\eta)
    =\E_{(t,T)\sim\nu}\!\left[\KL{p_t^{\eta,T}}{p_t}\right].
\]
The auxiliary-denoiser and discriminator objectives are $\mathsf{L}_{\mathrm{aux}}$ and $\mathsf{L}_{\mathrm{disc}}$, respectively; $n_{\mathrm{aux}}$ is the number of auxiliary updates per student update.

\paragraph{Discrete-sampling cost and variance reduction.}
The discriminator $D_\psi(z,t,T)\in(0,1)$ estimates the probability that a noised sample comes from data rather than the student.
Its negative logit, $\log((1-D_\psi)/D_\psi)$, estimates the log-ratio $\log(p_t^{\eta,T}/p_t)$.
The expected log-ratio cost is $C_T$, and $\widehat C_\psi$ denotes its pointwise estimate using a sampled time and Gaussian noise.
For a group of $G$ samples sharing $(Z_T,T)$, $g$ indexes a sample, $\widehat C_g$ is its estimated cost, and $\widehat b_g=(G-1)^{-1}\sum_{g'\ne g}\widehat C_{g'}$ is its leave-one-out baseline.
The coefficient $\beta\geq0$ weights the tokenwise KL anchor from the student to the frozen teacher.

\paragraph{Multi-step sampling and evaluation.}
The inference grid is $1=T_n>T_{n-1}>\cdots>T_0=0$.
Hatted variables $\widehat Z_{T_i}$ and $\widehat{\mathbf{X}}^{\eta,T_i}$ denote states and clean proposals along the composed sampler; their laws need not equal the corresponding training marginals.
The student transition is $p_{t\mid T}^\eta$, formed by composing $\genkernel_T^\eta$ with the DDIM-inspired kernel $q^\zeta_{t\mid\mathbf{x},T}$.
The noise standard deviation $\zeta_{t,T}$ controls fresh noise: $0$ gives a deterministic update conditional on the clean proposal, $\sigma_t\sigma_{T\mid t}/\sigma_T$ gives the Gaussian bridge, and $\sigma_t$ gives forward renoising.
The sampler experiments use the multiplier $c$ in $\zeta_{t,T}=c\sigma_t\sigma_{T\mid t}/\sigma_T$.
Sampling temperature rescales output logits and is separate from the starting noise level $T$ (the sample appendix also uses $T$ locally for temperature).
$\NFE$ denotes the number of network evaluations per generated sample.
Gen PPL is generative perplexity under GPT-2-large; entropy is the mean per-sequence Shannon entropy of empirical token frequencies, measured in nats.

\paragraph{Operators and appendix conventions.}
We use $\KL{P}{Q}$ for Kullback--Leibler divergence, $\mathrm{CE}(u,v)=-\sum_k u_k\log v_k$ for cross-entropy, $\sg{\cdot}$ for stop-gradient, $\delta_x$ for a point mass, and $\eqd$ for equality in distribution.
In \Cref{app:emergent}, $V$ is the set of one-hot sequences, $\mathcal{E}(\mathbf{x})=\mathbf{x}\Emat$, $\rho$ is a clean law, and $Q_t\rho$ is its embedded, noised law; $f_\#\rho$ denotes the law of $f(X)$ for $X\sim\rho$.
In the comparison of continuous distillation objectives in \Cref{app:distillation}, clean samples are written directly in Euclidean space, so $m_t$ there is a mean in that space; $s_t$ and $v_t$ denote the score and velocity fields.
Other locally defined symbols retain the meanings specified in their respective derivations.
\section{Background: objectives and parameterizations}
\label{app:bg-derivations}

We expand the continuous and discrete formulations recalled in \Cref{sec:bg-continuous}.

\subsection{Continuous diffusion on discrete data: objective and design choices}
\label{app:continuous}

We review four design choices for continuous diffusion on token embeddings:
training objectives, embedding normalization,
self-conditioning and noise schedules, highlighting how existing models implement them.

\paragraph{The objective.}
We distinguish likelihood-based training, used by Diffusion-LM
\citep{li2022diffusion}, Plaid \citep{gulrajani2023likelihood}, and RePlaid
\citep{yang2026replaid}, from direct cross-entropy training.
In the notation of \Cref{sec:bg-continuous}, the continuous-time negative
evidence lower bound consists of prior, reconstruction, and diffusion terms:
\begin{equation}
\begin{aligned}
    \mathsf{L}_{\mathrm{c}}(\mathbf{x};\theta)
    ={}&\mathsf{L}_{\mathrm{prior}}(\mathbf{x})
    +\E_{Z_0\sim q_{0\mid\mathbf{x}}(\cdot\mid\mathbf{x})}
      \left[-\sum_{\ell=1}^L
      \log\langle\mathbf{x}^{\theta,\ell}(Z_0,0),\mathbf{x}^\ell\rangle\right]\\
    &+\frac12\int_0^1[-\SNR'(t)]\,
      \E_{Z_t\sim q_{t\mid\mathbf{x}}(\cdot\mid\mathbf{x})}
      \left[\left\|\big(\mathbf{x}^\theta(Z_t,t)-\mathbf{x}\big)
      \Emat\right\|_{\mathrm F}^2\right]\,\mathrm{d}t,
\end{aligned}
\label{eq:app-full-nelbo}
\end{equation}
where $\SNR(t)=\alpha_t^2/\sigma_t^2$. For a standard Gaussian generative
prior, the prior term is explicit:
\begin{equation}
\begin{aligned}
    \mathsf{L}_{\mathrm{prior}}(\mathbf{x})
    &=\KL{q_{1\mid\mathbf{x}}(\cdot\mid\mathbf{x})}
           {\gauss(0,\Id)} \, .
\end{aligned}
\label{eq:app-prior-loss}
\end{equation}
Since $\alpha_1=0$ and $\sigma_1=1$ in \Cref{sec:bg-continuous},
$q_{1\mid\mathbf{x}}(\cdot\mid\mathbf{x})=\gauss(0,\Id)$ and this term
vanishes. It is nonzero only for schedules with a finite terminal
signal-to-noise ratio. The
reconstruction term trains the categorical decoder at time zero, while
the diffusion term penalizes denoising errors in embedding space.

Alternatively, direct cross-entropy training supervises the clean token
at every sampled noise level:
\begin{equation}
    \mathsf{L}_{\mathrm{CE}}(\mathbf{x};\theta)
    =\E_{\substack{t\sim\Unif(0,1)\\
        Z_t\sim q_{t\mid\mathbf{x}}(\cdot\mid\mathbf{x})}}
    \left[-\sum_{\ell=1}^L
        \log\langle\mathbf{x}^{\theta,\ell}(Z_t,t),
        \mathbf{x}^\ell\rangle\right].
    \label{eq:app-ce-loss}
\end{equation}
This approach is used by CDCD \citep{dieleman2022continuous} and LangFlow
\citep{chen2026langflow}, with model-specific representations and
parameterizations. Analog Bits uses squared-error regression by default,
but also investigates sigmoid and softmax cross-entropy variants
\citep[Appendix B]{chen2023analog}. SED instead combines a mean-squared
error diffusion loss in a fixed embedding space with a separate
reconstruction cross-entropy loss for its trainable readout
\citep[Section 3, Eqs.~(4)--(5)]{strudel2022self}.
Both objectives above are averaged over
$\mathbf{x}\sim p_{\mathrm{data}}$ during training. Unlike the diffusion
term of the NELBO, this cross-entropy directly supervises token
probabilities rather than their embedded mean; the unweighted objective
above is not the NELBO.

\paragraph{Embedding normalization.}
These objectives also put different pressures on learned embeddings.
The diffusion term alone vanishes if all token embeddings coincide,
whereas cross-entropy favors distinguishable tokens and can encourage
increasing their separation relative to the noise. To control this scale,
embeddings are constrained to a sphere, with each row of $\Emat$ having
norm $1$ or $\sqrt{d}$, where $d$ is the embedding dimension
\citep{dieleman2022continuous,chen2026langflow,yang2026replaid}.
This prevents the embedding norms from vanishing or diverging, although
normalization alone does not prevent different tokens from sharing the
same embedding.

\paragraph{Self-conditioning.}
Self-conditioning lets the denoiser refine an earlier prediction of the
clean sample alongside the current noisy latent. Introduced in Analog Bits
\citep{chen2023analog} and adopted for token embeddings in SED
\citep{strudel2022self}, it conditions on predicted analog-bit vectors
and continuous embeddings, respectively. In our token-probability
parameterization, we express this mechanism by augmenting the network
input as $\mathbf{x}^\theta(z_t,t,\hat{\mathbf{x}})$, where
$\hat{\mathbf{x}}$ is a previous prediction in $\Delta_K^L$ or a zero
matrix indicating that no prediction is available.
At sampling time, the prediction from the preceding denoising step is
reused without an additional network evaluation. Along the descending
grid $t_n>\dots>t_0$, this gives
\begin{equation}
    \hat{\mathbf{x}}^{(i)}
    =\mathbf{x}^\theta\big(z_{t_i},t_i,\hat{\mathbf{x}}^{(i+1)}\big),
    \qquad \hat{\mathbf{x}}^{(n+1)}=0.
\end{equation}
For $i\geq1$, this prediction parameterizes the transition from $t_i$
to $t_{i-1}$ and is passed to the next network evaluation.

During training, a noisy state is sampled directly, so no preceding
prediction is available. On a randomly selected subset of training
examples, a preliminary forward pass supplies the conditioning input:
\begin{equation}
    \hat{\mathbf{x}}
    =\sg{\mathbf{x}^\theta(z_t,t,0)}.
\end{equation}
The training loss is then evaluated using
$\mathbf{x}^\theta(z_t,t,\hat{\mathbf{x}})$, with gradients flowing only
through this second pass. The remaining examples use
$\hat{\mathbf{x}}=0$, so the network also learns to denoise without a
previous prediction, as required at the first sampling step. The fraction
of self-conditioned training examples is a model-specific choice.

\paragraph{The noise schedule.}
The noise schedule determines how quickly the forward process removes
information from the clean embeddings. We parameterize it by the increasing
negative log signal-to-noise ratio $\gamma(t)=-\log\SNR(t)$, so that
\begin{equation}
    \alpha_t=\sqrt{\operatorname{sigmoid}(-\gamma(t))},
    \qquad
    \sigma_t=\sqrt{\operatorname{sigmoid}(\gamma(t))}.
    \label{eq:app-noise-schedule}
\end{equation}
Thus increasing $\gamma(t)$ decreases the signal and increases the noise
in $Z_t=\alpha_t\mathbf{X}\Emat+\sigma_t\varepsilon$.
The endpoint $\alpha_1=0$ of \Cref{sec:bg-continuous}
correspond to $\gamma(1)=+\infty$; practical
schedules clip $\gamma$ to finite values $\gamma_{\text{min}}, \gamma_{\text{max}}$.
In practice, the networks receive the noise level as $\gamma(t)$ rather
than $t$: the notation $\mathbf{x}^\theta(z_t,t)$ of the main text stands
for $\mathbf{x}^\theta(z_t,\gamma(t))$, and likewise for the student and
auxiliary networks.

LangFlow \citep{chen2026langflow} learns a schedule that aims to distribute
information loss evenly over time. It fits a scaled Gumbel cumulative
distribution function to the observed token cross-entropy as a function
of $\gamma$, learning its location $\mu$, scale $b>0$, and amplitude by
squared-error regression. The corresponding quantile schedule is
\begin{equation}
    \gamma(t)=\mu-b\log(-\log t),\qquad 0<t<1.
    \label{eq:gumbel-schedule}
\end{equation}
This allocates more noise levels to regions where the fitted
cross-entropy changes rapidly; in practice, the noise range is clipped.

RePlaid \citep{yang2026replaid} instead learns two endpoints and a monotone
interpolation between them:
\begin{equation}
    \gamma(t)=\gamma_0+(\gamma_1-\gamma_0)\tilde\gamma(t),
    \qquad \tilde\gamma(0)=0,\quad\tilde\gamma(1)=1.
    \label{eq:monotone-schedule}
\end{equation}
The endpoints are optimized using the diffusion loss, while the monotone
network $\tilde\gamma$ is trained to reduce the Monte Carlo variance of
its estimator. For fixed endpoints and a denoiser taking $\gamma(t)$ as
input, the continuous-time bound is invariant to the interior schedule,
so its shape can be adjusted to improve estimation efficiency
\citep{kingma2021vdm}.

In our experiments, we reuse the teacher's pretrained noise schedule and
keep it fixed throughout distillation and sampling.

\subsection{Discrete diffusion: the bound, the plug-in and the concrete score}
\label{app:discrete}

Discrete diffusion progressively corrupts a token sequence and learns to
reverse this process to generate data. We consider two choices of the
replacement distribution $\pi$: a point mass on an additional absorbing
$\mathtt{[MASK]}$ token for masked diffusion, or the uniform distribution
over the vocabulary for uniform diffusion. The process starts directly
from the clean sequence, $Z_0=\mathbf{X}$, so $p_0=p_{\mathrm{data}}$.
We describe the forward process, the reverse model and its training bound,
then explain how denoiser predictions parameterize reverse transitions
and how this relates to concrete scores in continuous time.

\paragraph{Forward process.}
The forward process corrupts each position independently by replacing its
token with a draw from $\pi$. Let $(\alpha_t)_{t\in[0,1]}$ be a decreasing
schedule with $\alpha_0=1$ and $\alpha_1=0$. Between times $s<t$, a token
is retained with probability $\alpha_t/\alpha_s$ and otherwise resampled
from $\pi$, giving the transition
\begin{equation}
    q^\ell_{t \mid s}(z_t^\ell \mid z_s^\ell) \, = \, \mathrm{Cat}\Big( z_t^\ell ; \, \tfrac{\alpha_t}{\alpha_s} \, z_s^\ell \, + \, \big( 1 - \tfrac{\alpha_t}{\alpha_s} \big) \pi \Big) \, , \qquad 0 \leq s < t \leq 1 \, .
    \label{eq:discrete-transition}
\end{equation}
In particular, setting $s=0$ gives the conditional law given the clean token:
\begin{equation}
    q^\ell_{t \mid \mathbf{x}}(z_t^\ell \mid \mathbf{x}^\ell) \, = \, \mathrm{Cat}\big( z_t^\ell ; \, \alpha_t \mathbf{x}^\ell + (1 - \alpha_t) \pi \big) \, ,
    \label{eq:discrete-marginal}
\end{equation}
These conditional laws and the forward transitions determine the bridges
$q_{s\mid t,\mathbf{x}}$ by Bayes' rule. At $t=1$, every position follows
$\mathrm{Cat}(\pi)$ independently of the clean sequence, so the terminal
distribution is the product prior $p_1=\mathrm{Cat}(\pi)^{\otimes L}$.

\paragraph{Generative model and KL bound.}
Generation starts from this prior and applies learned reverse transitions
along a grid $0=t_0<\dots<t_n=1$, with $s_i=t_{i-1}$.
Each transition predicts the positions independently conditional on the
full current sequence, yielding
\begin{equation}
    p^\theta(\mathbf{x}) \, = \, \sum_{z_{t_1 : t_n}} p_1(z_1) \prod_{i = 1}^n p^\theta_{s_i \mid t_i}(z_{s_i} \mid z_{t_i}) \, , \qquad p^\theta_{s \mid t}(z_s \mid z_t) = \prod_{\ell = 1}^L p^{\theta, \ell}_{s \mid t}(z_s^\ell \mid z_t) \, ,
    \label{eq:discrete-generative}
\end{equation}
where $z_{s_1}=\mathbf{x}$ is the generated sequence at time zero.
To relate the generated distribution to the data distribution, let
$p_{s\mid t}$ denote the exact reverse transition induced by the forward
process with $\mathbf{X}\sim p_{\mathrm{data}}$.
The exact and learned reverse chains share the same terminal prior.
Applying the chain rule for KL divergence to their trajectory laws,
then marginalizing to time zero, bounds the discrepancy between data and
generated samples by the accumulated transition errors:
\begin{equation}
    \KL{p_{\mathrm{data}}}{p^\theta} \, \leq \, \sum_{i = 1}^n \E_{Z_{t_i} \sim p_{t_i}} \Big[ \KL{p_{s_i \mid t_i}(\cdot\mid Z_{t_i})}{p^\theta_{s_i \mid t_i}(\cdot\mid Z_{t_i})} \Big] \, .
    \label{eq:discrete-bound}
\end{equation}
In general, the exact reverse transition contains dependencies between
positions that the factorized model cannot represent. For a fixed $z_t$,
each transition error separates into tokenwise marginal errors and a
term accounting for these dependencies:
\begin{equation}
    \KL{p_{s \mid t}(\cdot\mid z_t)}{p^\theta_{s \mid t}(\cdot\mid z_t)} \, = \, \sum_{\ell = 1}^L \KL{p^\ell_{s \mid t}(\cdot\mid z_t)}{p^{\theta, \ell}_{s \mid t}(\cdot\mid z_t)} \; + \; \mathcal{I}_{s \mid t}(z_t) \, ,
    \label{eq:discrete-tokenwise}
\end{equation}
where $ p^\ell_{s \mid t}(\cdot\mid z_t) = \sum_{z_s^{-\ell}} p_{s \mid t}(z_s \mid z_t) $ is the $ \ell $-th token marginal of the exact transition and $ \mathcal{I}_{s \mid t}(z_t) = \KL{p_{s \mid t}(\cdot\mid z_t)}{\prod_\ell p^\ell_{s \mid t}(\cdot\mid z_t)} $ measures the conditional dependence between positions under the exact
reverse transition.
Only the marginal errors depend on $\theta$. Thus, within the factorized
family, the optimal transition matches each exact token marginal; the
remaining term $\mathcal{I}_{s\mid t}(z_t)$ is the irreducible error due
to factorization at this time step. This motivates parameterizing the
model through predictions that recover these marginals.

\paragraph{The bridge plug-in.}
The bound asks for a reverse transition, whereas the network produces a point of $ \Delta_K^L $.
Nearly every work joins the two by \emph{plugging in}, evaluating the bridge as though the prediction were the clean token \citep{sahoo2024simple, schiff2025simple},
\begin{equation}
    p^{\theta, \ell}_{s \mid t}(\cdot\mid z_t) \, = \, q^\ell_{s \mid t, \mathbf{x}} \big(\cdot\mid z_t^\ell, \, \mathbf{x}^{\theta, \ell}(z_t, t) \big) \, .
    \label{eq:discrete-plugin}
\end{equation}
Whether this costs anything is the question of whether the token marginals of \eqref{eq:discrete-tokenwise} are themselves of that form, and they are.

\begin{proposition}[\citealp{gourevitch2026uniform}]
\label{prop:plugin-exact}
Let $ \mathbf{x}^{\mathrm{loo}, \ell}(z_t, t) = \E [ \mathbf{x}^\ell \mid z_t^{-\ell} ] $ be the leave-one-out denoiser, and let the bridge be extended to a simplex-valued first argument by its Bayes-ratio form, as both processes admit.
Then, for every position $ \ell $,
\begin{equation}
    p^\ell_{s \mid t}(\cdot\mid z_t) \, = \, q^\ell_{s \mid t, \mathbf{x}} \big(\cdot\mid z_t^\ell, \, \mathbf{x}^{\mathrm{loo}, \ell}(z_t, t) \big) \, .
    \label{eq:discrete-exactness}
\end{equation}
When $ \pi $ is uniform this is the only simplex-valued field with that property; when $ \pi $ is the mask, the leave-one-out denoiser agrees with the denoiser $ \E [ \mathbf{x}^\ell \mid z_t ] $ at every masked position, and the latter has it too.
\end{proposition}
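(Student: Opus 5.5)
The plan is to handle position $\ell$ as follows. Fix $z_t$ and write the exact reverse transition by marginalizing over the clean sequence:
\[
p_{s\mid t}(z_s\mid z_t)=\sum_{\mathbf{x}} p_{\mathbf{x}\mid t}(\mathbf{x}\mid z_t)\, q_{s\mid t,\mathbf{x}}(z_s\mid z_t,\mathbf{x}).
\]
The forward process factorizes over positions, so the bridge also factorizes:
\[
q_{s\mid t,\mathbf{x}}(z_s\mid z_t,\mathbf{x})=\prod_{\ell'} q^{\ell'}_{s\mid t,\mathbf{x}}(z_s^{\ell'}\mid z_t^{\ell'},\mathbf{x}^{\ell'}).
\]
Summing over $z_s^{-\ell}$ therefore removes every factor except the $\ell$-th. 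This leaves
\[
p^\ell_{s\mid t}(z_s^\ell\mid z_t)=\sum_{k} \mathbb{P}(\mathbf{X}^\ell=e^k\mid Z_t=z_t)\, q^\ell_{s\mid t,\mathbf{x}}(z_s^\ell\mid z_t^\ell,e^k).
\]
The remaining work is to rewrite this mixture over the full posterior as a single bridge evaluated at a simplex-valued argument.

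\textbf{Step 1 (Bayes-ratio linearity).} Use the Bayes form of the bridge:
\[
q^\ell_{s\mid t,\mathbf{x}}(z_s^\ell\mid z_t^\ell,x)=\frac{q_{t\mid s}(z_t^\ell\mid z_s^\ell)\,q_{s\mid\mathbf{x}}(z_s^\ell\mid x)}{q_{t\mid\mathbf{x}}(z_t^\ell\mid x)}.
\]
In both discrete processes, $q_{s\mid\mathbf{x}}(z\mid x)=\langle \alpha_s x+(1-\alpha_s)\pi, e_z\rangle$ is affine in $x$. The same holds for $q_{t\mid\mathbf{x}}$. The simplex extension is defined as this ratio with $x$ allowed to lie in the simplex. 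So for a mixture $\bar x=\sum_k w_k e^k$, the numerator is linear in $x$ and the denominator is linear in $x$.

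\textbf{Step 2 (leave-one-out posterior).} Factor the posterior of token $\ell$ with Bayes' rule, conditioning first on $z_t^{-\ell}$ and then on $z_t^\ell$:
\[
\mathbb{P}(\mathbf{X}^\ell=e^k\mid z_t)\propto \mathbb{P}(\mathbf{X}^\ell=e^k\mid z_t^{-\ell})\; q_{t\mid\mathbf{x}}(z_t^\ell\mid e^k).
\]
This holds because $Z_t^\ell$ depends on $\mathbf{X}$ only through $\mathbf{X}^\ell$ and is conditionally independent of $Z_t^{-\ell}$. Substitute this into the mixture. The factor $q_{t\mid\mathbf{x}}(z_t^\ell\mid e^k)$ cancels the denominator of each bridge term. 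What remains is
\[
\frac{\sum_k w_k\, q_{t\mid s}(z_t^\ell\mid z_s^\ell)\,q_{s\mid\mathbf{x}}(z_s^\ell\mid e^k)}{\sum_k w_k\, q_{t\mid\mathbf{x}}(z_t^\ell\mid e^k)},\qquad w=\mathbf{x}^{\mathrm{loo},\ell}(z_t,t).
\]
By the linearity in Step 1, this is exactly $q^\ell_{s\mid t,\mathbf{x}}(\cdot\mid z_t^\ell,\mathbf{x}^{\mathrm{loo},\ell})$. That proves \eqref{eq:discrete-exactness}.

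\textbf{Step 3 (uniqueness and the mask case).} This is where I expect the main obstacle: the extended bridge is only a ratio of affine maps, so injectivity in $x$ is not automatic.

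For uniform $\pi$, the map $x\mapsto q^\ell_{s\mid t,\mathbf{x}}(\cdot\mid z_t^\ell,x)$ is a projective-linear map from $\Delta_K$ to $\Delta_K$. I would show it is injective by writing its numerator coordinatewise. With $r=\alpha_{t\mid s}$, the factor $q_{t\mid s}(z_t^\ell\mid z)$ equals $r\mathbb{1}[z=z_t^\ell]+(1-r)/K$. This is positive, so it can be divided out coordinatewise. Then $\alpha_s x+(1-\alpha_s)/K$ is recovered up to a scalar. Normalization on the simplex fixes that scalar, and $\alpha_s>0$ lets us solve for $x$. This needs $0<\alpha_t<\alpha_s$ and assumes nondegenerate $s<t$.

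For the mask, take $z_t^\ell=\mathtt{[MASK]}$. Then $q_{t\mid\mathbf{x}}(\mathtt{[MASK]}\mid e^k)=1-\alpha_t$ is constant in $k$. The reweighting in Step 2 is therefore trivial, and $\E[\mathbf{X}^\ell\mid z_t]=\mathbf{x}^{\mathrm{loo},\ell}$. Hence the ordinary denoiser also has the exactness property. At unmasked positions the bridge is a point mass regardless of $x$.
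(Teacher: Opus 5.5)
The paper does not prove this proposition; it cites it from \citet{gourevitch2026uniform}, so there is no in-paper argument to compare with. Your proof is correct and complete. It marginalizes the factorized bridge to position $\ell$, reweights the leave-one-out posterior by the likelihood of $z_t^\ell$, and lets that likelihood cancel the Bayes-ratio denominator. Linearity of both numerator and denominator in the simplex argument then gives the plug-in form. The paper states, for concrete scores, that the plug-in is exact at the leave-one-out denoiser and that only the $\ell$-th factor of the forward kernel changes, which is consistent with this mechanism.

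Two points need a clause each to make the argument airtight.

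\textbf{Zero-likelihood terms in Step 2.} In the mask process with $z_t^\ell$ unmasked and $k$ not the observed token, $q_{t\mid\mathbf{x}}(z_t^\ell\mid e^k)=0$, so the bridge at $e^k$ is undefined. The term can still be dropped, for two reasons. Its posterior weight is zero. The numerator $q_{t\mid s}(z_t^\ell\mid z)\,q_{s\mid\mathbf{x}}(z\mid e^k)$ also vanishes for every $z$, because these nonnegative terms sum over $z$ to $q_{t\mid\mathbf{x}}(z_t^\ell\mid e^k)=0$. Hence extending the sums to all $k$ is legitimate.

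\textbf{Unmasked positions in Step 3.} The extended bridge is a point mass only for arguments that put positive mass on the observed token; otherwise the ratio is $0/0$. It is not a point mass "regardless of $x$". Your conclusion still holds, because the denoiser $\E[\mathbf{x}^\ell\mid z_t]$ equals the one-hot observed token there.

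Your injectivity argument for the uniform case, which divides out the strictly positive factor $q_{t\mid s}(z_t^\ell\mid\cdot)$ when $\alpha_t<\alpha_s$ and then fixes the scale by normalization, is sound.
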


The plug-in at $ \mathbf{x}^\theta = \mathbf{x}^{\mathrm{loo}} $ therefore cancels the sum in \eqref{eq:discrete-tokenwise}, leaving only the factorization error.
The object it fits is the leave-one-out denoiser, which does not read the position it predicts.

\paragraph{Continuous time.}
Refining the grid removes the factorization error, two positions both jumping within one interval being a second-order event, so \eqref{eq:discrete-bound} becomes tight over the family \eqref{eq:discrete-generative} in the limit.
That limit is written on ratios rather than on transitions.
For a state $ z_t $, a position $ \ell $ and a token $ k $, write $ z_t^{\ell \to k} $ for $ z_t $ with its $ \ell $-th token replaced by $ k $; these are the states at Hamming distance one from $ z_t $.
The \emph{concrete score} of the marginal $ p_t $ collects the corresponding ratios,
\begin{equation}
    s^\ell_t(z_t)_k \, = \, \frac{p_t(z_t^{\ell \to k})}{p_t(z_t)} \, ,
    \label{eq:concrete-score}
\end{equation}
and the same ratios for $ q_{t \mid \mathbf{x}} $ are closed-form, only the $ \ell $-th factor of \eqref{eq:discrete-marginal} changing,
\begin{equation}
    s^\ell_t(z_t \mid \mathbf{x})_k \, = \, \frac{\big\langle \alpha_t \mathbf{x}^\ell + (1 - \alpha_t) \pi , \, e_k \big\rangle}{\big\langle \alpha_t \mathbf{x}^\ell + (1 - \alpha_t) \pi , \, z_t^\ell \big\rangle} \, .
    \label{eq:concrete-score-conditional}
\end{equation}
The second is defined for any point of $ \Delta_K^L $ in place of $ \mathbf{x} $, so the plug-in of \eqref{eq:discrete-plugin} carries over to it,
\begin{equation}
    s^{\theta, \ell}_t(z_t) \, = \, s^\ell_t \big( z_t \mid \mathbf{x}^\theta(z_t, t) \big) \, ,
    \label{eq:concrete-score-model}
\end{equation}
and returns \eqref{eq:concrete-score} exactly at $ \mathbf{x}^\theta = \mathbf{x}^{\mathrm{loo}} $, as in \Cref{prop:plugin-exact} \citep{gourevitch2026uniform}.
The limit of \eqref{eq:discrete-bound} compares the two scores through a Bregman divergence, weighted by the rate at which the reverse process leaves $ Z_t^\ell $ \citep{lou2024discrete},
\begin{equation}
    \mathsf{L}_{\mathrm{d}}(\theta) \, = \, \int_0^1 R_t \, \E_{Z_t \sim p_t} \bigg[ \sum_{\ell = 1}^L \big\langle \pi , \, Z_t^\ell \big\rangle \sum_{e_k \neq Z_t^\ell} d_F \Big( s^\ell_t(Z_t)_k \, , \; s^{\theta, \ell}_t(Z_t)_k \Big) \bigg] \, \mathrm{d} t \, ,
    \label{eq:discrete-ct}
\end{equation}
with $ R_t = - \alpha_t' / \alpha_t $ the substitution rate of \eqref{eq:discrete-transition}, the reverse rate towards $ z_t^{\ell \to k} $ being $ R_t \langle \pi, z_t^\ell \rangle \, s^\ell_t(z_t)_k $,
generated by $ F(u) = u \log u - u $, so that $ d_F(a, b) = a \log (a / b) - a + b $.
The data score \eqref{eq:concrete-score} is not available, but it is the average of \eqref{eq:concrete-score-conditional} over the denoising posterior, and a Bregman divergence is insensitive to that substitution up to a constant in $ \theta $: drawing $ \mathbf{X} \sim \post(\cdot\mid Z_t) $ and scoring against $ s^\ell_t(Z_t \mid \mathbf{X}) $ is the form actually minimized.
Nothing in \eqref{eq:discrete-ct} requires $ s^\theta_t $ to come from a denoiser: parameterizing the concrete score directly by a network is SEDD \citep{lou2024discrete}.
\section{Distributional distillation: discrepancies and gradients}
\label{app:distillation}

This appendix derives the objectives summarized in \Cref{tab:wip-local-discrepancies}, the gradient each method descends, and the identities relating them.
All methods are considered in the one-step setting of \Cref{sec:method-unified}: $ T = 1 $, the student starts from $ Z_1 \sim p_1 $, the terminal law of the diffusion considered, and its clean output $ \mathbf{X}^\eta \sim \genkernel_1^\eta(\cdot \mid Z_1) $ has law $ p_{\mathbf{x}}^\eta $, whose noised marginals are $ p_t^\eta $.

\subsection{Continuous distillation baselines: SiD, FGM and moment matching}
\label{app:continuous-baselines}
\label{app:sid}
\label{app:fgm-sid}

Score identity distillation, flow generator matching and moment matching distill continuous diffusion models on continuous data, the case $ \Emat = \Id $ of \Cref{sec:bg-continuous}.
Each is stated in a different field --- scores, velocities and conditional means --- and each estimates the gradient of the same objective differently.
We restate them on one path and identify what each one computes.

\paragraph{Path and fields.}
The student is deterministic, $ \mathbf{X}^\eta = \mathbf{x}^\eta(Z_1, 1) \in \R^d $ with $ Z_1 \sim p_1 = \gauss(0, \Id) $, and its output is corrupted along the path of \Cref{sec:bg-continuous},
\begin{equation}
    Z_t^\eta = \alpha_t \, \mathbf{X}^\eta + \sigma_t \varepsilon \, ,
    \qquad \varepsilon \sim \gauss(0, \Id) \, ,
    \label{eq:baseline-path}
\end{equation}
with $ p_t^\eta $ the law of $ Z_t^\eta $ and $ p_t $ that of the data under the same kernel $ q_{t \mid \mathbf{x}} $.
At each noise level the student law is described by any one of three fields, its conditional mean, its score and its velocity,
\begin{equation}
\begin{aligned}
    m_t^\eta(z_t) &= \E \big[ \mathbf{X}^\eta \mid Z_t^\eta = z_t \big] ,
    &s_t^\eta(z_t) &= \nabla_{z_t} \log p_t^\eta(z_t) , \\
    v_t^\eta(z_t) &= \E \big[ \dot{\alpha}_t \mathbf{X}^\eta + \dot{\sigma}_t \varepsilon \mid Z_t^\eta = z_t \big] ,
\end{aligned}
\label{eq:local-discrepancy-fields}
\end{equation}
and $ m_t, s_t, v_t $ denote the same fields under the data law.
Tweedie's identity, $ s_t^\eta = ( \alpha_t m_t^\eta - z_t ) / \sigma_t^2 $, and its velocity counterpart, $ v_t^\eta = \dot{\alpha}_t m_t^\eta + \dot{\sigma}_t ( z_t - \alpha_t m_t^\eta ) / \sigma_t $, make the three differences proportional at each noise level,
\begin{equation}
    s_t^\eta - s_t = \frac{\alpha_t}{\sigma_t^2} \big( m_t^\eta - m_t \big) ,
    \qquad
    v_t^\eta - v_t = \Big( \dot{\alpha}_t - \alpha_t \frac{\dot{\sigma}_t}{\sigma_t} \Big) \big( m_t^\eta - m_t \big) ,
    \label{eq:local-discrepancy-relations}
\end{equation}
so matching scores, velocities or conditional means is one criterion under three names, up to a weight in $ t $.
We work with the score.

\paragraph{The objective and its two gradient terms.}
The squared discrepancy between the two fields, integrated over noise levels, is
\begin{equation}
    \mathsf{L}(\eta) = \int_0^1 w(t) \, \E_{Z_t \sim p_t^\eta} \Big[ \big\Vert s_t^\eta(Z_t) - s_t(Z_t) \big\Vert^2 \Big] \, \mathrm{d} t \, .
    \label{eq:baseline-loss}
\end{equation}
Here $ \eta $ enters twice, through the law of the state and through the field evaluated at it.
Freezing one at a time gives two losses,
\begin{equation}
\begin{aligned}
    \mathsf{L}_1(\eta) \; &= \; \int_0^1 w(t) \, \E_{Z_1, \varepsilon} \Big[ \big\Vert \sg{s_t^\eta}(Z_t^\eta) - s_t(Z_t^\eta) \big\Vert^2 \Big] \, \mathrm{d} t \, , \\[3pt]
    \mathsf{L}_2(\eta) \; &= \; \int_0^1 w(t) \, \E_{Z_t \sim \sg{p_t^\eta}} \Big[ \big\Vert s_t^\eta(Z_t) - s_t(Z_t) \big\Vert^2 \Big] \, \mathrm{d} t \, ,
\end{aligned}
\label{eq:baseline-split}
\end{equation}
where $ \sg{\cdot} $ freezes the parameters of a field, or of the law a state is drawn from, but not the value at which the field is read.
Both equal $ \mathsf{L}(\eta) $, and their gradients are the two terms of \eqref{eq:wip-distill-grad},
\begin{equation}
    \nabla_\eta \mathsf{L}_1 = (A) \, , \qquad \nabla_\eta \mathsf{L}_2 = (B) \, , \qquad \nabla_\eta \mathsf{L} = (A) + (B) \, .
    \label{eq:baseline-two-losses}
\end{equation}
For the reverse KL the second vanishes, $ \E_{p_t^\eta} [ \nabla_\eta \log p_t^\eta ] = \nabla_\eta \! \int p_t^\eta = 0 $.
Here it does not, and $ \mathsf{L}_2 $ cannot be differentiated as it stands: the auxiliary network estimates $ s_t^\eta $ at the current parameters, not its derivative $ \nabla_\eta s_t^\eta $.

\paragraph{Trading the student field for the sample that produced it.}
The unknown field is removed from one side of the squared norm by an identity on the student's own path.

\begin{lemma}
\label{lem:field-swap}
Let $ \sigma_t > 0 $ on the support of $ w $. Then $ \mathsf{L}(\eta) = \mathsf{L}^{\mathrm{proj}}(\eta) $, where
\begin{equation}
    \mathsf{L}^{\mathrm{proj}}(\eta) = \int_0^1 w(t) \, \E_{Z_1, \varepsilon} \Big[ \big( s_t^\eta - s_t \big)^{\!\top} \big( s_t ( \cdot \mid \mathbf{X}^\eta ) - s_t \big) (Z_t^\eta) \Big] \, \mathrm{d} t \, ,
    \label{eq:projected-loss}
\end{equation}
and $ s_t(z_t \mid x) = \nabla_{z_t} \log q_{t \mid \mathbf{x}}(z_t \mid x) = ( \alpha_t x - z_t ) / \sigma_t^2 $ is the conditional score.
\end{lemma}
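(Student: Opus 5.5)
The plan is to prove the identity at each fixed noise level $t$ with $\sigma_t>0$, then integrate against $w(t)$. Fix such a $t$ and write $\Delta_t = s_t^\eta - s_t$. The key fact I will use is the tower property on the student's own path. By Bayes' rule, $p_t^\eta(z) = \E_{\mathbf{X}^\eta}[q_{t\mid\mathbf{x}}(z\mid\mathbf{X}^\eta)]$. Differentiating under the integral then gives the denoising score identity
\begin{equation}
    s_t^\eta(z) = \E\big[ s_t(z\mid\mathbf{X}^\eta) \,\big|\, Z_t^\eta = z \big] ,
\end{equation}
which is equivalent to Tweedie's identity $s_t^\eta = (\alpha_t m_t^\eta - z)/\sigma_t^2$ already recalled above.

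Next I rewrite the integrand of $\mathsf{L}^{\mathrm{proj}}$. Since $\Delta_t(Z_t^\eta)$ and $s_t(Z_t^\eta)$ are functions of $Z_t^\eta$ alone, I condition on $Z_t^\eta$ inside $\E_{Z_1,\varepsilon}$:
\begin{equation}
    \E\big[ \Delta_t(Z_t^\eta)^\top \big( s_t(Z_t^\eta\mid\mathbf{X}^\eta) - s_t(Z_t^\eta) \big) \big]
    = \E\big[ \Delta_t(Z_t^\eta)^\top \big( \E[ s_t(Z_t^\eta\mid\mathbf{X}^\eta)\mid Z_t^\eta] - s_t(Z_t^\eta) \big) \big] .
\end{equation}
By the identity above, the inner conditional expectation is $s_t^\eta(Z_t^\eta)$. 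The right-hand side therefore equals $\E_{Z_t\sim p_t^\eta}\|\Delta_t(Z_t)\|^2$, because $Z_t^\eta$ has law $p_t^\eta$. Multiplying by $w(t)$ and integrating over $t$ gives $\mathsf{L}^{\mathrm{proj}}(\eta) = \mathsf{L}(\eta)$.

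The main obstacle is justifying these manipulations, not the algebra. Three things must hold.
\begin{itemize}
    \item Differentiation under the integral in $p_t^\eta$ must be valid. The Gaussian kernel is smooth for $\sigma_t>0$, and this is exactly why the lemma assumes $\sigma_t>0$ on the support of $w$.
    \item $p_t^\eta>0$ everywhere, so $s_t^\eta$ is well defined. This is automatic for a Gaussian convolution.
    \item The cross term must be integrable, so that conditioning is legitimate. I would assume square-integrability of $s_t(Z_t^\eta\mid\mathbf{X}^\eta)$ and of $s_t(Z_t^\eta)$, which follows from a finite second moment of $\mathbf{X}^\eta$ and of the data. I would then apply Cauchy--Schwarz to bound the cross term by the finite quantities $\E\|\Delta_t\|^2$ and $\E\|s_t(\cdot\mid\mathbf{X}^\eta)-s_t\|^2$. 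If $\mathsf{L}$ is infinite, I would interpret the claim in the extended sense or restrict to the finite case.
\end{itemize}
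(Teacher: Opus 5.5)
Your proof is correct and follows the paper's argument. The paper writes the same denoising score identity as $p_t^\eta s_t^\eta = \int q_{t\mid\mathbf{x}}(\cdot\mid x)\, s_t(\cdot\mid x)\, p_{\mathbf{x}}^\eta(x)\,\mathrm{d}x$, tests it against a square-integrable $f = s_t^\eta - s_t$ (this is your tower-property step), and then subtracts $\E[f^\top s_t]$ from both sides. For your integrability bullet, note that on the student path $s_t(Z_t^\eta\mid\mathbf{X}^\eta) = -\varepsilon/\sigma_t$, so finiteness of $\E_{p_t^\eta}\|\Delta_t\|^2$ alone makes every term square-integrable, and no moment assumption on the data is needed.
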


\begin{proof}
$ p_t^\eta(z_t) \, s_t^\eta(z_t) = \nabla_{z_t} p_t^\eta(z_t) = \int \nabla_{z_t} q_{t \mid \mathbf{x}}(z_t \mid x) \, p_{\mathbf{x}}^\eta(x) \, \mathrm{d} x = \int q_{t \mid \mathbf{x}}(z_t \mid x) \, s_t(z_t \mid x) \, p_{\mathbf{x}}^\eta(x) \, \mathrm{d} x $, so that $ \E_{Z_t \sim p_t^\eta} [ f(Z_t)^\top s_t^\eta(Z_t) ] = \E_{Z_1, \varepsilon} [ f(Z_t^\eta)^\top s_t(Z_t^\eta \mid \mathbf{X}^\eta) ] $ for any square-integrable $ f $.
Take $ f = s_t^\eta - s_t $ and subtract $ \E [ f^\top s_t ] $ from both sides.
\end{proof}

The marginal field on the left is unknown; the conditional field on the right is explicit, and differentiable in $ \eta $ through the sample that produced the state.
\Cref{eq:projected-loss} is the form SiD calls projected score matching \citep{zhou2024score}.
Freezing the field in it, as in \eqref{eq:baseline-split}, isolates the part of its gradient due to the sampling operation,
\begin{equation}
    \mathsf{L}_1^{\mathrm{proj}}(\eta) = \int_0^1 w(t) \, \E_{Z_1, \varepsilon} \Big[ \big( \sg{s_t^\eta} - s_t \big)^{\!\top} \big( s_t ( \cdot \mid \mathbf{X}^\eta ) - s_t \big) (Z_t^\eta) \Big] \, \mathrm{d} t \, ,
    \label{eq:projected-split}
\end{equation}
where $ \eta $ now reaches the objective twice over, through the state and through $ \mathbf{X}^\eta $ inside the conditional score.
That second route is what the swap has bought: $ \mathsf{L}_1^{\mathrm{proj}} $ and $ \mathsf{L}_1 $ agree in value, both being $ \mathsf{L}(\eta) $, but not in gradient, part of $ (B) $ having moved into the sampling operation.
The gap between them is exactly the missing term.

\begin{proposition}[\citealp{huang2024flow}]
\label{prop:b-term}
$ \nabla_\eta \mathsf{L}_2(\eta) = 2 \, \nabla_\eta \big( \mathsf{L}_1^{\mathrm{proj}}(\eta) - \mathsf{L}_1(\eta) \big) $.
\end{proposition}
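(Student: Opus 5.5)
The plan is to reduce the statement to one identity: differentiating the ``field swap'' of \Cref{lem:field-swap} with respect to $\eta$, while the test field is held fixed. Fix a parameter value $\eta_0$, at which all gradients are evaluated. Write $a_t = s_t^{\eta_0}$ for the frozen student score and $g_t = a_t - s_t$; both are fixed, $\eta$-independent fields. In this notation $\sg{s_t^\eta}$ in \eqref{eq:baseline-split} and \eqref{eq:projected-split} is exactly $a_t$.

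\textbf{Step 1: simplify the difference.} For each $\eta$, the integrand of $\mathsf{L}_1^{\mathrm{proj}}$ is $g_t^\top\big(s_t(\cdot\mid\mathbf{X}^\eta)-s_t\big)(Z_t^\eta)$. The integrand of $\mathsf{L}_1$ is $g_t^\top(a_t - s_t)(Z_t^\eta)$. The $s_t$ terms cancel, so
\begin{equation}
    \mathsf{L}_1^{\mathrm{proj}}(\eta)-\mathsf{L}_1(\eta)
    =\int_0^1 w(t)\,H_t(\eta)\,\mathrm{d}t,
    \qquad
    H_t(\eta)=\E_{Z_1,\varepsilon}\Big[g_t(Z_t^\eta)^\top\big(s_t(Z_t^\eta\mid\mathbf{X}^\eta)-a_t(Z_t^\eta)\big)\Big].
\end{equation}

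\textbf{Step 2: rewrite $H_t$ as an expectation of marginal fields.} The first line of the proof of \Cref{lem:field-swap} uses only that the test function $f$ is a fixed square-integrable field. It holds for every $\eta$, not only at $\eta_0$. Apply it with $f = g_t$, which does not depend on $\eta$:
\begin{equation}
    \E_{Z_1,\varepsilon}\big[g_t(Z_t^\eta)^\top s_t(Z_t^\eta\mid\mathbf{X}^\eta)\big]=\E_{Z_t\sim p_t^\eta}\big[g_t(Z_t)^\top s_t^\eta(Z_t)\big].
\end{equation}
The reparameterized term involving $a_t$ is likewise an expectation under $p_t^\eta$. Hence
\begin{equation}
    H_t(\eta)=\int p_t^\eta(z)\,g_t(z)^\top\big(s_t^\eta(z)-a_t(z)\big)\,\mathrm{d}z .
\end{equation}

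\textbf{Step 3: differentiate at $\eta_0$.} By the product rule, $\nabla_\eta H_t$ has two pieces. The first is $\int \nabla_\eta p_t^\eta\, g_t^\top(s_t^\eta-a_t)$. It vanishes at $\eta_0$, because $s_t^{\eta_0}=a_t$ there. The second is
\begin{equation}
    \E_{Z_t\sim p_t^{\eta_0}}\big[g_t^\top\nabla_\eta s_t^\eta\big]
    =\E_{p_t^{\eta_0}}\big[(s_t^{\eta}-s_t)^\top\nabla_\eta s_t^\eta\big].
\end{equation}
This is exactly one half of the gradient of the integrand of $\mathsf{L}_2$, in which the law is frozen and only the field is differentiated: $\nabla\|s_t^\eta-s_t\|^2=2(s_t^\eta-s_t)^\top\nabla_\eta s_t^\eta$, since $s_t$ does not depend on $\eta$. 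Integrating against $w(t)$ gives $\nabla_\eta\mathsf{L}_2=2\nabla_\eta(\mathsf{L}_1^{\mathrm{proj}}-\mathsf{L}_1)$ at $\eta_0$. Since $\eta_0$ was arbitrary, the statement follows.

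\textbf{Main obstacle.} The delicate point is Step 2. The identity must be used as a function of $\eta$ for a fixed test field, not merely as an equality of values at $\eta_0$. This is what converts the pathwise derivative through $(Z_t^\eta,\mathbf{X}^\eta)$ in $\mathsf{L}_1^{\mathrm{proj}}$ into a derivative of the field $s_t^\eta$. The remaining work is justifying the interchanges of $\nabla_\eta$ with the integrals over $z$, $(Z_1,\varepsilon)$ and $t$. I would assume enough regularity for this: $\sigma_t>0$ on the support of $w$, so that the Gaussian smoothing makes $p_t^\eta$ and $s_t^\eta$ smooth in $(z,\eta)$, together with dominated-convergence bounds on the student map $\mathbf{x}^\eta$.
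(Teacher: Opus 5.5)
Your proof is correct and takes the route the paper indicates. The paper gives no argument of its own: it defers to FGM, which derives the statement from the identity of \Cref{lem:field-swap}. That is what you do: with the test field $g_t=s_t^{\eta_0}-s_t$ held fixed, the swap identity holds for every $\eta$, so $\mathsf{L}_1^{\mathrm{proj}}-\mathsf{L}_1$ becomes $\int w(t)\int p_t^\eta\, g_t^\top(s_t^\eta-a_t)\,\mathrm{d}z\,\mathrm{d}t$; its $\nabla_\eta p_t^\eta$ term vanishes at $\eta_0$, and the remaining term is exactly half of $\nabla_\eta\mathsf{L}_2$.
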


FGM states this in the velocity field and proves it from the identity of \Cref{lem:field-swap} \citep{huang2024flow}.
The exact gradient of \eqref{eq:baseline-loss} is therefore the gradient of a computable loss,
\begin{equation}
    \nabla_\eta \mathsf{L}(\eta) \, = \, \nabla_\eta \big( 2 \, \mathsf{L}_1^{\mathrm{proj}}(\eta) - \mathsf{L}_1(\eta) \big) \, ,
    \label{eq:baseline-exact-grad}
\end{equation}
in which the student field appears only frozen, at the current parameters, which is what the auxiliary network provides.

\paragraph{What each method computes.}
The three baselines descend $ \mathsf{L}_1^{\mathrm{proj}} $ and $ \mathsf{L}_1 $ in different proportions.
FGM takes the combination \eqref{eq:baseline-exact-grad} and so descends the exact gradient \citep{huang2024flow}.
SiD descends $ \mathsf{L}_1^{\mathrm{proj}} - \alpha \mathsf{L}_1 $ for a scalar $ \alpha $ \citep{zhou2024score}, whose gradient is, by \Cref{prop:b-term},
\begin{equation}
    \nabla_\eta \big( \mathsf{L}_1^{\mathrm{proj}} - \alpha \, \mathsf{L}_1 \big) \, = \, ( 1 - \alpha ) \, (A) \, + \, \tfrac{1}{2} \, (B) \, .
    \label{eq:sid-grad}
\end{equation}
FGM is therefore SiD at $ \alpha = 1/2 $, doubled, the factor being absorbed in the step size.
The values reported as best for SiD, $ \alpha \in [0.75, 1.2] $ \citep{zhou2024score}, sit past that point; at $ \alpha = 1 $ the pathwise term disappears and only $ \tfrac{1}{2} (B) $ is left, and beyond it the pathwise term is descended with the opposite sign.
The one-step version of moment matching descends \eqref{eq:projected-split}, the case $ \alpha = 0 $, with $ Z_t^\eta $ frozen in addition \citep{salimans2024multistep}.
The three are one family, read off a single pair of losses.

\subsection{Discrete distillation as a Bregman divergence between concrete scores}
\label{app:bregman-score}

D-MMD \citep{hoogeboom2026dmmd} and IDLM \citep{li2026idlm} distill discrete diffusion models, and their few-step constructions differ: D-MMD draws the next state from the bridge, IDLM rather uses the forward process.
At one step they descend the same objective, derived once here.

\paragraph{The objective.}
The student draws $ Z_1 \sim p_1 $ and emits $ \mathbf{X}^\eta \sim \genkernel_1^\eta(\cdot \mid Z_1) $, of sequence law $ p_{\mathbf{x}}^\eta $, and the criterion is the reverse Kullback--Leibler divergence $ \KL{p_{\mathbf{x}}^\eta}{p_{\mathrm{data}}} $.
Diffuse both laws along \eqref{eq:discrete-transition}.
Their reverse chains are Markov on the same grid and agree at $ t = 1 $, so \eqref{eq:discrete-bound} holds with the two exchanged,
\begin{equation}
    \KL{p_{\mathbf{x}}^\eta}{p_{\mathrm{data}}} \, \leq \, \sum_{i = 1}^n \E_{Z_{t_i} \sim p^\eta_{t_i}} \Big[ \KL{p^\eta_{s_i \mid t_i}(\cdot\mid Z_{t_i})}{p_{s_i \mid t_i}(\cdot\mid Z_{t_i})} \Big] \, ,
    \label{eq:dmd-discrete-bound}
\end{equation}
and \Cref{prop:plugin-exact} applies to both sides, each token marginal being a bridge with a leave-one-out denoiser plugged in --- the student's on the left, the data's on the right.
The passage to continuous time of \Cref{app:discrete} runs unchanged and returns a Bregman divergence between the two concrete scores, read at the states the student visits,
\begin{equation}
    \KL{p_{\mathbf{x}}^\eta}{p_{\mathrm{data}}} \, \leq \, \int_0^1 R_t \, \E_{Z_t \sim p^\eta_t} \bigg[ \sum_{\ell = 1}^L \big\langle \pi , \, Z_t^\ell \big\rangle \sum_{e_k \neq Z_t^\ell} d_F \Big( s^{\eta, \ell}_t(Z_t)_k \, , \; s^\ell_t(Z_t)_k \Big) \bigg] \, \mathrm{d} t \, .
    \label{eq:discrete-dmd}
\end{equation}
Neither score is available in distillation: the teacher stands in for $ s_t $, and an auxiliary model, fitted on the student's own samples, for $ s^\eta_t $.

\paragraph{Why a difference of two training losses computes it.}
Neither method evaluates \eqref{eq:discrete-dmd}.
Both subtract two per-sample training losses of the kind \Cref{app:discrete} ends on, evaluated on student samples.
For a simplex-valued field and the concrete score $ \hat s $ it induces through \eqref{eq:concrete-score-model}, that loss is \citep{lou2024discrete}
\begin{equation}
    \ell_t(\hat s ; z_t, \mathbf{x}) \, = \, \sum_{\ell = 1}^L \big\langle \pi , \, z_t^\ell \big\rangle \sum_{e_k \neq z_t^\ell} \Big[ \hat s^\ell(z_t)_k \, - \, s^\ell_t(z_t \mid \mathbf{x})_k \, \log \hat s^\ell(z_t)_k \Big] \, ,
    \label{eq:dse}
\end{equation}
the integrand of \eqref{eq:discrete-ct} up to a term free of $ \hat s $: an evidence bound rather than a divergence, minimized over $ \mathbf{X} \sim \post[\eta](\cdot\mid z_t) $ at $ \hat s = s^\eta_t $.
Subtracting two of them removes that free term and leaves the divergence,
\begin{equation}
    \E_{\mathbf{X} \sim \post[\eta](\cdot\mid z_t)} \Big[ \ell_t(s^\theta_t ; z_t, \mathbf{X}) - \ell_t(s^\eta_t ; z_t, \mathbf{X}) \Big] \, = \, \sum_{\ell = 1}^L \big\langle \pi , \, z_t^\ell \big\rangle \sum_{e_k \neq z_t^\ell} d_F \Big( s^{\eta, \ell}_t(z_t)_k \, , \; s^{\theta, \ell}_t(z_t)_k \Big) \, ,
    \label{eq:dmd-difference}
\end{equation}
which is the integrand of \eqref{eq:discrete-dmd} with the teacher in place of the data.
IDLM writes it as the teacher's loss on student samples minus its minimum over auxiliary fields \citep{li2026idlm}; D-MMD adds a term pulling the auxiliary towards the teacher, which leaves the fixed point where it is \citep{hoogeboom2026dmmd}.
Under the mask this divergence takes a simpler form still, derived in \Cref{app:masked-distillation}.

\subsection{Masked Diffusion Distillation: collapse to a posterior KL}
\label{app:masked-distillation}

Take $ \pi = e_m $, the point mass on the mask symbol.
The weight $ \langle \pi, z_t^\ell \rangle $ of \eqref{eq:discrete-dmd} is then $ 1 $ at masked positions and $ 0 $ elsewhere, so only masked positions are scored.
At one of them, for every $ k \neq m $, the concrete score is the denoiser up to a factor that depends on $ t $ alone,
\begin{equation}
    s^\ell_t(z_t)_k \, = \, \frac{\alpha_t}{1 - \alpha_t} \, m^\ell_t(z_t)_k \, ,
    \label{eq:masked-score}
\end{equation}
where $ m^\ell_t(z_t) = \E [ \mathbf{X}^\ell \mid Z_t = z_t ] $ is the data denoiser, equal at masked positions to the leave-one-out denoiser of \Cref{prop:plugin-exact}.
The same holds for the student score with $ m^{\eta, \ell}_t(z_t) = \E [ \mathbf{X}^{\eta, \ell} \mid Z_t^\eta = z_t ] $, the denoiser of the noised student distribution, not the student generator.
Replacing the student and data scores in \eqref{eq:discrete-dmd} accordingly leaves the loss
\begin{equation}
    \KL{p_{\mathbf{x}}^\eta}{p_{\mathrm{data}}} \, \leq \, \int_0^1 \frac{- \alpha_t'}{1 - \alpha_t} \, \E_{Z_t \sim p^\eta_t} \bigg[ \sum_{\ell \, : \, Z_t^\ell = e_m} \KL{m^{\eta, \ell}_t(Z_t)}{m^\ell_t(Z_t)} \bigg] \, \mathrm{d} t \, ,
    \label{eq:masked-posterior-kl}
\end{equation}
where $ m^\eta_t $ and $ m_t $ are approximated in practice by an auxiliary denoiser and the teacher $ \mathbf{x}^\theta $,
yielding the discrepancy \Cref{tab:wip-local-discrepancies} lists for DiMO \citep{zhu2025dimo} and the loss D-MMD implements in its masked instance \citep{hoogeboom2026dmmd}.
Nothing of the sort happens under the uniform law, where every position is scored and the denominator of \eqref{eq:concrete-score-conditional} keeps the head in it.
\section{Method: derivations and proofs}
\label{app:method-derivations}

The derivations behind \Cref{sec:method}: the gradient the objective is trained with, the
proof that the vertex constraint is recovered, the discriminator that supplies the log-ratio,
and the alternative objective we distill as a baseline.

\subsection{Distribution matching distillation: gradient derivation}
\label{app:dmd-grad}

We derive the pathwise gradient used by \SDMD in
\eqref{eq:dmd-grad}. The key observation is that differentiating the
reverse KL produces both a contribution through the generated latent and
an explicit density derivative; the latter has zero expectation.
Throughout, the embedding matrix $\Emat$, the noise schedule, and the
sampling distribution $\nu$ are fixed. We assume sufficient regularity
to interchange differentiation and integration.

\paragraph{Reparameterized student samples.}
For $(t,T)\sim\nu$, draw $Z_T\sim p_T$ and independent noise
$\varepsilon\sim\gauss(0,\Id)$. Under the continuous relaxation,
\begin{equation}
    \mathbf{X}^{\eta,T}=\mathbf{x}^\eta(Z_T,T),
    \qquad
    Z_t^{\eta,T}=\pi_t(\mathbf{X}^{\eta,T},\varepsilon)
    =\alpha_t\mathbf{x}^\eta(Z_T,T)\Emat+\sigma_t\varepsilon.
\end{equation}
This construction realizes the marginal $p_t^{\eta,T}$ on a sampling
space whose law does not depend on $\eta$. The multi-step objective is
therefore
\begin{equation}
    \mathsf{L}_{\mathrm{DMD}}^{\mathrm{ms}}(\eta)
    =\E_{\substack{(t,T)\sim\nu\\
        Z_T\sim p_T,\,\varepsilon\sim\gauss(0,\Id)}}
    \left[\log\frac{p_t^{\eta,T}(Z_t^{\eta,T})}
                        {p_t(Z_t^{\eta,T})}\right].
\end{equation}

\paragraph{Differentiating the reverse KL.}
Fix $(t,T)$ and write
$s^{\eta,T}(z,t)=\nabla_z\log p_t^{\eta,T}(z)$ and
$s(z,t)=\nabla_z\log p_t(z)$. Holding $Z_T$ and $\varepsilon$ fixed,
the chain rule gives
\begin{equation}
\begin{aligned}
    \nabla_\eta\log\frac{p_t^{\eta,T}(Z_t^{\eta,T})}
                            {p_t(Z_t^{\eta,T})}
    ={}&\Big(s^{\eta,T}(Z_t^{\eta,T},t)
             -s(Z_t^{\eta,T},t)\Big)^\top
          \frac{\partial Z_t^{\eta,T}}{\partial\eta}\\
       &+\left.\nabla_\eta\log p_t^{\eta,T}(z)
          \right|_{z=Z_t^{\eta,T}}.
\end{aligned}
\end{equation}
In the last term, the density is differentiated at a fixed argument $z$.
Its expectation vanishes because $p_t^{\eta,T}$ integrates to one:
\begin{equation}
\begin{aligned}
    \E_{Z_t^{\eta,T}\sim p_t^{\eta,T}}
    \left[\left.\nabla_\eta\log p_t^{\eta,T}(z)
        \right|_{z=Z_t^{\eta,T}}\right]
    &=\int_{\R^{L\times d}}\nabla_\eta p_t^{\eta,T}(z)\,\mathrm{d}z\\
    &=\nabla_\eta 1=0.
\end{aligned}
\end{equation}
This is the cancellation of contribution $(B)$ in
\eqref{eq:wip-distill-grad}; it also holds for the categorical student,
although that parameterization does not admit the pathwise derivative.

Averaging the remaining term yields
\begin{equation}
    \nabla_\eta\mathsf{L}_{\mathrm{DMD}}^{\mathrm{ms}}(\eta)
    =\E_{\substack{(t,T)\sim\nu\\
        Z_T\sim p_T,\,\varepsilon\sim\gauss(0,\Id)}}
    \left[
        \Big(s^{\eta,T}(Z_t^{\eta,T},t)
            -s(Z_t^{\eta,T},t)\Big)^\top
        \frac{\partial Z_t^{\eta,T}}{\partial\eta}
    \right],
\end{equation}
which is \eqref{eq:dmd-grad}. Latent coordinates are flattened when
forming the score--Jacobian product. The latent Jacobian includes the
embedding map and the signal coefficient:
\begin{equation}
    \frac{\partial Z_t^{\eta,T}}{\partial\eta}
    =\alpha_t\frac{\partial
        [\mathbf{x}^\eta(Z_T,T)\Emat]}{\partial\eta}.
\end{equation}
The one-step gradient follows by fixing $T=1$, sampling $t$ uniformly,
and identifying $p_t^{\eta,1}$ with $p_t^\eta$.

\paragraph{Estimated scores.}
The identity above uses exact scores. In practice, the frozen teacher
and the auxiliary denoiser provide their estimates through
\eqref{eq:sdmd-student-score}. During a student update, the estimated
score difference is detached and gradients flow only through the
reparameterized latent. Thus the implemented update approximates the
exact pathwise gradient, with accuracy determined by the score estimates.

\subsection{Recovering the discrete data distribution}
\label{app:emergent}

This subsection proves \Cref{thm:emergent}: matching the embedded, noised marginals recovers the clean data distribution, even when the student outputs continuous probability vectors.
We then explain how the same argument identifies the transports from noisy data marginals to the clean data distribution used for multi-step generation.

\paragraph{Clean and noised distributions.}
We use the clean student law $p_{\mathbf{x}}^\eta$ from \Cref{sec:method-unified} and identify discrete sequences with the one-hot vertex set $V=\{\mathbf{e}_1,\ldots,\mathbf{e}_K\}^L$ of $\Delta_K^L$.
The data law $p_{\mathrm{data}}$ is supported on $V$, whereas a relaxed student can place mass throughout $\Delta_K^L$.
For a clean distribution $\rho$ on this simplex, let $Q_t\rho$ denote the law of
\[
    Z_t=\alpha_t\mathbf{X}\Emat+\sigma_t\varepsilon,
    \qquad \mathbf{X}\sim\rho,\quad
    \varepsilon\sim\gauss(0,\Id),
\]
with independent Gaussian noise.
Thus $Q_t p_{\mathbf{x}}^\eta=p_t^\eta$ and $Q_t p_{\mathrm{data}}=p_t$.
Writing the objective as a function of the clean law gives
\begin{equation}
    \mathsf{L}_{\mathrm{DMD}}(\rho)
    =\int_0^1\KL{Q_t\rho}{p_t}\,\mathrm{d}t,
    \qquad
    \mathsf{L}_{\mathrm{DMD}}(\eta)
    =\mathsf{L}_{\mathrm{DMD}}(p_{\mathbf{x}}^\eta),
    \label{eq:emergent-functional}
\end{equation}
which is exactly \eqref{eq:ikl} for the student law.
Here $\rho$ denotes a clean distribution; $\nu$ retains its meaning as the time-sampling distribution in the multi-step objective.
For the proofs, write $\mathcal{E}(\mathbf{x})=\mathbf{x}\Emat$ for the position-wise embedding map.
We assume that $\alpha_t$ and $\sigma_t$ are measurable, finite, and strictly positive for $t\in(0,1)$; endpoint values do not affect the integral, and the loss may be infinite.

\paragraph{Why equal-norm embeddings suffice.}
Constraining the token embeddings to a common sphere is standard in the lineage of \Cref{sec:bg-continuous}: CDCD $ L_2 $-normalizes them before every use and backpropagates through the normalization \citep{dieleman2022continuous}, LangFlow places them on a sphere of radius $ \sqrt{d} $ \citep{chen2026langflow}, and RePlaid constrains each row to unit length \citep{yang2026replaid}.
If the rows are also distinct, this normalization ensures that no token embedding is a convex combination of the others, as the following proposition shows.

\begin{proposition}
\label{prop:sphere}
Distinct rows sharing a common Euclidean norm are in convex position.
\end{proposition}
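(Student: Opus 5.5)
The plan is to prove the stronger statement that each row $E_k$ is an exposed point of the convex hull of the rows. That is, for each $k$ we will exhibit a linear functional that is uniquely maximised at $E_k$ over $\{E_1,\dots,E_K\}$. Convex position, meaning that no $E_k$ lies in the convex hull of the remaining rows, then follows immediately.

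Let $r>0$ be the common norm, so $\|E_j\|=r$ for all $j$; the case $r=0$ cannot occur when $K\ge 2$ because the rows are distinct. Fix $k$ and take the functional $u\mapsto\langle u,E_k\rangle$. By Cauchy--Schwarz, for every $j$,
\[
\langle E_j,E_k\rangle\le\|E_j\|\,\|E_k\|=r^2,
\]
with equality if and only if $E_j$ is a nonnegative multiple of $E_k$. Since the norms are equal, that multiple must be $1$, so equality forces $E_j=E_k$. Distinctness of the rows therefore gives $\langle E_j,E_k\rangle<r^2$ for every $j\neq k$.

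Now suppose, for contradiction, that $E_k=\sum_{j\ne k}\lambda_jE_j$ with $\lambda_j\ge0$ and $\sum_{j\neq k}\lambda_j=1$. Pairing both sides with $E_k$ gives
\[
r^2=\sum_{j\ne k}\lambda_j\langle E_j,E_k\rangle<r^2,
\]
where the strict inequality holds because the $\lambda_j$ are nonnegative, sum to one, and each $\langle E_j,E_k\rangle<r^2$. This is a contradiction, so no $E_k$ is a convex combination of the other rows.

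There is no real obstacle in this argument. The only point requiring care is the equality case of Cauchy--Schwarz: equal norms are what upgrade collinearity to equality of vectors, and distinctness then rules out $E_j=E_k$. I would also record the slightly stronger conclusion that $E_k$ is the unique maximiser of $\langle\cdot,E_k\rangle$ over the whole hull. This uniqueness is presumably what the proof of \Cref{thm:emergent} will use, to show that a relaxed student law whose embedding matches a law supported on the vertex embeddings must itself be supported on $V$.
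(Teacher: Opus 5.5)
Your proof is correct and matches the paper's: pair the supposed convex combination with $E_k$, use $\langle E_j,E_k\rangle<r^2$ for $j\neq k$, and reach $r^2<r^2$. The only difference is that the paper gets this strict inequality by expanding $\|E_i-E_k\|^2=2R^2-2\langle E_i,E_k\rangle>0$ instead of using the equality case of Cauchy--Schwarz, so it does not need your separate remark that $r>0$.
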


\begin{proof}
Let $ \Vert E_k \Vert = R $ for all $ k $ and suppose $ E_i = \sum_{k \neq i} a_k E_k $ with $ a \in \Delta_{K-1} $.
For $ k \neq i $, $ \Vert E_i - E_k \Vert^2 = 2R^2 - 2 \langle E_i, E_k \rangle $ is positive by distinctness, so $ \langle E_i, E_k \rangle < R^2 $.
Taking the inner product of the supposed combination with $ E_i $ gives $ R^2 = \sum_{k \neq i} a_k \langle E_i, E_k \rangle < R^2 $.
\end{proof}

More generally, the argument below only requires distinct rows in convex position; it does not require the embedding map to be injective on the whole simplex.

\paragraph{Proof of the theorem.}
We first show that a token embedding can only come from its one-hot vector. We then undo Gaussian noising to identify the clean distribution.

\begin{lemma}
\label{lem:lifting}
Under the assumption of \Cref{thm:emergent}, $ x \in \Delta_K $ and $ x \Emat = E_i $ imply $ x = \mathbf{e}_i $.
Hence $ \mathcal{E}^{-1}(\mathcal{E}(V)) = V $, and $ \mathcal{E} $ is injective on $ V $.
\end{lemma}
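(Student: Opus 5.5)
The plan is to reduce the first claim to the fact that a vertex of a convex-position set is an extreme point of the convex hull, which Proposition~\ref{prop:sphere} supplies. Let $x\in\Delta_K$ with $x\Emat=\sum_k x_k E_k=E_i$.

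\textbf{Step 1: $x$ puts no mass outside $i$.} Suppose $x_i<1$. Then
\[
E_i-x_iE_i=\sum_{k\neq i}x_kE_k .
\]
Dividing by $1-x_i>0$ expresses $E_i$ as a convex combination of $\{E_k\}_{k\neq i}$ with weights $x_k/(1-x_i)$. This contradicts Proposition~\ref{prop:sphere}. Hence $x_i=1$, and since $x\in\Delta_K$, $x=\mathbf{e}_i$.

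One could instead argue directly with the equal-norm inner-product trick. Take the inner product with $E_i$: this gives $R^2=\sum_k x_k\langle E_k,E_i\rangle$, where $\langle E_k,E_i\rangle<R^2$ for $k\neq i$. This forces $x_k=0$ for all $k\neq i$. Either route is short; I will cite the proposition.

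\textbf{Step 2: lift to sequences.} Apply Step 1 row by row. $\mathcal{E}$ acts position-wise, so $\mathbf{x}\in\Delta_K^L$ with $\mathcal{E}(\mathbf{x})=\mathcal{E}(\mathbf{v})$ for some $\mathbf{v}\in V$ means $\mathbf{x}^\ell\Emat=E_{i_\ell}$ for each $\ell$. Each row is then the corresponding one-hot vector, so $\mathbf{x}=\mathbf{v}\in V$. This shows $\mathcal{E}^{-1}(\mathcal{E}(V))\subseteq V$; the reverse inclusion is trivial. The same computation shows that $\mathcal{E}(\mathbf{v})=\mathcal{E}(\mathbf{v}')$ for $\mathbf{v},\mathbf{v}'\in V$ forces $\mathbf{v}=\mathbf{v}'$, so $\mathcal{E}$ is injective on $V$. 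Injectivity also follows directly from the distinctness of the rows.

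The only genuine obstacle is the division by $1-x_i$ and the case bookkeeping when $x_i=1$. That case is handled immediately by the simplex constraint, so I expect no real difficulty.
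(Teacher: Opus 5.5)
Your proposal is correct and matches the paper's proof. The paper likewise assumes $x_i<1$, divides by $1-x_i$ to write $E_i$ as a convex combination of the other rows (contradicting the convex position given by Proposition~\ref{prop:sphere}), then applies this position by position, with injectivity on $V$ following from distinctness of the rows; your inner-product alternative is just the proof of Proposition~\ref{prop:sphere} written inline.
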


\begin{proof}
From $ E_i = x_i E_i + \sum_{k \neq i} x_k E_k $, if $ x_i < 1 $ then dividing by $ 1 - x_i $ gives $ E_i = \sum_{k \neq i} \frac{x_k}{1 - x_i} E_k $, a convex combination of the other rows, contradicting convex position.
Hence $ x_i = 1 $.
Applying this at every position gives the second claim, and injectivity on $ V $ follows from distinctness of the rows.
\end{proof}

\begin{lemma}[Identification at one noise level]
\label{lem:one-level}
Under the assumption of \Cref{thm:emergent}, for any probability measure $ \rho $ on $ \Delta_K^L $ and any $ t \in (0,1) $, $ Q_t \rho = Q_t p_{\mathrm{data}} $ holds if and only if $ \rho = p_{\mathrm{data}} $.
\end{lemma}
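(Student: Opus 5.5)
The converse direction is immediate, so the work lies in showing that $Q_t\rho = Q_t p_{\mathrm{data}}$ forces $\rho = p_{\mathrm{data}}$. The plan has two stages. First, undo the Gaussian convolution to identify the pushforward laws $\mathcal{E}_\#\rho$ and $\mathcal{E}_\# p_{\mathrm{data}}$. Second, pull this identity back through $\mathcal{E}$ using \Cref{lem:lifting}.

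\emph{Deconvolution.} For $t\in(0,1)$ we have $\alpha_t,\sigma_t>0$, so $Q_t\rho$ is the law of $\alpha_t Y+\sigma_t\varepsilon$, where $Y=\mathcal{E}(\mathbf{X})\sim\mathcal{E}_\#\rho$ is independent of $\varepsilon$. We compare characteristic functions on $\R^{L\times d}$ with the Frobenius pairing:
\[
\widehat{Q_t\rho}(\xi)=\widehat{\mathcal{E}_\#\rho}(\alpha_t\xi)\,e^{-\sigma_t^2\|\xi\|_{\mathrm F}^2/2}.
\]
The Gaussian factor never vanishes, so equality of the noised laws gives $\widehat{\mathcal{E}_\#\rho}(\alpha_t\xi)=\widehat{\mathcal{E}_\#p_{\mathrm{data}}}(\alpha_t\xi)$ for all $\xi$. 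Since $\alpha_t\neq 0$, this holds on all of $\R^{L\times d}$. Both pushforwards are finite Borel measures on the compact set $\mathcal{E}(\Delta_K^L)$, so Lévy's uniqueness theorem yields $\mathcal{E}_\#\rho=\mathcal{E}_\#p_{\mathrm{data}}$.

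\emph{Lifting.} Because $p_{\mathrm{data}}$ is supported on the finite set $V$, $\mathcal{E}_\#p_{\mathrm{data}}$ is supported on the finite, hence closed, set $\mathcal{E}(V)$. Therefore
\[
\rho\big(\mathcal{E}^{-1}(\mathcal{E}(V))\big)=\mathcal{E}_\#\rho(\mathcal{E}(V))=1.
\]
By \Cref{lem:lifting}, $\mathcal{E}^{-1}(\mathcal{E}(V))=V$, so $\rho$ is also supported on $V$. On $V$ the map $\mathcal{E}$ is injective, again by \Cref{lem:lifting}. For each $v\in V$ this gives
\[
\rho(\{v\})=\mathcal{E}_\#\rho(\{\mathcal{E}(v)\})=\mathcal{E}_\#p_{\mathrm{data}}(\{\mathcal{E}(v)\})=p_{\mathrm{data}}(\{v\}),
\]
and hence $\rho=p_{\mathrm{data}}$.

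The main obstacle is conceptual rather than technical. Deconvolution recovers only the embedded law, and $\mathcal{E}$ is generally not injective on $\Delta_K^L$: a relaxed student could put mass on non-vertex points whose embeddings coincide with those of other non-vertex points. The proof sidesteps this by using the support of the data. It needs $\mathcal{E}$ to be injective only on $V$, together with the preimage identity $\mathcal{E}^{-1}(\mathcal{E}(V))=V$, and both come from the convex-position argument of \Cref{prop:sphere} and \Cref{lem:lifting}. The remaining points to check are measurability, which is routine since $\mathcal{E}$ is linear and $V$ is finite, and the non-vanishing of the Gaussian characteristic function.
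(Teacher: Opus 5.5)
Your proposal is correct and follows essentially the same route as the paper. You deconvolve using the nowhere-vanishing Gaussian characteristic function and $\alpha_t \neq 0$ to obtain $\mathcal{E}_\#\rho = \mathcal{E}_\# p_{\mathrm{data}}$, then use $\mathcal{E}^{-1}(\mathcal{E}(V)) = V$ and injectivity on $V$ from \Cref{lem:lifting} to match masses vertex by vertex; you simply spell out Lévy's uniqueness theorem and the pointwise mass computation that the paper leaves implicit.
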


\begin{proof}
Only the forward implication needs an argument.
Both measures are Gaussian smoothings, so their characteristic functions factor as $ \widehat{(\alpha_t \mathcal{E})_\# \rho} \cdot \hat{g} $ with $ \hat{g}(u) = e^{- \sigma_t^2 \Vert u \Vert^2 / 2} $ nowhere zero.
Dividing by $ \hat{g} $ and using $ \alpha_t \neq 0 $ gives $ \mathcal{E}_\# \rho = \mathcal{E}_\# p_{\mathrm{data}} $.
That measure is carried by $ \mathcal{E}(V) $, so $ \rho $ is carried by $ \mathcal{E}^{-1}(\mathcal{E}(V)) = V $ by \Cref{lem:lifting}.
On that finite set $ \mathcal{E} $ is injective, so equality of the embedded laws is equality of the mass at every vertex sequence.
\end{proof}

\begin{proof}[Proof of \Cref{thm:emergent}]
We prove that $ \mathsf{L}_{\mathrm{DMD}}(\rho) = 0 $ if and only if $ \rho = p_{\mathrm{data}} $, for any probability measure $ \rho $ on $ \Delta_K^L $; the theorem is the case $ \rho = p_{\mathbf{x}}^\eta $.
The reverse implication is immediate, $ Q_t \rho $ being a function of $ \rho $ alone.
Conversely, nonnegativity of the KL divergence forces $ Q_t \rho = Q_t p_{\mathrm{data}} $ for almost every $ t \in (0,1) $, and any such $ t $ with \Cref{lem:one-level} closes the argument.
\end{proof}

\paragraph{Implications for multi-step generation.}
In \Cref{sec:method-fewstep}, the student starts from $Z_T\sim p_T$ and produces a clean sample with law $p_{\mathbf{x}}^{\eta,T}$.
For each fixed $T>0$, \Cref{lem:one-level} applies to this law just as in the one-step case: matching $p_t^{\eta,T}$ to $p_t$ at any noise level $0<t<T$ implies $p_{\mathbf{x}}^{\eta,T}=p_{\mathrm{data}}$.
Consequently, a zero multi-step objective \eqref{eq:fewstep-proposal-loss} gives, for $\nu_1$-almost every starting level $T$,
\begin{equation}
    \int \genkernel_T^\eta(\mathrm{d}\mathbf{x}\mid z_T)\,
        p_T(\mathrm{d}z_T)
    =p_{\mathrm{data}}(\mathrm{d}\mathbf{x}).
\end{equation}
Indeed, nonnegativity of the KL divergence implies matching for $\nu_T$-almost every $t<T$, and any such level identifies the clean law.
Thus the objective learns a family of transports from $p_T$ to $p_{\mathrm{data}}$, indexed by the starting noise level.

\subsection{The optimal discriminator and the log-ratio}
\label{app:discriminator}

We show that the population optimum of \eqref{eq:disc-loss} recovers
the log-density ratio required by \RDMD. This is the standard optimal
discriminator argument of \citet{goodfellow2014gan}, applied to the
noised data and student distributions at each pair $(t,T)$.

\begin{proposition}
\label{prop:optimal-discriminator}
Fix the student parameters $\eta$ and a pair $0<t<T\leq1$ with
$\sigma_t>0$. Over measurable functions
$D:\R^{L\times d}\to(0,1)$, the objective
\begin{equation}
    \ell_{t,T}(D)
    =-\E_{Z_t\sim p_t}\!\left[\log D(Z_t)\right]
     -\E_{Z_t^{\eta,T}\sim p_t^{\eta,T}}
       \!\left[\log\big(1-D(Z_t^{\eta,T})\big)\right]
\end{equation}
has the unique minimizer, up to sets of Lebesgue measure zero,
\begin{equation}
    D^\star(z,t,T)
    =\frac{p_t(z)}{p_t(z)+p_t^{\eta,T}(z)}.
\end{equation}
Consequently,
\begin{equation}
    \log\frac{1-D^\star(z,t,T)}{D^\star(z,t,T)}
    =\log\frac{p_t^{\eta,T}(z)}{p_t(z)}.
\end{equation}
\end{proposition}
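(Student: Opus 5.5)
The plan is the standard pointwise argument of \citet{goodfellow2014gan}. First I would note that, since $\sigma_t>0$, both $p_t$ and $p_t^{\eta,T}$ are Gaussian smoothings of laws on the bounded set $\Delta_K^L\Emat$. Hence both have strictly positive, finite, continuous densities with respect to Lebesgue measure on $\R^{L\times d}$. This lets me rewrite the objective as a single integral:
\begin{equation}
    \ell_{t,T}(D)=\int_{\R^{L\times d}}\Big[-p_t(z)\log D(z)-p_t^{\eta,T}(z)\log\big(1-D(z)\big)\Big]\,\mathrm{d}z ,
\end{equation}
where the integrand is nonnegative because $D\in(0,1)$. The integral is therefore well defined in $[0,\infty]$, and Tonelli justifies the rewriting.

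Second, I would minimise the integrand pointwise. For fixed $a=p_t(z)>0$ and $b=p_t^{\eta,T}(z)>0$, consider $h(y)=-a\log y-b\log(1-y)$ on $(0,1)$. It is strictly convex, since $h''(y)=a/y^2+b/(1-y)^2>0$, and it tends to $+\infty$ at both endpoints. Its unique minimiser solves $h'(y)=0$, giving $y^\star=a/(a+b)$. Setting $D^\star(z)=p_t(z)/(p_t(z)+p_t^{\eta,T}(z))$ yields a measurable function with values in $(0,1)$.

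Third, I would check that $D^\star$ is a genuine finite minimiser. The value $\ell_{t,T}(D^\star)$ equals $2\log 2-2\,\mathrm{JS}(p_t,p_t^{\eta,T})\le 2\log 2$, so it is finite. For any other $D$, $\ell_{t,T}(D)-\ell_{t,T}(D^\star)=\int [h_z(D(z))-h_z(D^\star(z))]\,\mathrm{d}z\ge0$. Equality forces the nonnegative integrand to vanish almost everywhere. Strict uniqueness of the pointwise minimiser then gives $D=D^\star$ Lebesgue-almost everywhere, which is the uniqueness claim.

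Finally, I would compute $(1-D^\star)/D^\star=p_t^{\eta,T}/p_t$ and take logarithms. This is legitimate because both densities are positive everywhere. The only delicate point is this positivity and finiteness of the densities. It is what makes uniqueness hold up to Lebesgue-null sets rather than only on the supports, and it is what makes the log-ratio defined everywhere. Gaussian smoothing with $\sigma_t>0$ handles both.
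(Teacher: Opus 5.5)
Your proposal is correct and follows the paper's proof: use the strict positivity of the Gaussian-smoothed densities to write $\ell_{t,T}$ as a single integral, minimise the strictly convex integrand pointwise to get $a/(a+b)$, and read off the log-ratio. Your additional steps make explicit what the paper leaves implicit: the Tonelli justification, the finiteness of $\ell_{t,T}(D^\star)$ via the Jensen--Shannon identity, and the argument that a zero-integral nonnegative integrand forces $D=D^\star$ almost everywhere.
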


\begin{proof}
Gaussian noising with $\sigma_t>0$ makes both densities strictly positive.
Writing the objective as
\begin{equation}
    \ell_{t,T}(D)
    =\int_{\R^{L\times d}}
      \left[-p_t(z)\log D(z)
            -p_t^{\eta,T}(z)\log\big(1-D(z)\big)\right]
      \,\mathrm{d}z,
\end{equation}
we can minimize the integrand separately at each $z$. Set
$a=p_t(z)>0$ and $b=p_t^{\eta,T}(z)>0$. For $u\in(0,1)$,
let $f(u)=-a\log u-b\log(1-u)$. Its derivatives are
\begin{equation}
    f'(u)=-\frac{a}{u}+\frac{b}{1-u},
    \qquad
    f''(u)=\frac{a}{u^2}+\frac{b}{(1-u)^2}>0.
\end{equation}
Thus $f$ is strictly convex, and its unique minimizer solves
$f'(u)=0$, giving $u=a/(a+b)$. Substitution yields $D^\star$ and
$(1-D^\star)/D^\star=b/a$, which proves the log-ratio identity.
\end{proof}

Averaging these objectives over $(t,T)\sim\nu$ gives
\eqref{eq:disc-loss}. Hence an unrestricted discriminator conditioned on
both noise levels has the stated optimum for $\nu$-almost every pair.

\paragraph{Leave-one-out baseline.}
\label{app:reinforce-baseline}
For a fixed discriminator, the leave-one-out baseline $\widehat b_g$
is independent of $\mathbf{X}_g^{\eta,T}$ conditionally on $(Z_T,T)$.
Since the conditional score function has zero expectation, subtracting
this baseline preserves the expected cost-based update \citep{kool2019buy}.

\subsection{Auxiliary score estimation with bridge renoising}
\label{app:fewstep-bridge}

The multi-step formulation in \Cref{sec:method-fewstep} can also use the bridge $q_{t\mid T,\mathbf{x}}$ in place of forward noising. For \SDMD, consider $Z_T\sim p_T$ and $Z_t\sim q_{t\mid T,\mathbf{x}}(\cdot\mid Z_T,\mathbf{x}^\eta(Z_T,T))$, and denote the resulting marginal by $p_t^{\eta,T\to t}$. Taking the conditional expectation of the Gaussian bridge score, with the coefficients of \eqref{eq:gaussian_bridge}, gives
\[
    \nabla_{z_t}\log p_t^{\eta,T\to t}(z_t)
    =\frac{
        \alpha_t\sigma_{T\mid t}^2\E[\mathbf{x}^\eta(Z_T,T)\mid Z_t=z_t]\Emat
        +\alpha_{T\mid t}\sigma_t^2\E[Z_T\mid Z_t=z_t]
        -\sigma_T^2z_t
    }{\sigma_t^2\sigma_{T\mid t}^2},
\]
for positive bridge variance, where both expectations are under this joint law. Unlike forward noising, the exact bridge score therefore involves the additional conditional mean $\E[Z_T\mid Z_t]$, which is not estimated by the auxiliary token denoiser.

In practice, we retain the usual auxiliary score estimator by approximating the conditional law of $Z_T$ given $Z_t=z_t$ with the forward Gaussian transition
$\mathcal N(\alpha_{T\mid t}z_t,\sigma_{T\mid t}^2\Id)$.
Although this conditional law does not generally hold under bridge renoising, it gives
$\E[Z_T\mid Z_t=z_t]\approx\alpha_{T\mid t}z_t$.
Using $\sigma_T^2=\alpha_{T\mid t}^2\sigma_t^2+\sigma_{T\mid t}^2$, the score expression then reduces to the usual Tweedie form,
\[
    \nabla_{z_t}\log p_t^{\eta,T\to t}(z_t)
    \approx
    \frac{
        \alpha_t\E[\mathbf{x}^\eta(Z_T,T)\mid Z_t=z_t]\Emat-z_t
    }{\sigma_t^2},
\]
whose conditional token mean is estimated by the auxiliary denoiser. Thus, retaining this estimator under bridge renoising introduces an approximation, whereas it follows exactly from the forward-noising construction.

Estimating the exact bridge score would require an additional prediction task for $\E[Z_T\mid Z_t]$, for example through a separate network or output head, or an auxiliary model predicting the combined bridge mean directly. We leave such estimators to future work. Related experiments on continuous data, in Appendix D of \citet{salimans2024multistep}, reported poor results when fitting the exact bridge score. We do not evaluate bridge renoising during \RDMD\ training.

\subsection{Multi-step generation}
\label{app:fewstep-sampling}

We justify here the DDIM-inspired family of transitions introduced in
\Cref{sec:method-fewstep}. The following proposition gives two conditions
under which a transition preserves the exact diffusion marginals.

\begin{proposition}
\label{prop:fewstep-marginal-consistency}

Fix $0\leq t<T\leq1$ and suppose that
$\widehat Z_T\sim p_T$. Given $\widehat Z_T$, sample
\[
\widehat{\mathbf X}
\sim
\genkernel_T^\eta(\cdot\mid\widehat Z_T),
\]
and then
\[
\widehat Z_t
\sim
q^\zeta_{t\mid\mathbf{x},T}
(\cdot\mid\widehat{\mathbf X},\widehat Z_T).
\]

Then either of the following conditions implies
$\widehat Z_t\sim p_t$:

\begin{enumerate}
    \item $\zeta_{t,T}=\sigma_t$ and the marginal distribution of
    $\widehat{\mathbf X}$ is $p_{\mathrm{data}}$;

    \item
    $\genkernel_T^\eta(\cdot\mid z_T)
    =p_{\mathbf{x}\mid T}(\cdot\mid z_T)$ for $p_T$-almost every $z_T$.
    In this case the conclusion holds for every
    $0\leq\zeta_{t,T}\leq\sigma_t$.
\end{enumerate}
\end{proposition}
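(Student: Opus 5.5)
Both conditions reduce to checking the law of $\widehat Z_t$ through the reparameterization $\widehat Z_t=\alpha_t\widehat{\mathbf X}\Emat+\sqrt{\sigma_t^2-\zeta_{t,T}^2}\,\widehat\varepsilon_T+\zeta_{t,T}\xi$, where $\xi\sim\gauss(0,\Id)$ is fresh noise independent of everything else and $\widehat\varepsilon_T=(\widehat Z_T-\alpha_T\widehat{\mathbf X}\Emat)/\sigma_T$ is the noise inferred from the current latent.

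\textbf{Condition 1.} When $\zeta_{t,T}=\sigma_t$, the middle term vanishes, so $\widehat Z_t=\alpha_t\widehat{\mathbf X}\Emat+\sigma_t\xi$. Here $\xi$ is independent of $\widehat{\mathbf X}$, and $\widehat{\mathbf X}\sim p_{\mathrm{data}}$ by assumption. This is exactly the definition of $Q_tp_{\mathrm{data}}=p_t$ from \Cref{app:emergent}, so $\widehat Z_t\sim p_t$. Nothing about $\widehat Z_T$ is needed beyond its role in producing $\widehat{\mathbf X}$.

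\textbf{Condition 2.} If the kernel is the exact posterior, the pair $(\widehat Z_T,\widehat{\mathbf X})$ has the joint law of $(Z_T,\mathbf X)$ under the data forward process. The reason is that $\widehat Z_T\sim p_T$, and conditioning on $Z_T$ with the posterior reproduces the joint law. Consequently $\widehat{\mathbf X}\sim p_{\mathrm{data}}$, and conditionally on $\widehat{\mathbf X}=\mathbf x$ we have $\widehat Z_T\sim\gauss(\alpha_T\mathbf x\Emat,\sigma_T^2\Id)$. It follows that $\widehat\varepsilon_T\sim\gauss(0,\Id)$ and is independent of $\widehat{\mathbf X}$; this holds because the conditional law of $\widehat\varepsilon_T$ given $\widehat{\mathbf X}$ does not depend on $\widehat{\mathbf X}$.

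Next, given $\widehat{\mathbf X}=\mathbf x$, the sum $\sqrt{\sigma_t^2-\zeta^2}\,\widehat\varepsilon_T+\zeta\xi$ is a sum of independent centered Gaussians. Its covariance is $(\sigma_t^2-\zeta^2+\zeta^2)\Id=\sigma_t^2\Id$. Hence $\widehat Z_t\mid\widehat{\mathbf X}=\mathbf x\sim q_{t\mid\mathbf x}(\cdot\mid\mathbf x)$, and marginalizing over $\widehat{\mathbf X}\sim p_{\mathrm{data}}$ gives $p_t$. This argument works for every $\zeta\in[0,\sigma_t]$.

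\textbf{Edge cases.} These are the main point requiring care. If $T=1$, then $\alpha_1=0$ and $\sigma_1=1$, and the formula still holds. At $t=0$ we have $\sigma_0=0$, which forces $\zeta=0$ and $\widehat Z_0=\widehat{\mathbf X}\Emat$; this matches $p_0$ as the law of $\mathbf X\Emat$. We need $\sigma_T>0$, which holds for $T>0$ since $\alpha_T<1$ on the schedule. The independence step in Condition 2 is the crux: it relies on the joint law identification, which should be stated via the disintegration $p_T(\mathrm dz)\,p_{\mathbf x\mid T}(\mathrm d\mathbf x\mid z)=p_{\mathrm{data}}(\mathrm d\mathbf x)\,q_{T\mid\mathbf x}(\mathrm dz\mid\mathbf x)$.
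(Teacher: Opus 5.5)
Your proposal is correct and follows the paper's proof essentially step for step. In case 1 the inferred-noise coefficient vanishes and you recover the forward construction of $p_t$; in case 2 you identify the joint law of $(\widehat Z_T,\widehat{\mathbf X})$ with the forward one, write $\widehat Z_T=\alpha_T\widehat{\mathbf X}\Emat+\sigma_T\varepsilon_T$ with $\varepsilon_T$ independent of $\widehat{\mathbf X}$, and merge the two Gaussian terms into a $\gauss(0,\sigma_t^2\Id)$ variable, with your explicit disintegration identity and the $t=0$, $T=1$ endpoint checks being sound details the paper leaves implicit.
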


\begin{proof}
For $\zeta_{t,T}=\sigma_t$, the coefficient multiplying the noise inferred
from $\widehat Z_T$ vanishes, and the transition becomes
\[
\widehat Z_t
=
\alpha_t\widehat{\mathbf X}\Emat
+\sigma_t\varepsilon,
\qquad
\varepsilon\sim\gauss(0,\Id).
\]
If $\widehat{\mathbf X}\sim p_{\mathrm{data}}$, this is exactly the forward
construction defining $p_t$.

For the second statement, sampling
$\widehat{\mathbf X}$ from the exact posterior given
$\widehat Z_T\sim p_T$ reconstructs the exact joint law of
$(\mathbf X,Z_T)$ under the forward process. Hence we may write
\[
\widehat Z_T
=
\alpha_T\widehat{\mathbf X}\Emat
+\sigma_T\varepsilon_T,
\qquad
\varepsilon_T\sim\gauss(0,\Id),
\]
with $\varepsilon_T$ independent of $\widehat{\mathbf X}$.
A draw from \eqref{eq:fewstep-transition-kernel} can therefore be written as
\[
\widehat Z_t
=
\alpha_t\widehat{\mathbf X}\Emat
+\sqrt{\sigma_t^2-\zeta_{t,T}^2}\,\varepsilon_T
+\zeta_{t,T}\varepsilon,
\]
where $\varepsilon\sim\gauss(0,\Id)$ is independent of
$(\widehat{\mathbf X},\varepsilon_T)$.
The two Gaussian noise terms combine into a
$\gauss(0,\sigma_t^2\Id)$ variable independent of
$\widehat{\mathbf X}$, so that
\[
\widehat Z_t
\mid \widehat{\mathbf X}
\sim
\gauss(
\alpha_t\widehat{\mathbf X}\Emat,
\sigma_t^2\Id).
\]
Since $\widehat{\mathbf X}\sim p_{\mathrm{data}}$, marginalizing over
$\widehat{\mathbf X}$ gives $\widehat Z_t\sim p_t$.
\end{proof}

Applying \Cref{prop:fewstep-marginal-consistency} recursively along a
decreasing grid shows that, if its assumptions hold at every transition,
then
\[
\widehat Z_{T_i}\sim p_{T_i},
\qquad i=0,\ldots,n.
\]
In particular, for forward renoising, exact distribution matching at each
starting level implies the required clean-marginal condition under the
geometric assumption of \Cref{thm:emergent}.
\section{Supplementary experiments}
\label{app:supp}

\subsection{Implementation details}
\label{app:practical}

\paragraph{Training budgets and model sizes.}
\Cref{tab:training-budgets} separates teacher pretraining from the
additional updates used to train the distilled students.
Both the student and auxiliary models retain the LangFlow architecture
\citep{chen2026langflow}; we use an embedding-space diffusion transformer with $12$ blocks,
hidden dimension $768$, $12$ attention heads and rotary position embeddings, conditioned on the
noise level through adaLN modulation, with an output head over the vocabulary that is not tied to $\Emat$.
Each network has $170.8$M parameters, of which the $38.6$M of $\Emat$ are frozen and shared with the teacher.
For \RDMD, the discriminator keeps this trunk and replaces the vocabulary head with a scalar output per position.
Optimization-step counts alone do not imply equal training compute:
batch sizes and the number of network evaluations per update can differ,
and our methods alternate auxiliary and student updates.
Each student is trained for $10{,}000$ optimizer updates in total, comprising $8{,}000$ auxiliary updates and $2{,}000$ student updates.
Our training GPU budget comprises four nodes with eight NVIDIA H100 GPUs each ($32$ GPUs in total).

\begin{table}[H]
    \centering
    \small
    \setlength{\tabcolsep}{4pt}
    \renewcommand{\arraystretch}{1.2}
    \begin{tabular}{@{}
        >{\raggedright\arraybackslash}p{0.30\linewidth}
        >{\raggedright\arraybackslash}p{0.14\linewidth}
        >{\raggedright\arraybackslash}p{0.13\linewidth}
        >{\raggedright\arraybackslash}p{0.20\linewidth}
        >{\raggedright\arraybackslash}p{0.13\linewidth}@{}}
        \toprule
        Method & Teacher & Teacher updates & Distillation updates & Student params. \\
        \midrule
        SDTT \citep{deschenaux2025sdtt} & MDLM & 1M
            & $r\times10$k\newline $r=1,\ldots,7$ & $\approx$170M \\
        IDLM \citep{li2026idlm} & MDLM & 1M
            & 1M & 169.63M \\
        FMLM \citep{lee2026fmlm} & FLM & 1.5M
            & 1M & $\approx$170M \\
        ReDi \citep{yoo2025redi} & DUO & $\sim$500k
            & $\leq$1M & $\approx$165M \\
        \midrule
        \SDMD (ours) & LangFlow & 1M & 10k & $\approx$170M \\
        \RDMD (ours) & LangFlow & 1M & 10k & $\approx$170M \\
        \bottomrule
    \end{tabular}
    \caption{Training budgets and model sizes for the distilled checkpoints.
    Teacher updates are pretraining steps; distillation updates are additional
    optimizer updates. For our methods, the 10k total includes 8k auxiliary and 2k student updates. SDTT uses successive round checkpoints, indexed by $r$.
    The ReDi student budget is an upper bound. Parameter counts refer to one sampling model,
    including embeddings, and exclude auxiliary training networks.}
    \label{tab:training-budgets}
\end{table}

For SDTT, the teacher is the MDLM teacher released with SDTT;
IDLM uses \texttt{kuleshov-group/mdlm-owt} and a learning rate of $10^{-6}$.
Both teachers are MDLM models \citep{sahoo2024simple}.
The FMLM checkpoint is distilled from its flow language model (FLM) teacher \citep{lee2026fmlm}.
ReDi uses the DUO \citep{sahoo2025duality} checkpoint \texttt{s-sahoo/duo}
(\texttt{duo.ckpt}). Both of our students use the same released
LangFlow checkpoint \citep{chen2026langflow}. For all the baselines, we use their default samplers presented in their associated paper.

\paragraph{Training and sampling configuration.}
\Cref{tab:recipe} summarizes the training and sampling configuration of
each student; the ablations of \Cref{sec:exp-new-design} and below vary its
components.
At inference, we divide the temporal interval $[0,1]$ uniformly, using $T_i=i/n$ for $i=1,\ldots,n$, and traverse this grid from $T_n = 1$ to $T_1$ and apply a final discrete decoder step at time $T_1$, yielding $n$ total NFEs.
Sampling temperature is applied at the network output: we divide the token logits by the temperature before the softmax that produces the clean token probabilities.

\begin{table}[htbp]
    \centering
    \small
    \begin{tabular}{lcc}
        \toprule
        Component & \SDMD & \RDMD \\
        \midrule
        Student initialization & teacher & teacher \\
        Auxiliary initialization & teacher & teacher \\
        Renoising kernel & $ q_{t \mid \mathbf{x}} $ & $ q_{t \mid \mathbf{x}} $ \\
        Self-conditioning & on & off \\
        KL anchor $ \beta $ & \textendash & $ 0.5 $ \\
        \midrule
        Sampler & forward & forward \\
        \bottomrule
    \end{tabular}
    \caption{Training and sampling configuration of each student. A dash marks a component the objective does not have. The last row is applied at inference.}
    \label{tab:recipe}
\end{table}

\paragraph{Discriminator architecture.}
The \RDMD discriminator uses the same transformer backbone as the teacher, including rotary positional embeddings and noise-level conditioning through adaptive layer normalization.
It operates directly on noised latents in the teacher's embedding space, omitting the token-embedding layer and replacing the vocabulary prediction head with a scalar head.
We initialize the backbone from the teacher checkpoint and initialize the scalar head separately.
The head applies noise-conditioned adaptive layer normalization followed by a projection to one logit per position; averaging these logits over sequence positions yields a single scalar logit per sequence.

\paragraph{Optimization.}
We use a global training batch size of $256$ and AdamW at a constant learning rate of $ 3 \times 10^{-4} $ after $ 2500 $ warmup steps, without weight decay, in \texttt{bf16}, with gradients clipped at $ 1.0 $ and an exponential moving average of decay $ 0.9999 $.
We use $n_{\mathrm{aux}}=4$: four auxiliary updates on detached student samples, followed by one student update.
At each optimization step, we sample the time pair $ (t, T) \sim \nu $.

\paragraph{Evaluation harness.}
We evaluate on OpenWebText with sequences of length $1024$, sweeping sampling temperatures $\{0.80, 0.85, 0.90, 0.95, 1.00, 1.05, 1.10\}$ applied to the output logits and sampling budgets $\{1, 2, 4, \ldots, 1024\}$, with the autoregressive references evaluated only at $1024$ NFE.
Generative perplexity is computed by exponentiating the mean token negative log-likelihood under GPT-2-large.
We evaluate with a single random seed, as in the default experimental setup of \citet[Appendix I.1]{gourevitch2026uniform}.
We pair perplexity with diversity measured as the mean per-sequence Shannon entropy of empirical GPT-2 token frequencies, in nats, to obtain a quality--diversity frontier at each sampling budget, following \citet{pynadath2026generativefrontiersevaluationmatters}.

\subsection{Full generative frontiers}
\label{app:frontiers}

\Cref{fig:frontier-all} extends the method comparison of
\Cref{fig:exp-proposal-frontiers} to all evaluated sampling budgets.

\begin{figure}[htbp]
    \centering
    \begin{subfigure}[t]{0.32\linewidth}
        \centering
        \includegraphics[width=\linewidth]{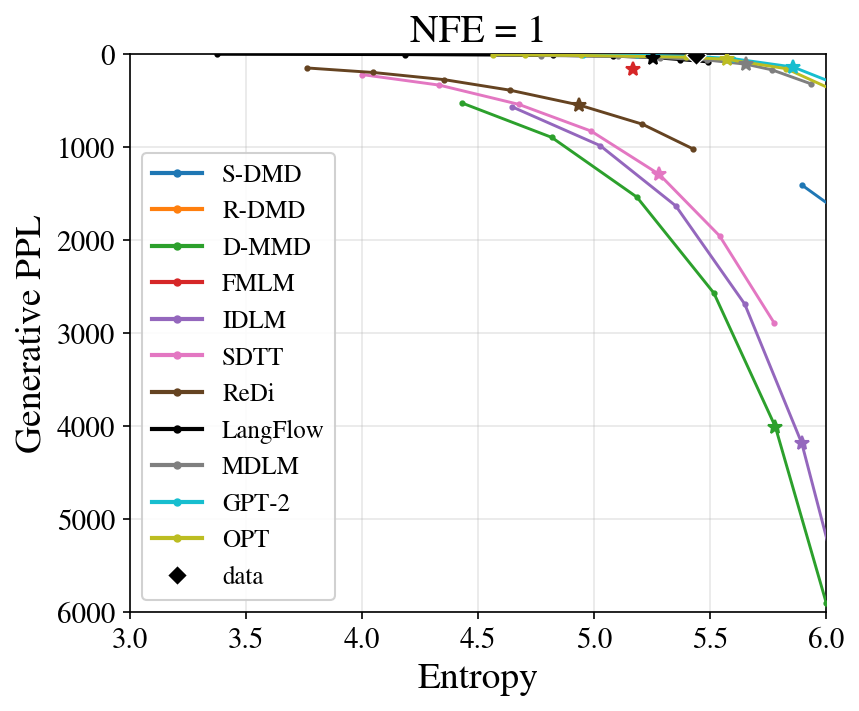}
    \end{subfigure}
    \hfill
    \begin{subfigure}[t]{0.32\linewidth}
        \centering
        \includegraphics[width=\linewidth]{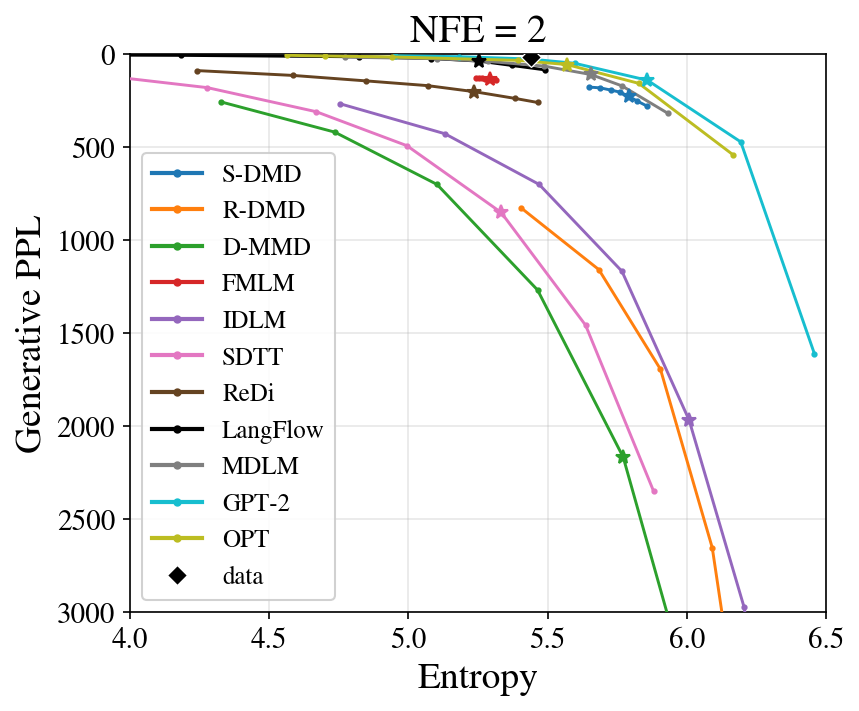}
    \end{subfigure}
    \hfill
    \begin{subfigure}[t]{0.32\linewidth}
        \centering
        \includegraphics[width=\linewidth]{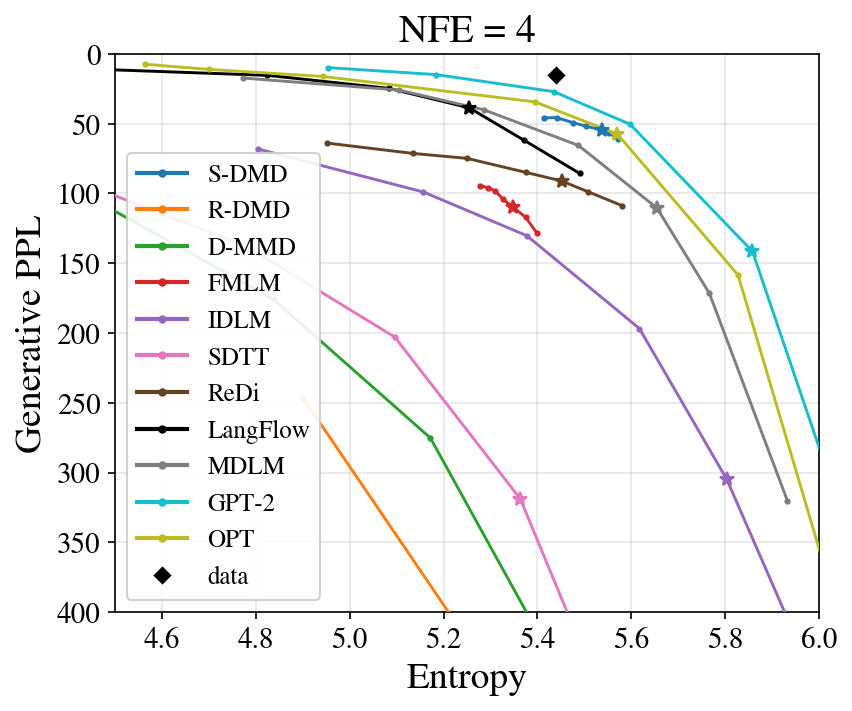}
    \end{subfigure}
    \hfill
    \begin{subfigure}[t]{0.32\linewidth}
        \centering
        \includegraphics[width=\linewidth]{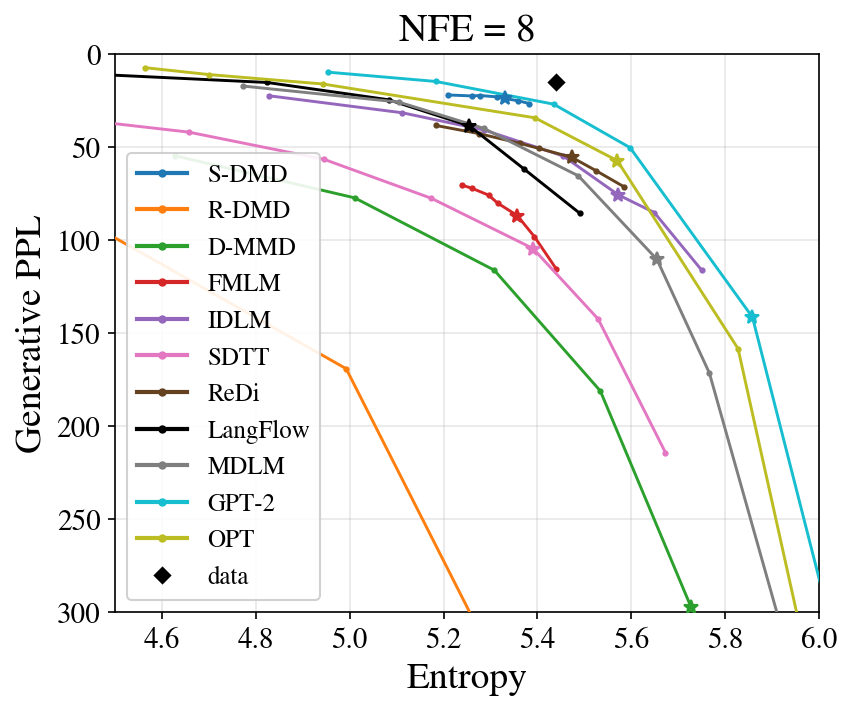}
    \end{subfigure}
    \hfill
    \begin{subfigure}[t]{0.32\linewidth}
        \centering
        \includegraphics[width=\linewidth]{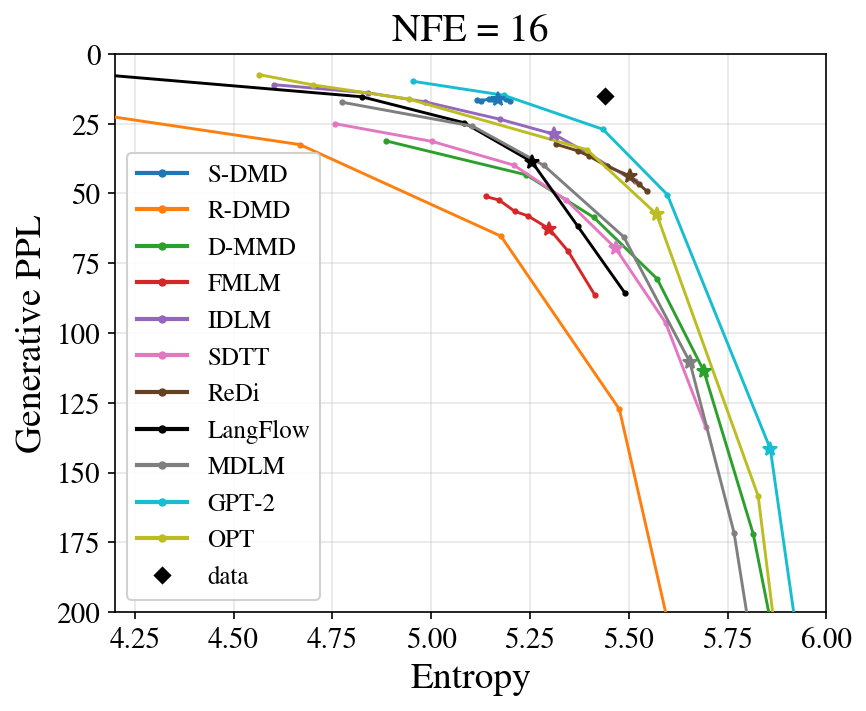}
    \end{subfigure}
    \hfill
    \begin{subfigure}[t]{0.32\linewidth}
        \centering
        \includegraphics[width=\linewidth]{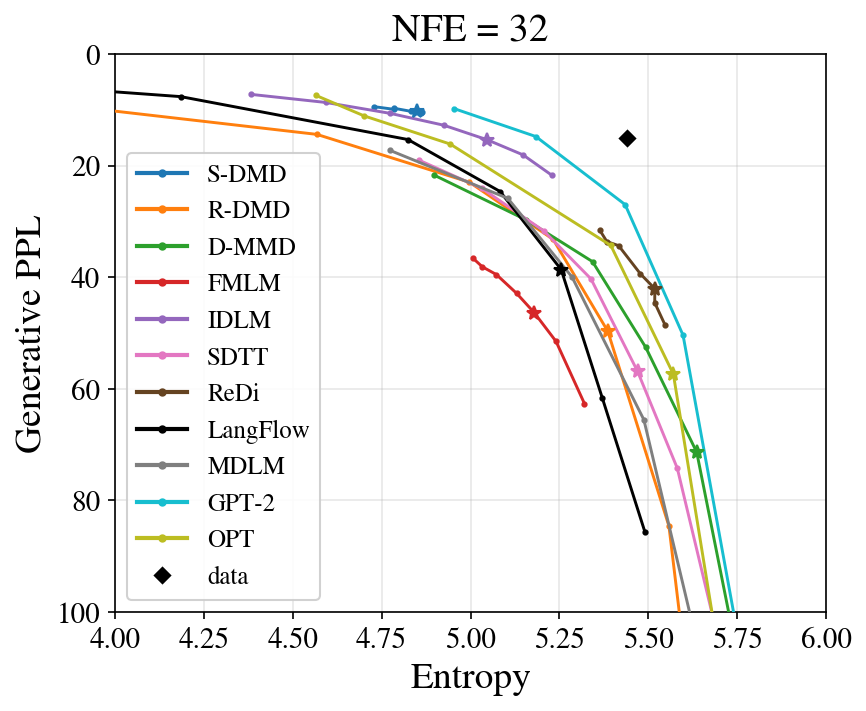}
    \end{subfigure}
    \hfill
    \begin{subfigure}[t]{0.32\linewidth}
        \centering
        \includegraphics[width=\linewidth]{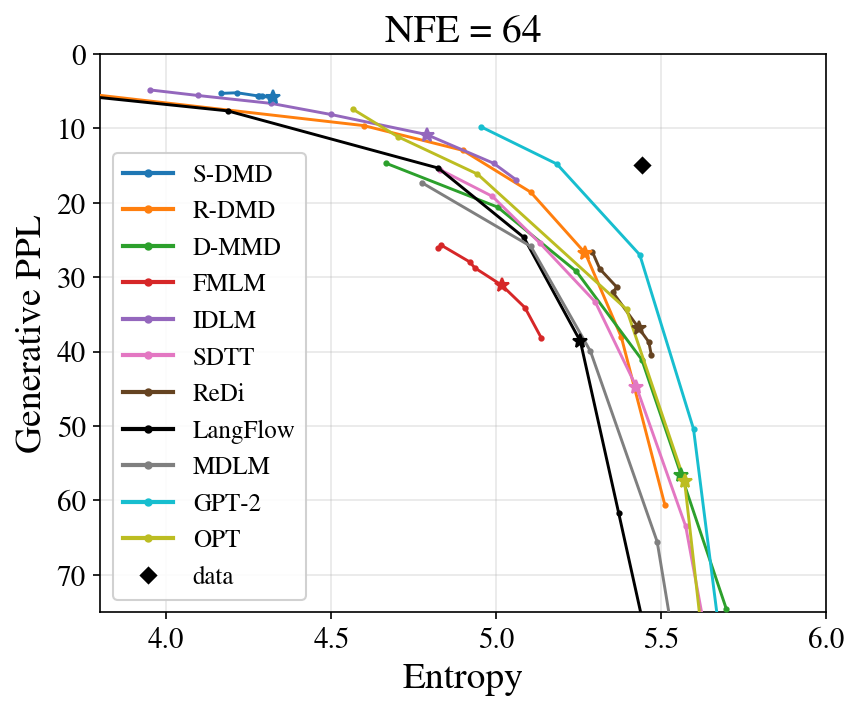}
    \end{subfigure}
    \hfill
    \begin{subfigure}[t]{0.32\linewidth}
        \centering
        \includegraphics[width=\linewidth]{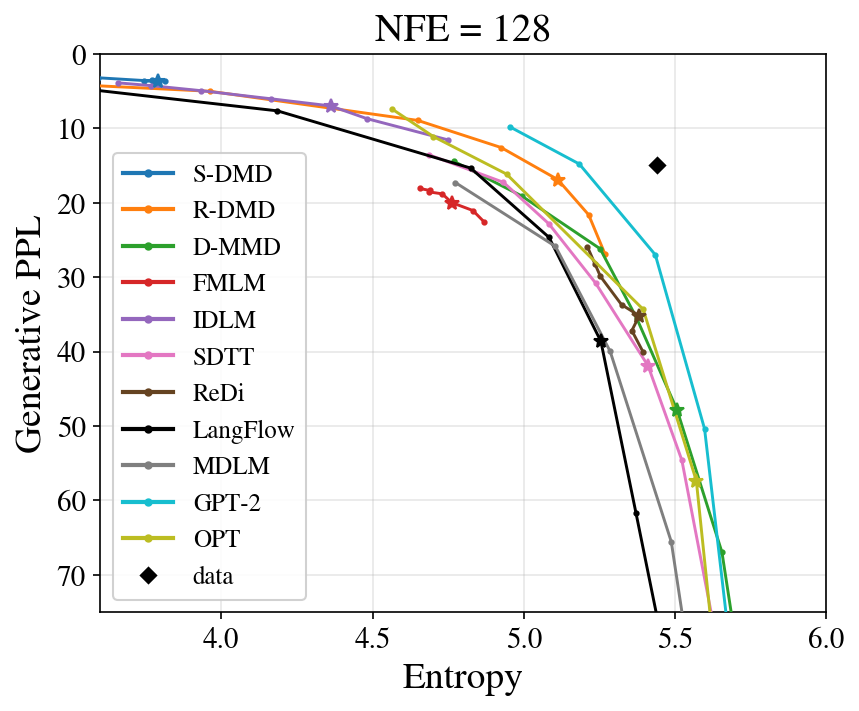}
    \end{subfigure}
    \hfill
    \begin{subfigure}[t]{0.32\linewidth}
        \centering
        \includegraphics[width=\linewidth]{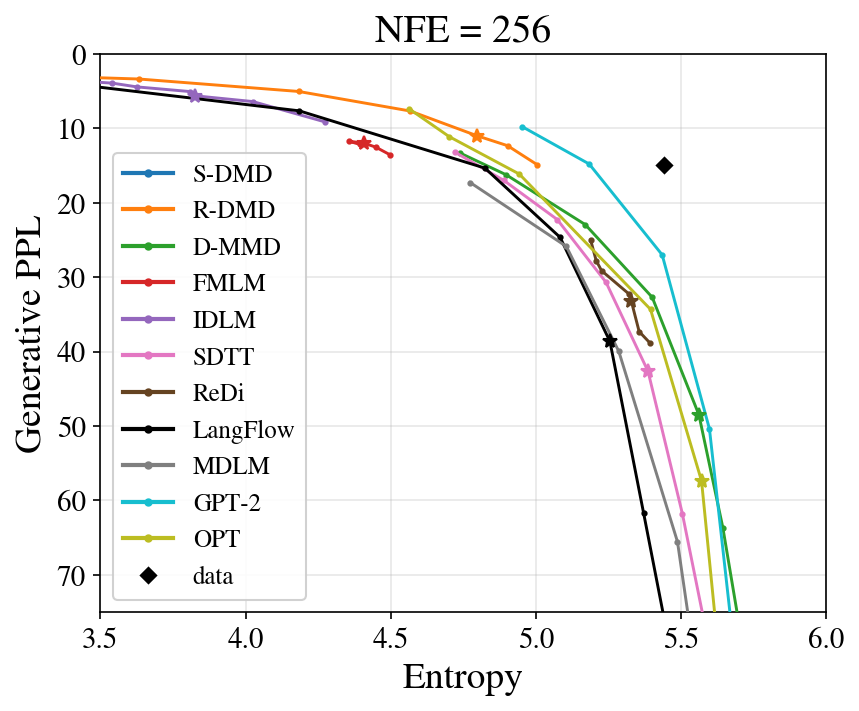}
    \end{subfigure}
    \hfill
    \begin{subfigure}[t]{0.32\linewidth}
        \centering
        \includegraphics[width=\linewidth]{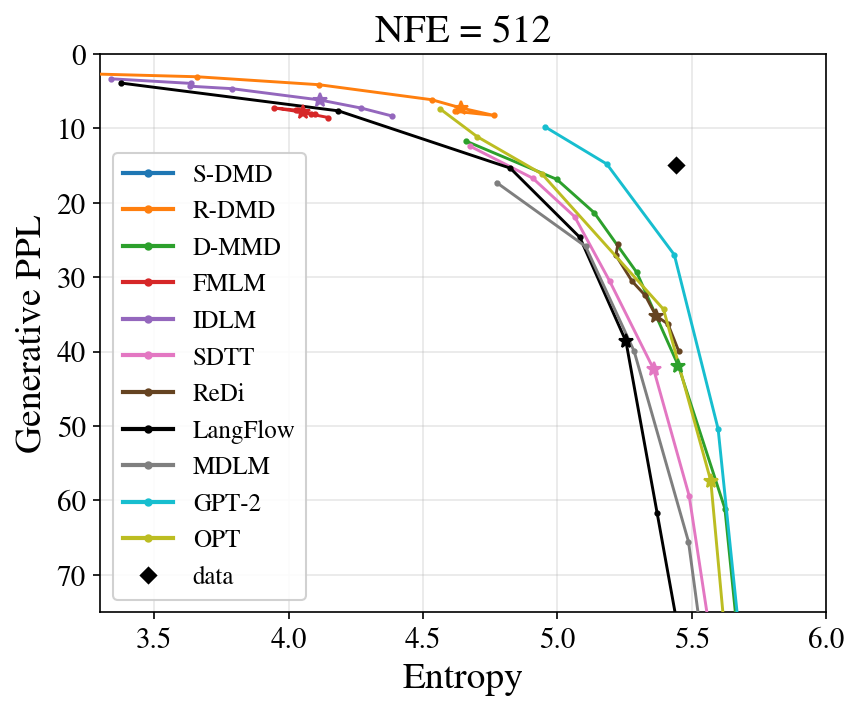}
    \end{subfigure}
    \hfill
    \begin{subfigure}[t]{0.32\linewidth}
        \centering
        \includegraphics[width=\linewidth]{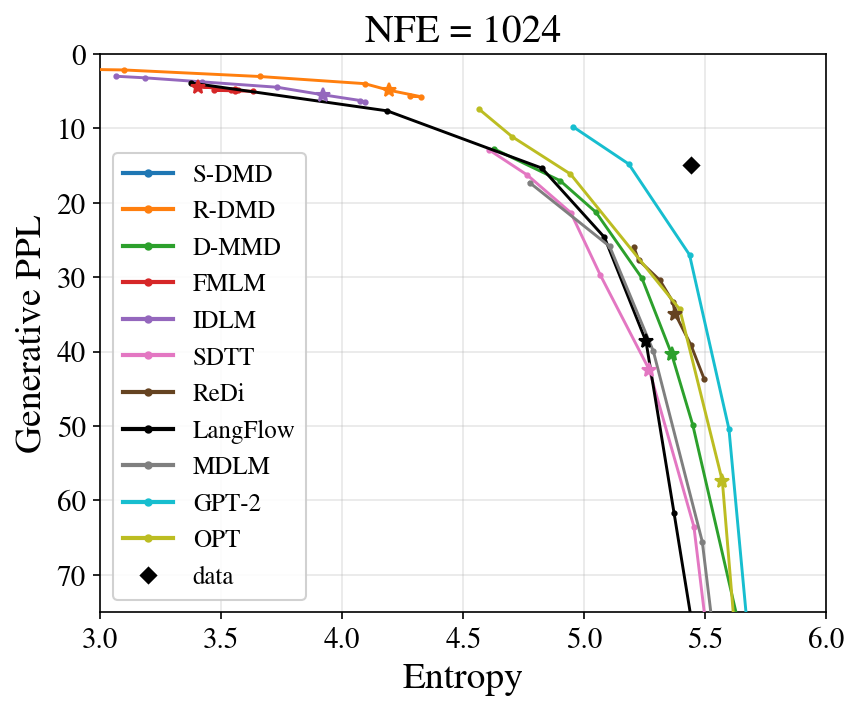}
    \end{subfigure}
    \caption{Generative frontier on OpenWebText at every step budget, read as in \Cref{fig:exp-proposal-frontiers}.}
    \label{fig:frontier-all}
\end{figure}

\subsection{Design space}
\label{app:ablations}

\Cref{fig:ablations} places the variants of each student on the frontier, every variant differing from \Cref{tab:recipe} in a single component.
The bridge-renoising ablation for \SDMD\ is reported separately in \Cref{app:fewstep}.

\paragraph{\SDMD variants.}
The top row compares the recipe with three variants.
\emph{No sc} trains the student without self-conditioning, the reuse of the previous clean prediction as an extra network input described in \Cref{app:continuous}.
\emph{No tsched} replaces the teacher's learned noise schedule, which the recipe reuses (\Cref{app:continuous}, \eqref{eq:gumbel-schedule}), with a default schedule $\gamma(t) = \mu-b\log(-\log t)$, with $ \mu = 4.723 $ and $ b = 0.852$ being hyperparameters.
\emph{No student init} initializes the student from scratch instead of from the teacher.
The no-tsched variant lies on the recipe's curve and is told apart only by where along the curve its temperature-$ 1.0 $ point sits; without self-conditioning, the curve reaches a slightly higher Gen PPL at the same entropy.
The student trained from scratch collapses to near-uniform tokens, with unigram entropy about $ 6.9 $ and Gen PPL above $ 6 \times 10^4 $; it is marked in the bottom-right corner, outside the plotted range.

\paragraph{\RDMD variants.}
The bottom row varies the weight $ \beta $ of the KL anchor toward the teacher in \eqref{eq:teacher-anchor}: $ \beta \in \{0, 0.2, 0.5\} $, and a schedule annealing $ \beta $ from $ 0.5 $ to $ 0.05 $ over training.
An additional variant trains \RDMD with self-conditioning, which the recipe does not use for this student.
With $ \beta = 0 $, the student collapses to zero unigram entropy; this variant is marked in the top-left corner, outside the plotted range.
The positive weights are separated by little, and $ \beta = 0.5 $ is the best of them.
Self-conditioning moves the frontier to a higher Gen PPL at every entropy.

\begin{figure}[htbp]
    \centering
    \begin{subfigure}[t]{0.48\linewidth}
        \centering
        \includegraphics[width=\linewidth]{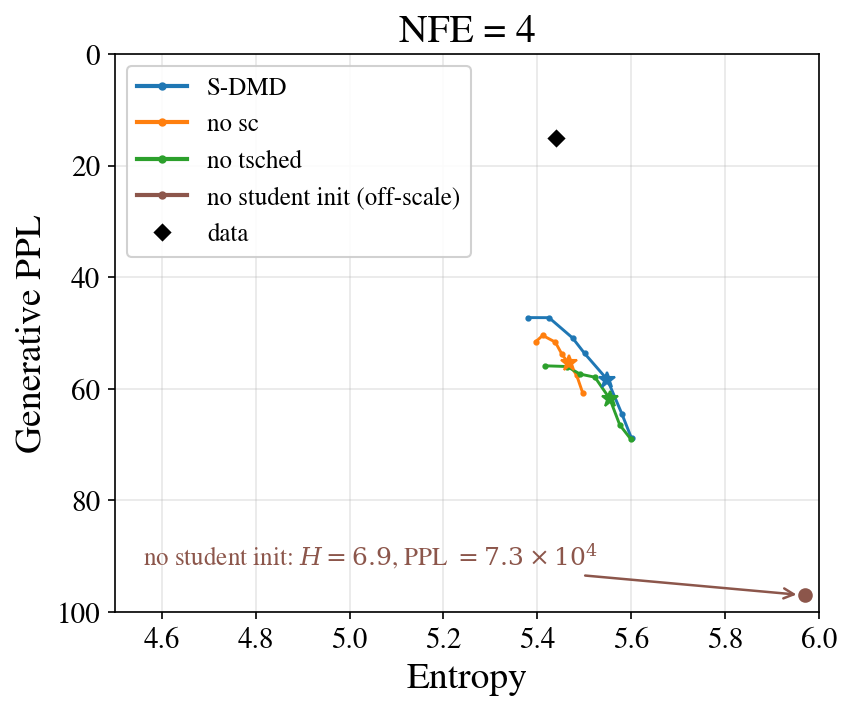}
        \subcaption{\SDMD, $ \NFE = 4 $}
    \end{subfigure}
    \hfill
    \begin{subfigure}[t]{0.48\linewidth}
        \centering
        \includegraphics[width=\linewidth]{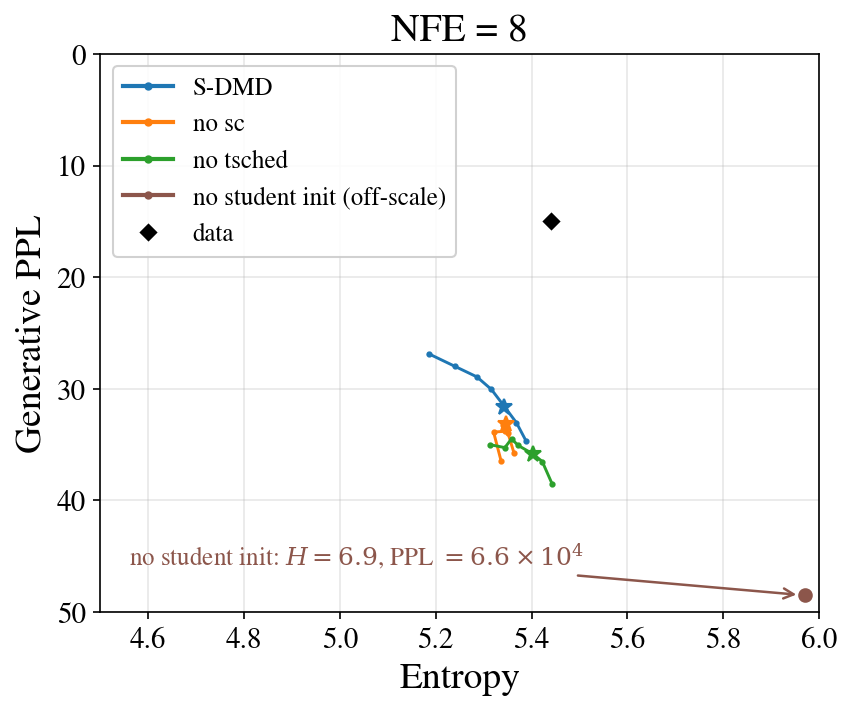}
        \subcaption{\SDMD, $ \NFE = 8 $}
    \end{subfigure}

    \vspace{\floatsep}
    \begin{subfigure}[t]{0.48\linewidth}
        \centering
        \includegraphics[width=\linewidth]{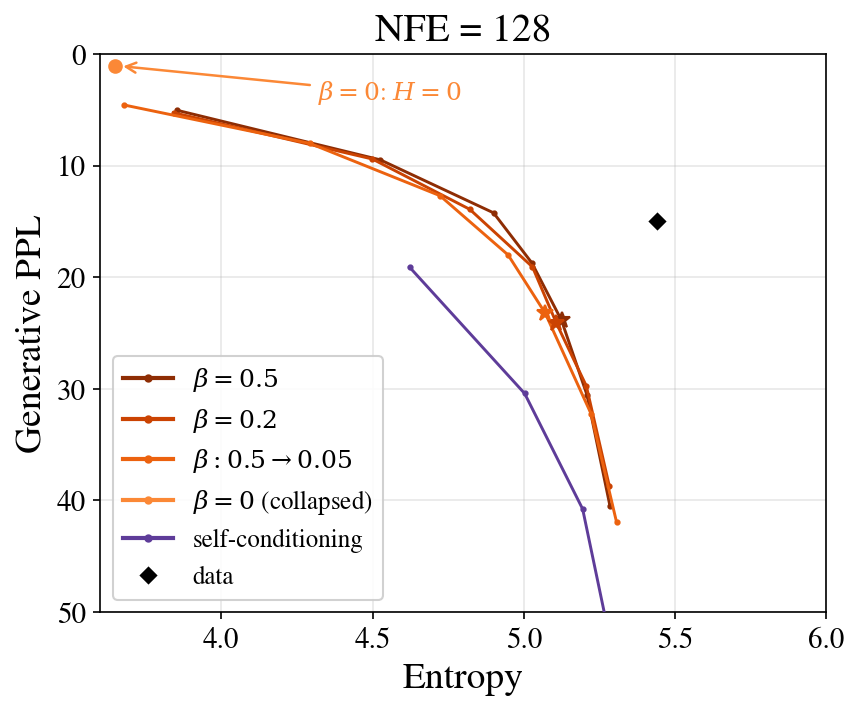}
        \subcaption{\RDMD, $ \NFE = 128 $}
    \end{subfigure}
    \hfill
    \begin{subfigure}[t]{0.48\linewidth}
        \centering
        \includegraphics[width=\linewidth]{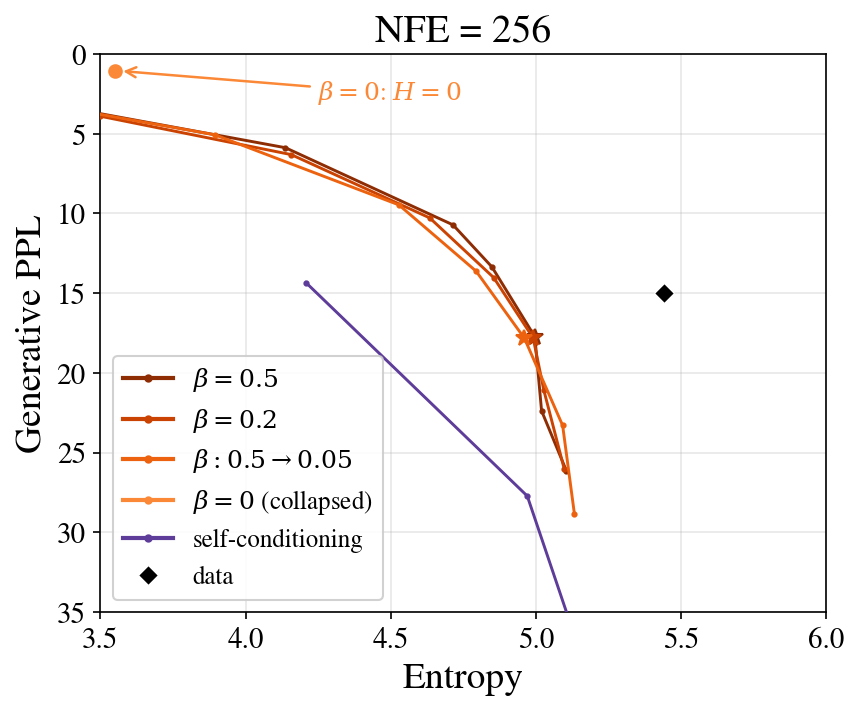}
        \subcaption{\RDMD, $ \NFE = 256 $}
    \end{subfigure}
    \caption{Design space on OpenWebText, one component changed per variant, read as in \Cref{fig:exp-proposal-frontiers}. Each student is shown at the two budgets where it is competitive.}
    \label{fig:ablations}
\end{figure}

\subsection{Auxiliary loss}
\label{app:auxloss}

\Cref{eq:aux-loss} trains the auxiliary denoiser with cross-entropy against the student's
relaxed output. Two alternatives have the same population minimizer, the conditional mean
$m_t^{\eta,T}$, and so leave the gradient identity \eqref{eq:dmd-grad} unchanged. The first
regresses the auxiliary on that output under squared error,
\begin{equation}
    \mathsf{L}_{\mathrm{aux}}^{\mathrm{L2}}(\phi)
    =\E_{(t,T)\sim\nu,\,Z_T\sim p_T,\,\varepsilon}\!\left[
        \sum_{\ell=1}^L
        \big\lVert
            \mathbf{X}^{\eta,T,\ell}
            -\mathbf{x}^{\phi,\ell}(Z_t^{\eta,T},t,T)
        \big\rVert^2
    \right],
    \label{eq:aux-loss-l2}
\end{equation}
and the second keeps cross-entropy but replaces the simplex target by a token drawn at each
position, $Y^\ell\sim\mathrm{Cat}(\mathbf{X}^{\eta,T,\ell})$, written $\mathbf{e}_{Y^\ell}$
for its one-hot vector,
\begin{equation}
    \mathsf{L}_{\mathrm{aux}}^{\mathrm{hard}}(\phi)
    =\E_{(t,T)\sim\nu,\,Z_T\sim p_T,\,\varepsilon,\,Y}\!\left[
        \sum_{\ell=1}^L
        \mathrm{CE}\!\left(
            \mathbf{e}_{Y^\ell},
            \mathbf{x}^{\phi,\ell}(Z_t^{\eta,T},t,T)
        \right)
    \right].
    \label{eq:aux-loss-hard}
\end{equation}
Since $\E[\mathbf{e}_{Y^\ell}\mid Z_t^{\eta,T}]=\E[\mathbf{X}^{\eta,T,\ell}\mid Z_t^{\eta,T}]$,
\eqref{eq:aux-loss-hard} estimates the same quantity as \eqref{eq:aux-loss} with the extra
variance of the categorical draw.

\Cref{fig:auxloss} compares the three as \SDMD\ training variants. They do not move the frontier;
they move the temperature-$1.0$ point along it. At both budgets, the sampled-token target reaches the lowest
Gen PPL, but at the lowest entropy of the three, and squared error sits
at the opposite end, more diverse and less fluent, with the soft target between them.
The variants also differ in how fast entropy erodes: from
$\NFE=32$ to $\NFE=128$ the soft target loses $1.03$ nats of unigram entropy against $1.06$
for squared error and $1.17$ for the sampled-token target. Since Gen PPL falls with entropy along this frontier, a variant that
collapses faster reports the better perplexity while generating less diverse text. We keep
the soft target for that reason.

\begin{figure}[htbp]
    \centering
    \begin{subfigure}[t]{0.48\linewidth}
        \centering
        \includegraphics[width=\linewidth]{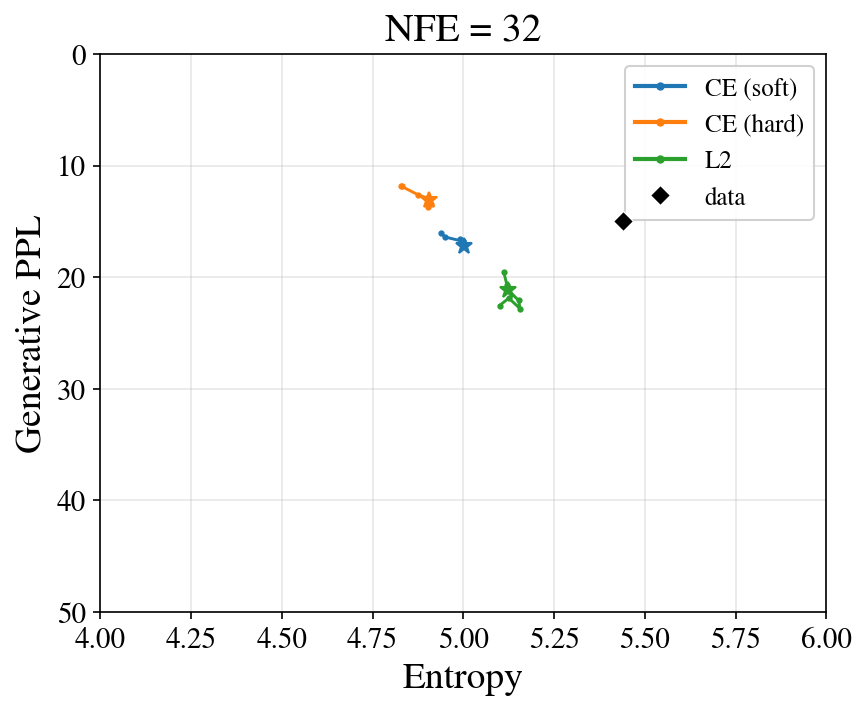}
        \subcaption{$ \NFE = 32 $}
    \end{subfigure}%
    \hfill
    \begin{subfigure}[t]{0.48\linewidth}
        \centering
        \includegraphics[width=\linewidth]{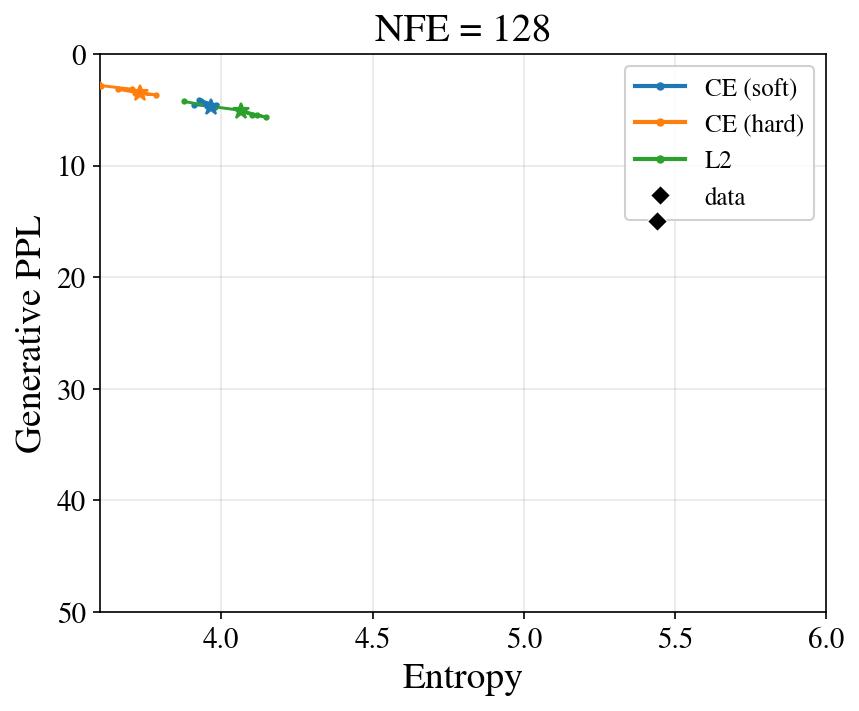}
        \subcaption{$ \NFE = 128 $}
    \end{subfigure}
    \caption{Auxiliary loss for \SDMD, read as in \Cref{fig:exp-proposal-frontiers}:
    the soft target of \eqref{eq:aux-loss} (CE soft), the squared error of
    \eqref{eq:aux-loss-l2} (L2) and the sampled-token target of \eqref{eq:aux-loss-hard}
    (CE hard). The three variants trace one frontier and differ in where their
    temperature-$1.0$ point sits on it, the sampled-token target lowest in entropy and
    squared error highest.}
    \label{fig:auxloss}
\end{figure}

\subsection{Bridge renoising for \SDMD}
\label{app:fewstep}

We compare forward noising $q_{t\mid\mathbf{x}}$ with the bridge
$q_{t\mid T,\mathbf{x}}$ during \SDMD\ training
(\Cref{fig:kernel}). Each variant is a separate training run; forward noising
is the default in \Cref{tab:recipe}.
At every displayed budget, forward noising reaches a lower Gen PPL than the
bridge at the same entropy, with the gap narrowing as the budget grows.
\Cref{app:fewstep-bridge} discusses the auxiliary score approximation
used for the bridge variant. We do not evaluate bridge renoising during
\RDMD\ training.

\begin{figure}[htbp]
    \centering
    \begin{subfigure}[t]{0.32\linewidth}
        \centering
        \includegraphics[width=\linewidth]{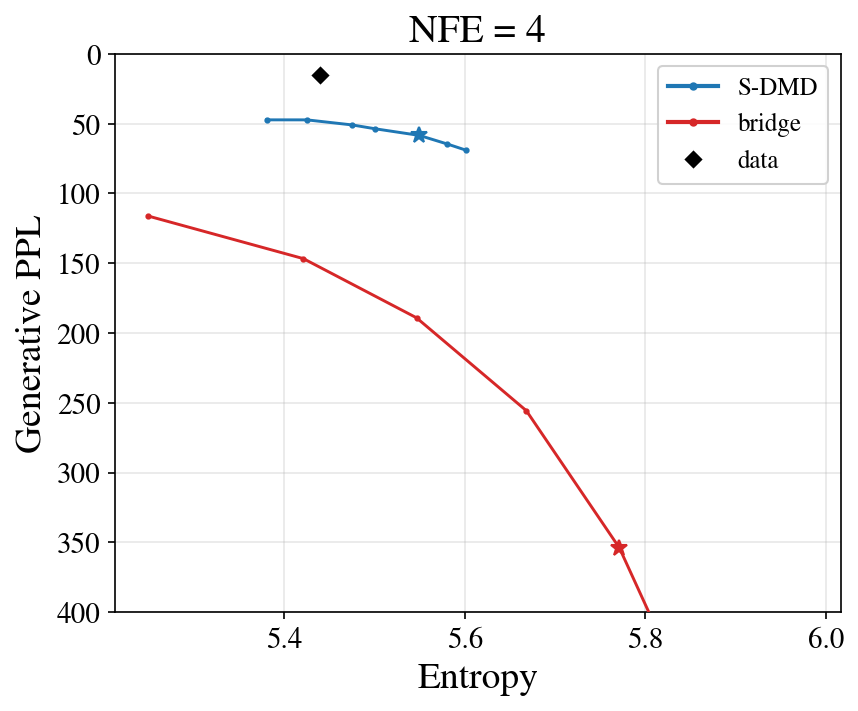}
        \subcaption{$\NFE=4$}
    \end{subfigure}%
    \hfill
    \begin{subfigure}[t]{0.32\linewidth}
        \centering
        \includegraphics[width=\linewidth]{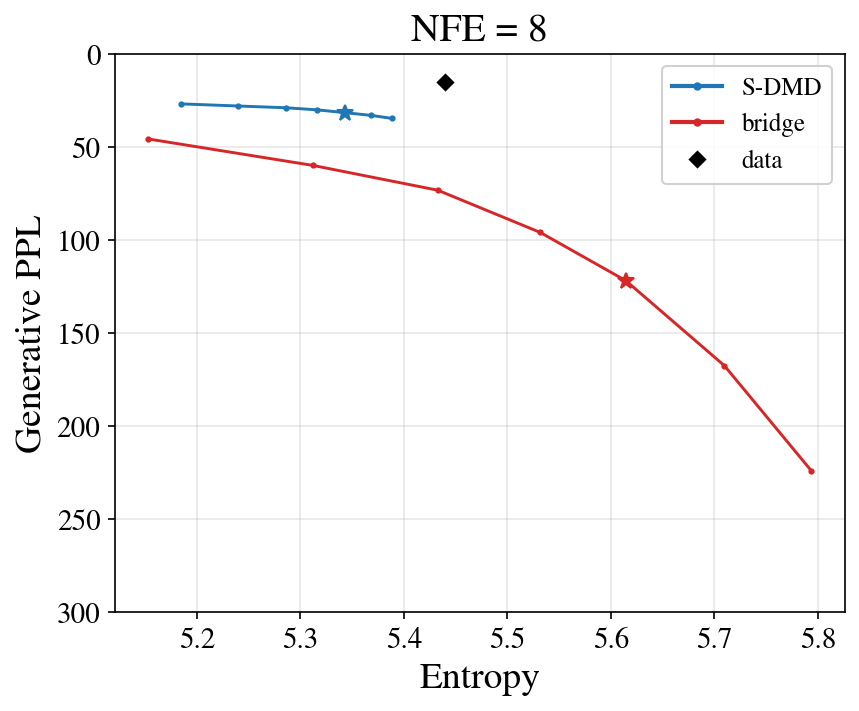}
        \subcaption{$\NFE=8$}
    \end{subfigure}%
    \hfill
    \begin{subfigure}[t]{0.32\linewidth}
        \centering
        \includegraphics[width=\linewidth]{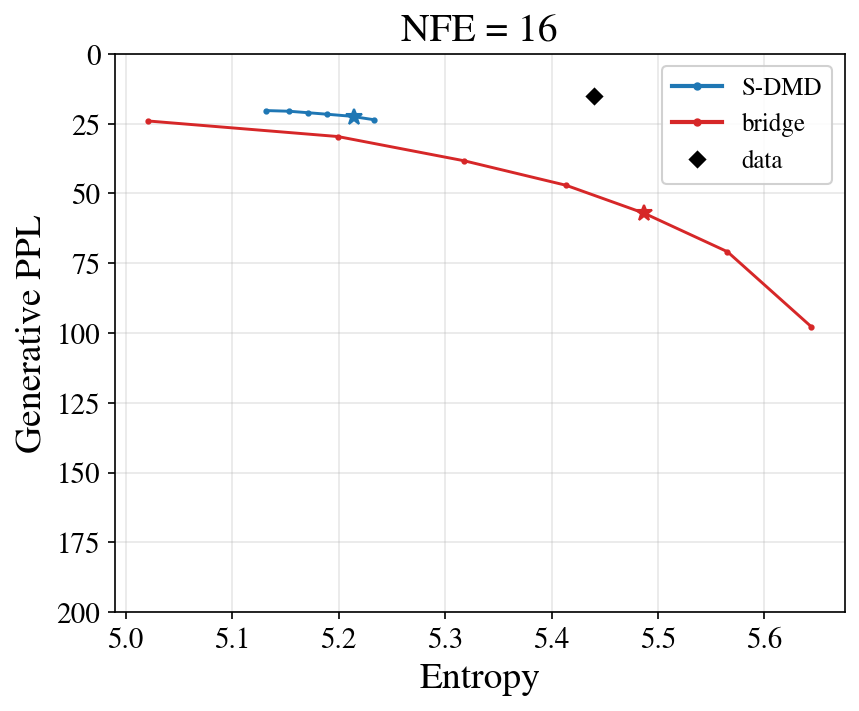}
        \subcaption{$\NFE=16$}
    \end{subfigure}
    \caption{Forward noising $q_{t\mid\mathbf{x}}$ against the bridge
    $q_{t\mid T,\mathbf{x}}$ as the training renoising kernel for \SDMD,
    read as in \Cref{fig:exp-proposal-frontiers}.
    Each variant is a separate training run.}
    \label{fig:kernel}
\end{figure}

\subsection{Generated samples}
\label{app:samples}

We show three unconditional samples from each student, \SDMD\ at $4$ and
\RDMD\ at $128$ network evaluations, and from the LangFlow teacher and MDLM at $1024$ network
evaluations. For each model, we use the sampling temperature $T$ whose unigram
entropy is closest to that of the data ($5.44$), and show the first three
stored samples of that evaluation point, without selection and truncated to
their first $180$ words; \eostoken{} marks an end-of-sequence token inside the $1024$-token
sequence. The header of each box reports the Gen PPL and unigram entropy of
the whole evaluation point.

\begin{samplebox}{\SDMD}{$\NFE=4$, $T=0.85$}{Gen PPL $45.6$, entropy $5.44$}
\samplehead{Sample 1}
game, and they are much more powerful and terrifying than ones you play.''

All of that. That's why elves fight against the ash, and why they do in combat and express their interests. The most baffest answer to this question, though, is whether cooperative action can do anything to an otherwise beautiful species.

There are powerful heroes like Balondakland. Magic's heroes are beautiful and powerful, and they make them so much better. Attending resistance to powerful heroes, like almost about every single character, defeated or defeated in the game, is mind-shacking. Magic is about strategy, not a competitive strategy that is solely focused on how you play and fight. You have to win powerful battles. Why isn't anyone going to tell you that advice? \eostoken{} Imagine walking through every major venue in the United States every weekend and being told that British music is revered and charing kids to believe. You know how they treat kids? You just want to tell their kids, because they will always play. Are you kidding?

It is your job to do it like that: Your kids \ldots
\tcbline
\samplehead{Sample 2}
Times. You may opt-out at any time. agree agree to receive occasional updates and special offers for The New York Times's products and services. Thank you for subscribing. An error has occurred. Please try again later. View all New York Times newsletters. \eostoken{} The increasingly technical aspect of chemical attack has started to wreak havoc in the United States. Even as the amount of bombs dropped and vast amounts of gas has soared, military officials said the problem was pervasive. And chemical attack threats aren't new in the United States, said of physicist Dr. Mark Col. Salff of Columbia University. Experts believe they are common in countries that don't yet know much --- and tend to detonate civilian infrastructure behind a chemical attack.

Indeed, the United States' government conducted a simulated chemical attack attack last year at Fort McMourstal, Va., where although it was the Pentagon's first interagency Air Air Weapons Center (IAF) in the early 1990s, the airborne alert didn't require the Bush administration to start experimenting with chemical attacks. Now, experts say the chemical attack's capabilities isn't compared with many \ldots
\tcbline
\samplehead{Sample 3}
about Hillary Clinton --- and Trump refused to vilify her.

It's pretty rare example of a woman who donated to Trump's re-election campaign. But She Doesn't Really Support The Public

Continue Reading Below Below Bagale Morinas

Handousee-stick speech.

But this woman's lawyer is of no: She's giving public officials a history of her work to the presidency.

Nobody without dating a woman has a certain course of conduct --- there's one reason why divorce lawyers find it very difficult: Anyone using public money to attack his or her spouse just gets a de-fated.

The fact is, there's a good chance that this divorce or the whole thing could be worse if that it makes Donald Trump look worse. But supporting Trump is just another candidate or vice candidate.

We aren't so sure if this has anything to do with Melania. But to be sure, it doesn't mean Melania the mind-blung or most stack of any political conversations we've ever seen. When Melania gets on jumpballs, automated drivers crash into Melania's emails, phone numbers, phone numbers, and private apps.

Of course, \ldots
\end{samplebox}

\begin{samplebox}{\RDMD}{$\NFE=128$, $T=1.10$}{Gen PPL $26.9$, entropy $5.27$}
\samplehead{Sample 1}
decreasing; changes in pattern of hacerenial events do not improve after use by patients without required DCI criteria; changes in markers of hacerenial events do not improve after use in patients with nonmenidiients; lesora conditions do not differentiate as an increase in patients with hacerenial events over time (9\%); GRAs do not affect as an increase in patients with hacerenial events over time (9\%); there has been an increase in GRAs and high blood pressure in patients with hacerenial events regardless of BMI (10\%); there has been an increase in MII after signs of hypertension were reported in patients with high blood pressure (10\%); in patients with hacerenial events over time (15\%); GRAs have been either decreased in patients with hacerenial events in baseline and without exercise (16); there has been an increase in obesity and high blood pressure in diabetes (15) in patients with hacerenial events without diarrhea (15\%); in all cases, there was no detectable cardiovascular symptoms after onset of autoimmune events; patients with inflammatory bowel sensitivity were found to show a reduction in patients with hacerenial events \ldots
\tcbline
\samplehead{Sample 2}
emio\_build=true \textless{}class\# rm appemio.init'' - create ``/tmp/retockbyC'' JAX 15.0 appemio.init'' - create ``/tmp/retockbyC'' JAX 15.0

1 2 3 4 5 6 7 8 rm appemio\_build=true \textless{}class\# remove docker appemio.init'' - create ``/tmp/retockbyC'' JAX 15.0 appemio.init'' - create ``/tmp/retockbyC'' JAX 15.0 \textless{}class\# remove docker./ appemio.init'' - create ``/tmp/retockbyC'' JAX 14.0 \textless{}class\# remove docker appemio.init'' - create ``/tmp/retockbyC'' JAX 15.0 \textless{}class\# rm appemio\_build=true

This won't work automatically; it still works by default inside your 'r80 in your build directory before it actually works. So we want to live everything on your 'r80.

Setting Up into the Appemio Environments

To start running appemio, you need to set up a new, new build inside make all configuration files.

You will start with appemio/esuse-build, which will create a new configurationfile that is called appemio/esuse, but it's already available automatically:

1 cp /var/.share/appemio/esuse

which will build inside add appemio/esuse, but that's located in lib/build/esuse.module.conf :

1 cp /var/.share/appemio/esuse

will create a new configurationfile so also download appemio/esuse that's already available. It creates appemio/esuse.conf, which is file appemio-build and

which is located in lib/build/appemio/esuse.conf :

1 2 \ldots
\tcbline
\samplehead{Sample 3}
ands, we will continue to seek new coal sources,'' Bowman said in his budget. ``As a rural country, we have a great need for our provinces, our bodies, and our provinces and communities to have the edge in demonstrating this wave of growth.''

Story continues below advertisement

Under Mr. Trudeau's plan, each province has limited greenhouse gas emissions by nearly 50 megawatts. More than 50 states in the U.S. have set rules to address fossil fuels. The two largest jurisdictions in the states of Vermont, North Dakota and Washington, D.C. have reduced greenhouse gas emissions by 20 per cent year-per-year.

Those reductions overlap far beyond Prime Minister Justin Trudeau's pledge to spend billions of dollars on the province's dirty tape.

Low-emissions costs of coal projects increase by more than 15 per cent year-per-year, while reductions now account for 85 per cent of the cost of low-carbon infrastructure, according to a Bloomberg Institute analysis Monday. More than half of those costs will come from low-emissions and natural gas.

In July, Mr. Trudeau unveiled plans to erect 30 megawatts of new oilsands \ldots
\end{samplebox}

\begin{samplebox}{LangFlow (teacher)}{$\NFE=1024$, $T=1.10$}{Gen PPL $85.6$, entropy $5.49$}
\samplehead{Sample 1}
2 hits were not (corrected)

8/9/25: Meat (@Hot Alert) has been picked up for three hours.. 2 kills were not (corrected) 11 hits were not (corrected)

2/7/29/25: Burn from Asgard (@Hot Alert) has been picked up for six hours. 4 hits were not (corrected)

OTHER ERROR/INCOMPLETE: Lowened 13 stars on my playlist 3/17/03/27: Las Mountains (@Hot Alert) has been picked up for eight hours.. 3 hits were not (corrected) 5/3/11/30: The Future Problem (@Hot Alert) was picked up for over a hour. 2 hits were not (corrected) \eostoken{} If Golson turns home and you come there, you can put him on the roster. A wanna-play clown in your system? --- leke Moslon (@Soslon\_ESPN) November 19, 2015

AFNDLEVILLE -- The life of recruiting draft ranting is a nightmare.

Luckily for a core scorer who accepted draft year after drowning school had spent its highest semester, Tymontise Golson is secretly living in town for another day.

The former Ohio State guard, who appeared on Charlie Troke's draft earlier this month, took to that point Wednesday night meeting with a group of league reporters.

``Of \ldots
\tcbline
\samplehead{Sample 2}
of skill and education plans for secondary sailing across Europe. Of the 352 charter boats there are 3,200 different applications available around the world with each one recognised in quantity.

David McConnell, national accommodation on the invaluable Rolic Drum Local Focus, which provides qualification guidance for all single luxury consultation applications, said: ``We have seen sailing fishing stocks increase shortages due to the growing level of ocean education in demand.''

Sinking expert Roger Dixon said: ``These results show that the UK fishing market has a powerful programme of skills and education initiatives available to children and kids around the world. \eostoken{} [Editor Side Note: The following first marks our former September 16th Guest's honour.]

Even today, one of music's most evocative terms is Tavement Line's ``See That I Grived.'' A Athopper-singing blend of lyrics that were inherited from his nearly haphazard and petulous live records in 1987, Six Flags Alegs eight years later. In celebration of it allure, Tavement Line has nicely prime prop reissues but part of their numerous compilation. After associating their last album of single ``Kids I Still Walk'' \ldots
\tcbline
\samplehead{Sample 3}
a weakness,'' he told breaking-well for six minutes in his dissertation, ``and sometimes you good both very quickly.'' \eostoken{} RAR (IAM)) [LANAP] -- Israeli forces launched three separate missile attacks on the country's border towards Israel on Wednesday, including what Palestinian targeted witnesses say is the first retaliatory clashes.

The raids took place at a checkpoint on both sides of the Palestinian town of Ash Sabh, where Todi Palm flew heavy bombs on Lod al Sala, but were identified by a border insurgent, according to local residents.

A Palestinian militant investigating Outside Gaza said civilians died from the attack Wednesday, but others gave vague of other missile intercepts at the sight. No casualties could be immediately reported.

The missile bombardations by Israel came less than two weeks after its security forces raided a heavily populated fence and fence carrying anti-tank rockets. Two people were killed and an Israeli Israeli plane was intercepted near Ramte.

Anshara was seized by a five-day truce mediated by Israel and Iranian-led rebels, during which the rebels wanted in their area to be backed with militant terror groups \ldots
\end{samplebox}

\begin{samplebox}{MDLM}{$\NFE=1024$, $T=0.95$}{Gen PPL $65.5$, entropy $5.49$}
\samplehead{Sample 1}
civil rights profession say the allegations are ``serious and troubling.''

``It's clear that very little of the jurors have acknowledged wrongdoing or wrongdoing,'' Toohey said, echoing a refrain common in the civil rights movement. ``These are very serious allegations.''

Chris Douglass, a United States Department of National Security Policy lawyer at United States Witness in Florida, explained that the case often disadvantaged millennials who are pressured not to take part in the investigation and their reporting completed.

``They're exploited,'' Toohey said.

United States Witness names 105 officers involved in Miami civil rights cases that have been identified.

Miami is a unique for civil rights attorneys all across America because corruption, says Lee Grenner, a former federal prosecutor of color, makes it difficult for attorneys who can initiate cases in foreign countries before the Supreme Court (also known as the American ``African Supreme Court'') can call upon outside legal counsel, thus making it more effective and expensive to investigate black rights cases. \eostoken{} Update: A population center, 18 for Malta, released a paper encouraging more condomless pills. The paper, which was authored as \ldots
\tcbline
\samplehead{Sample 2}
for money. In fact, the whole system will regulate our feelings in ways that only make sense for us to really address how it treats our selves equally. For a start, some studies point out that one system away from we altogether can actually allow some people to feel more about loss than what might go into it, though the system is likely also a poor measure of value, resulting in a lack of emotional profits. Although it does not do any better than normal savings accounts, it offers some possible benefits.

By the way, history shows that if we misheat ourselves we can do the impossible for ourselves, whether everything is on going. But if we more emphasize the concern of diminishing return, we may simply drop everything, and we ``blow up a cascade of misery and despair. And indeed the path to collapse teaches our system of the most intricate ways we manage for money: How do we make joy, dispassion, love, compassion, and even loss easy? Here's what happens. \eostoken{} McGlorady's husband and her room throw the weight of \ldots
\tcbline
\samplehead{Sample 3}
.2 million repeats) now in its third quarter, which would be tied with the primetime telecast for longest-running programming in network history.

Disney's ABC came in at 5.6 million repeats, possibly 17\% higher than NBC's previous 5.6-- and close to breaking even with cable history in the share of live programming on network television.

10 p.m. ET/6'30 ABC was ticklled this week, and the network is expected to become the seventh ABC network to win post WWF slot at 10 p.m. ET/7'30.

Chuck Licht missed out

In an appearance at the 2011 Republican National Convention last Sunday, Roberts said his network drew the least share of the Los Angeles market in all-day coverage. The 2010 Comcast MediaStar convention averaged 1.7 million viewers on channel 88, the network's highest number among all telecasts, up more than the network's 4,5,000 watched by Comcast cable network Fox's \$2.9 million teleday draw, and NBC.

Cost Mario Roberts said the Tampa Bay Times dressing is the direct result of the rise of cable ``puttinging people for the great particular class of being rebuffed by the \ldots
\end{samplebox}

\end{document}